\newtheorem{theorem}{Theorem}
\newtheorem{corollary}{Corollary}
\newtheorem{definition}{Definition}
\newtheorem{conjecture}{Conjecture}
\newtheorem{example}{Example}
\newtheorem{lemma}{Lemma}
\newtheorem{claim}{Claim}
\newtheorem{remark}{Remark}
\newtheorem{fact}{Fact}
\algnewcommand\CommentLine[1]{
     \hspace{2pt}$\triangleright$ \hspace{2pt}\text{#1}
     }
\newcommand{\eps}{\epsilon}
\newcommand{\norm}[1]{\| #1 \|}
\newcommand{\ip}[2]{\langle #1 , #2 \rangle}
\newcommand{\VS}[1]{{\color{red} \textbf{VS}: #1}}
\newcommand{\defeq}{\overset{\mathrm{def}}=}
\newcommand{\code}{\mathbf{c}}
\newif\ifcomments
\newcommand{\aga}[1]{{\color{blue}#1}} % comment color for Atish
\newcommand{\aga}[1]{}
\DeclarePairedDelimiterX{\infdivx}[2]{(}{)}{%
  #1\;\delimsize\|\;#2%
}
\newcounter{this-list}
\newenvironment{tightenumerate}{
\vspace{2pt}
\begin{list}{\arabic{this-list}.}{\usecounter{this-list}
                                 \setcounter{this-list}{0}
  \setlength{\itemsep}{0pt}%
  \setlength{\parsep}{0pt}%
  \setlength{\topsep}{0pt}%
    \setlength{\partopsep}{0pt}%
  \setlength{\leftmargin}{3.5ex}%
  \setlength{\labelwidth}{4.5ex}%
  \setlength{\labelsep}{1ex}%
}} {\end{list}\vspace{2pt}}
\title{One Network Fits All? Modular versus Monolithic Task Formulations in Neural Networks}
\author{
Atish Agarwala \& Abhimanyu Das\\
Google Research\\
\{thetish,abhidas\}@google.com

\And Brendan Juba\\
Washington U.\ St.\ Louis\thanks{Work performed in part while visiting Google.}\\
bjuba@wustl.edu

\And Rina Panigrahy\\
Google Research\\
rinap@google.com

\And Vatsal Sharan\\
MIT\thanks{Work performed in part while affiliated with Stanford, and in part while interning at Google.}\\
vsharan@mit.edu

\And Xin Wang \& Qiuyi Zhang\\
Google Research\\
\{wanxin,qiuyiz\}@google.com
}
\begin{document}

\maketitle
\vspace{-10pt}
\begin{abstract}

 Can deep learning solve multiple tasks simultaneously, even when they are unrelated and very different? We investigate how the representations of the underlying tasks affect the ability of a single neural network to learn them jointly. We present theoretical and empirical findings that a single neural network is capable of simultaneously learning multiple tasks from a combined data set, for a variety of methods for representing tasks---for example, when the distinct tasks are encoded by well-separated clusters or decision trees over certain task-code attributes. More concretely, we present a novel analysis that shows that families of simple programming-like constructs for the codes encoding the tasks are learnable by two-layer neural networks with standard training. We study more generally how the complexity of learning such combined tasks grows with the complexity of the task codes; we find that combining many tasks may incur a sample complexity penalty, even though the individual tasks are easy to learn. We provide empirical support for the usefulness of the learning bounds by training networks on clusters, decision trees, and SQL-style aggregation.

\end{abstract}
%\vspace{-8pt}
\section{Introduction}
Standard practice in machine learning has long been to only address carefully circumscribed, often very related tasks. For example, we might train a single
classifier to label an image as containing objects from a certain predefined
set, or to label the words of a sentence with their semantic roles. % of those words in that sentence. 
Indeed, when working with relatively simple classes of functions
like linear classifiers, it would be unreasonable to expect to train a
classifier that handles more than such a carefully scoped task (or related tasks in standard multitask learning). As techniques
for learning with relatively rich classes such as neural networks have been
developed, it is natural to ask whether or not such scoping of tasks is
inherently necessary. Indeed, many recent works (see Section~\ref{related-work-sec}) have proposed eschewing this careful scoping of tasks, and instead training a single, ``monolithic''
function spanning many tasks.

Large, deep neural networks can, in
principle, represent multiple classifiers in such a monolithic
learned function \citep{hornik1991approximation}, giving rise to the field of multitask learning.
%\VS{So in the abstract "representation" was the input representation, here it is the learned representation, we are sort of going back and forth later as well, could be confusing}.
This combined function might be 
learned by combining all of the training data for all of the tasks into one 
large batch--see Section~\ref{related-work-sec} for some examples. Taken to an extreme, we could consider seeking to learn a 
{\em universal} circuit---that is, a circuit that interprets arbitrary programs 
in a programming language which can encode various tasks.
But, the ability to \emph{represent} such a monolithic combined function does not 
necessarily entail that such a function can be  efficiently \emph{learned} by existing 
methods. Cryptographic hardness theorems \citep{kearns1994cryptographic}
establish that this is not possible in general by {\em any} method, let alone the
specific training methods used in practice. Nevertheless, we still can ask
how rich a family of tasks can be learned by these standard methods. 
In this work, we study the extent to which backpropagation with
stochastic gradient descent (SGD) can learn such monolithic 
functions on diverse, unrelated tasks. There might still be some inherent benefit to an architecture in which tasks are partitioned into sub-tasks of such small scope, and the training data is correspondingly partitioned prior to learning. For example, in the early work on multitask learning, \citet{caruana1997multitask} observed that training a network to solve unrelated tasks simultaneously seemed to harm the overall performance. Similarly, the seminal work of \citet{jacobs1991adaptive} begins by stating that \emph{``If backpropagation is used to train a single, multilayer network to perform different subtasks on different occasions, there will generally be strong interference effects that lead to slow learning and poor generalization''}.  We therefore ask if, for an unfortunate choice of tasks in our model, learning by standard methods might be fundamentally impaired.

As a point of reference from neuroscience, the classical view is that distinct tasks are handled in
the brain by distinct patches of
the cortex. While it is a subject of debate whether modularity 
exists for higher level tasks
\citep{samuels2006mind}, it is accepted that there are dedicated 
modules for low-level tasks such as vision and audio processing.
Thus, it seems that
the brain produces a {\em modular} architecture, in which different tasks are
handled by different regions of the cortex. 
Conceivably, this
division into task-specific regions might be driven by fundamental
considerations of learnability: A single, monolithic neural circuit
might simply be too difficult to learn because the different tasks might interfere with one another. Others have taken neural    
networks trained by backpropagation as a model of learning in the 
cortex \citep{musslick2017multitasking}; to the extent that this is reasonable, our work has some bearing on these questions as well. 

\iffalse
OLD VERSION----
As a point of reference, we recall that a classical view of the brain, going
back to Brodmann, asserts that distinct tasks are handled by distinct patches of
the cortex. While it is the subject of significant debate whether modularity 
exists for higher level reasoning and cognitive tasks
\citep{samuels2006mind}, it is more or less accepted that there are dedicated 
modules for low-level tasks such as vision and audio processing.
Thus, in contrast to these monolithic representations, it seems that
the brain produces {\em modular} representations, in which different tasks are
handled by different regions of the cortex. 
One natural hypothesis is that such
localization is simply driven by anatomy---i.e., by the proximity of cells in
different regions of the brain to the nerve endings carrying different sensory
input or motor commands. But an alternative hypothesis might be that this
division into somewhat task-specific regions is driven by fundamental
considerations of learnability. Conceivably, a single, monolithic neural circuit
could simply be too difficult to learn because the learning process for
different tasks might interfere with one another. To the extent that neural    
networks trained by backpropagation are a reasonable model of learning in the 
cortex, our work has some bearing on these issues as well. 
END OLD VERSION
\fi

\subsection{Our results}

\begin{figure}
    \centering
    \includegraphics[width=\linewidth]{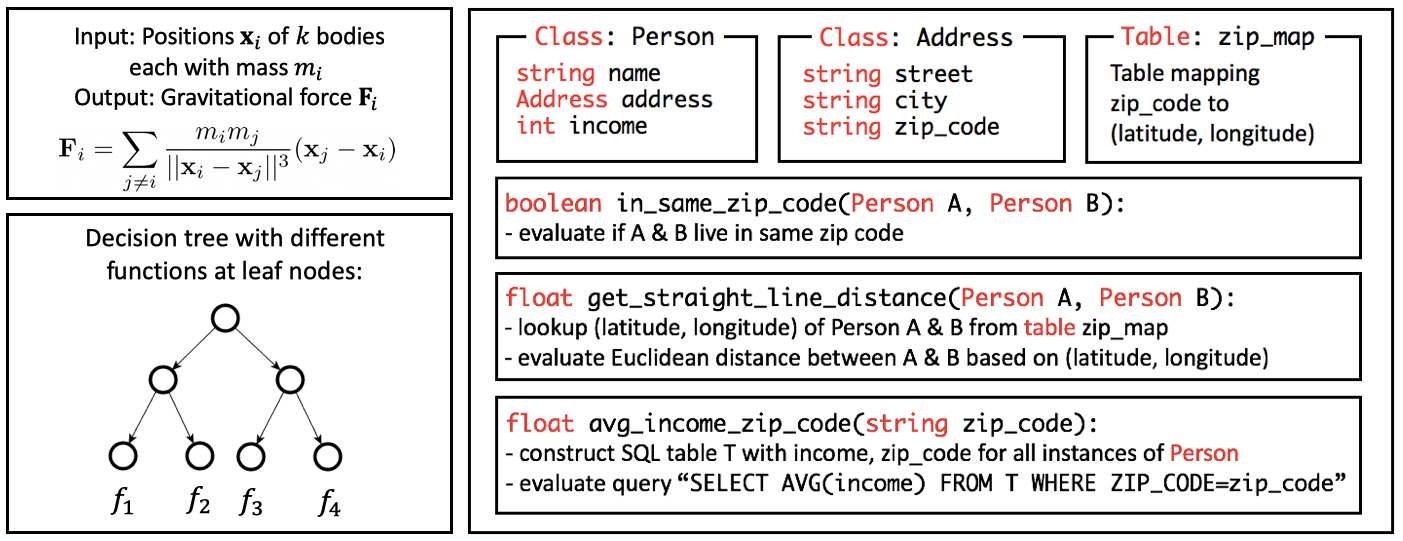}
    \caption{Our framework shows that it is possible to learn analytic functions such as the gravitational force law, decision trees with different functions at the leaf nodes, and programming constructs such as those on the right, all using a non-modular monolithic architecture.}
    %\aga{I would reverse the ordering - analytic, then decision, then programs. Also, currently reads a bit funny-can edit once we decide ordering.} \textcolor{red}{Abhi: This figure doesnt seem to be referenced in the text} }
    \label{fig:learn}
\end{figure}

We find, perhaps surprisingly, that combining multiple tasks into one cannot fundamentally impair learning with standard training methods. We demonstrate this for a broad family of methods
for combining individual tasks into a single monolithic task. For example, inputs for each individual tasks may come from a disjoint region (for example, a disjoint ball) in a common  input space, and each individual task could then involve applying some arbitrary simple function (e.g., a separate linear classifier for each region).
Alternately there may be an explicit {\em ``task code''} attribute (e.g., a one-hot code), together with the usual input attributes and output label(s), where 
examples with the same task code are examples for the same learning task. Complementing our results that combining multiple tasks does not impair learning, we also find that some task coding schemes do incur a sample complexity penalty. 
A vast variety of task coding schemes may be used. As a concrete example, when the data points for each task are well-separated into
distinct clusters, and the tasks are linear classification tasks, we show that a two-layer architecture trained with SGD %\textcolor{red}{Abhi: maybe "there exists a two-layer architecture" instead of "such a"?}
successfully learns the
%\aga{a representation? Or is the most efficient one?}
combined, monolithic function; the required amount of data 
simply scales as the sum of the amount required to learn each task individually (Theorem \ref{thm:informa_clusterfunctions}).
Meanwhile, if the tasks are determined by a balanced decision tree of height $h$
on $d$ code attributes (as in Fig.~\ref{fig:learn}, left), we find that the training time and amount of data needed
scales as $\sim d^{h}$---quasipolynomial in the $2^h$ leaves (distinct tasks)
when $d$ is of similar size to $h$, and thus when the coding is efficient (Theorem \ref{thm:decisiontree_informal}). We also prove  a corresponding lower bound, which shows that this bound
is in fact asymptotically tight (Theorem \ref{thm:decisiontree_informal}). More generally, for task 
codings based on decision trees using linear splits with a margin of at least 
$\gamma$ (when the data has unit $\ell_2$ norm), the  training time
%\aga{for training?} 
and required data are 
asymptotically bounded by $\sim e^{O(h/\gamma^2)}$, which for constant $\gamma$ is polynomial in the $2^h$ functions (Theorem \ref{thm:decisiontreemargin_informal}).
%\aga{is that bad or good?}. 

We generalize from these cluster-based and decision-tree based task codings to more complex codes that are actually simple programs. For instance, we show
that SQL-style aggregation queries over a fixed database, written as a functions of 
the parameters of the query, can also be learned this way. More generally, simple programming constructs (such as in Fig.~\ref{fig:learn}, right), built by operations such as compositions, aggregation, concatenation, and branching  on a small
number of such
learnable functions, are also learnable (Theorem \ref{thm:program_informal}). In general, we can learn a low-depth formula 
(circuit with fan-out 1) in which each gate is not merely a switch (as in a decision tree), but can be any analytic function
on the inputs, 
including arithmetic operations.
%\textcolor{brown}{Rich: I think this paragraph should be above? In the first paragraph?}
Again, our key technical contribution is that we show that all of these functions are efficiently learned by SGD.
This is non-trival since, although universal approximation
theorems show that such functions can be expressed by (sufficiently wide) two-layer neural networks, under standard assumptions
some expressible functions are 
%\aga{believed or proven?} Proven assuming the hardness of SVP. The hardness of SVP is a belief, but reasonably strongly held. I tried to revise to clarify --BJ
not learnable
\cite{klivans2009cryptographic}. We supplement the theoretical bounds with experiments on clusters, decision trees, and SQL-style aggregation showing that
such functions are indeed learned in practice.

We note that the learning of such combined functions could have been engineered by hand:
%\aga{representations+decoders, right?}: 
for example, there exist 
efficient algorithms for learning clusterings or such decision trees, and it
is easy to learn the linear classifiers given the partitioned 
data. Likewise, these classes of functions are all known to be learnable by other 
methods, given an appropriate transformation of the input features. The key point is 
that {\em the two-layer neural network can jointly learn
the task coding scheme and the task-specific functions without special 
engineering of the architecture}. That is, it is unnecessary to engineer a way 
of partitioning of the data into separate tasks prior to learning.
Relatedly,  the time and sample requirements of learning multiple tasks on a single network in general is insufficient to 
explain the modularity observed in biological neural networks if their learning dynamics are similar to
SGD ---i.e., we cannot explain the presence of modularity from such 
general considerations.

All our theoretical results are based upon a fundamental theorem that shows that analytic functions can be efficiently 
learnt 
by wide (but finite-width) two-layer neural networks with standard activation functions (such
as ReLU), using SGD from a random initialization. Specifically, we derive novel generalization 
bounds 
for multivariate analytic functions (Theorems~\ref{thm:analytic-informal} and~\ref{thm:multivar}) by relating wide 
networks to
kernel learning with a specific network-induced kernel 
\citep{jacot_neural_2018,du_gradient_2019,allen-zhu_learning_2019,arora_finegrained_2019,lee_wide_2019}, known as 
the  \emph{neural tangent kernel} (NTK) \citep{jacot_neural_2018}. We further develop a \emph{calculus of bounds}
showing that the sum, product, ratio, and composition of analytic functions is also learnable, with bounds constructed 
using 
the familiar product and chain rules of
univariate calculus (Corollaries \ref{cor:prod_rule}, 
\ref{cor:chain_rule}).  These above learnability results may be of independent interest; for 
example, they can be used to show that natural physical laws like the gravitational force equations (shown in 
Fig.~\ref{fig:learn}) can be efficiently learnt 
by neural networks (Section~\ref{sec:learning_gravity}). Furthermore, our bounds imply that the NTK kernel for ReLU 
activation has theoretical learning guarantees that are superior to the Gaussian kernel (Section~\ref{sec:analytic}), 
which we also demonstrate empirically  with experiments on learning the gravitational force law 
(Section~\ref{sec:gravity_experiments}). 
\vspace{-7pt}

\subsection{Related work}\label{related-work-sec}
%\paragraph{Empirical demonstrations of many tasks on one network.}
Most related to our work are a number of works in application areas that have 
sought to learn a single network that can perform many different tasks.
In natural language processing, \citet{tsai2019small} show that a single model 
can solve machine translation across more than 50 languages. Many other works in NLP similarly seek to use one model for multiple languages, or even multiple 
tasks \citep{johnson2017google,aharoni2019massively,bapna2019massively, devlin2018bert}. Monolithic models have also been successfully trained for tasks in very different domains, such as speech and language \citep{kaiser2017one}. Finally,
there is also work on training extremely large neural networks which have the capacity to learn multiple tasks \citep{shazeer2017outrageously,raffel_exploring_2019}.
These works provide empirical clues that suggest that a single network can successfully be trained to perform a wide variety of tasks. But, they do not provide a systematic theoretical investigation of the extent of this ability as we do here.

%\paragraph{Multitask learning.}
\citet{caruana1997multitask} proposed {\em multitask learning} in which a single
network is trained to solve multiple tasks on the same input simultaneously, as
a vector of outputs. He observed that average generalization error for the multiple tasks may be much better than when the tasks are trained separately,
and this observation initiated an active area of machine learning research~\citep{zhang2017survey}. Multitask learning is obviously related to our monolithic
architectures. The difference is that whereas in multitask learning all of the 
tasks are computed simultaneously and output on separate gates, here all of the
tasks share a common set of outputs, and the task code inputs switch between the various tasks. Furthermore, contrary to the main focus of multitask learning, we are primarily interested in the extent to which different tasks may interfere, rather than how much similar ones may benefit.

%\paragraph{Cognitive models of multitasking.}
Our work is also related to studies of neural models of
multitasking in cognitive science. In particular, 
\citet{musslick2017multitasking} consider a similar two-layer architecture in
which there is a set of task code attributes. But, as in multitask 
learning, they are interested in how many of these tasks can be performed 
simultaneously, on distinct outputs. They analyze the tradeoff between improved sample complexity and interference of the tasks with a handcrafted ``gating'' 
scheme, in which the parts of activity are zeroed out depending on the input (as opposed to the usual nonlinearities); in this model, they find out that the
speedup from multitask learning comes at the penalty of limiting the number of
tasks that can be correctly computed as the similarity of inputs varies. Thus,
in contrast to our model where the single model is computing distinct tasks
sequentially, they do find that the distinct tasks can interfere with each other when we seek to solve them simultaneously.

\section{Technical Overview}
\label{sec:techinical_overview}
We now give a more detailed overview of our theoretical techniques and results, with informal statements of our main theorems. For full formal statements and proofs, please see the Appendix.

\subsection{Learning Analytic Functions} \label{sec:learning_analytic}

Our technical starting point is to generalize the analysis of ~\citet{arora2019fine}
in order to show that two-layer neural networks with standard activation, trained by SGD from random initialization, can learn
analytic functions on the unit sphere. We then obtain our results by demonstrating how our representations of interest can be captured by analytic functions with power series representations of appropriately bounded norms. Formal statements and proofs for this section appear in Appendix~\ref{sec:analytic}. Let $S^{d}$ denote the unit sphere in $d$ dimensions.

\begin{theorem}\label{thm:analytic-informal}(Informal)
Given an analytic function $g(y)$,
the function $g(\bbet\cdot\x)$, for fixed
$\bbet\in\mathbb{R}^{d}$ (with $\bnorm\defeq\|\bbet\|_{2}$) and inputs $\x\in S^{d}$ is
learnable to error $\eps$ with
$n = O((\bnorm \tilde{g}'(\bnorm)+\tilde{g}(0))^2/\eps^2)$ examples using a single-hidden-layer,
finite width neural network of width ${\rm poly}(n)$ trained with SGD,
with
\begin{equation}
\tilde{g}(y) = \sum_{k=0}^{\infty}|a_k|y^k
\end{equation}
where the $a_k$ are the power series coefficients of $g(y)$.
\end{theorem}

We will refer to $\tilde{g}'(1)$ as the norm of the function $g$---this captures the Rademacher complexity of learning $g$,
and hence the required sample complexity. We also show that the $\tilde{g}$ function in fact tightly captures the Rademacher
complexity of learning $g$, i.e. there is a lower bound on the Rademacher complexity based on the coefficients of $\tilde{g}$
for certain input distributions (see Corollary \ref{cor:lower}  in Section \ref{sec:lower} in the appendix).

We also note that we can prove a much more general version for multivariate analytic functions $g(\x)$, with
a modified norm function $\tilde{g}(y)$ constructed from the multivariate power series representation of
$g(\x)$ (Theorem \ref{thm:multivar} in Appendix~\ref{sec:analytic}).
The theorems can also be extended to develop a ``calculus of bounds'' which lets us compute new bounds
for functions created via combinations of learnable functions. In particular, we have a product rule and a chain rule:

\begin{corollary}[Product rule]
\label{cor:prod_rule}
Let $g(\x)$ and $h(\x)$ meet the conditions of Theorem 1. Then the product
$g(\x)h(\x)$ is efficiently learnable as well, with $O(M_{g\cdot h}/\epsilon^2)$ samples where
\begin{equation}
\sqrt{M_{g\cdot h}} = \tilde{g}'(1)\tilde{h}(1)+\tilde{g}(1)\tilde{h}'(1)+\tilde{g}(0)\tilde{h}(0).
\end{equation}
\end{corollary}
\begin{corollary}[Chain rule]
\label{cor:chain_rule}
Let $g(y)$ be an analytic function
and $h(\x)$ be efficiently learnable, with auxiliary functions $\tilde{g}(y)$ and
$\tilde{h}(y)$ respectively. Then the composition $g(h(\x))$ is efficiently learnable as well with $O(M_{g\circ h}/\epsilon^2)$ samples where
\begin{equation}
\sqrt{M_{g\circ h}} = \tilde{g}'(\tilde{h}(1))\tilde{h}'(1)+\tilde{g}(\tilde{h}(0)),
\end{equation}
provided that $\tilde{h}(0)$ and $\tilde{h}(1)$ are in the radius of convergence of $g$.
%provided that $\tilde{g}(\tilde{h}(0))$ and $\tilde{g}(\tilde{h}(1))$ converge
%(equivalently, if $\tilde{h}(0)$ and $\tilde{h}(1)$ are in the radius of convergence of $g$).
\end{corollary}

The calculus of bounds enables us to prove learning bounds on increasingly expressive functions, and we can prove results that may be of independent interest. As an example, we show in Appendix~\ref{sec:learning_gravity} that forces on $k$ bodies interacting via Newtonian gravitation, as shown in Figure~\ref{fig:learn}, can be learned to error
$\epsilon$ using only $k^{O(\ln(k/\epsilon))}$ examples (even though the function $1/x$ has a singularity at 0).

\subsection{Task coding via clusters}\label{sec:clusters}
%\subsubsection{Multiple functions can be learnt by packing them into different clusters/subspace}

%\aga{Needs smoother transition; attempted below. Overall feel like this section could be a bit cleaner; many partially repeated points.} {\bf Tried to revise the first sentence to connect back to the previous section more explicitly --BJ}\aga{edited once more} I'm happy with this! --BJ

%In order to learn in the multitask setting, the inputs must encode both the task and the input for that specific task.
Our analysis of learning analytic functions
allows us to prove that a single network with standard training can learn multiple tasks.
We formalize the problem of learning multiple tasks as follows.
In general, these networks take pairs of inputs $(\code,\x)$ where $\code$ is a {\em task code} and $\x$ is the input (vector) for the 
chosen task represented by $\code$. 
%\VS{We had bold-face for vectors earlier, and now switch to normal font.} Switching to bold-face for all vectors for consistency --BJ 
We assume both $\code$ and $\x$ have fixed dimensionality. These pairs are then encoded by the 
concatenation of the two vectors, which we denote by $\code;\x$. Given $k$ tasks, corresponding to 
evaluation of functions $f_1,\ldots,f_k$ respectively
on the input $\x$, the $i$th task has a corresponding code $\code^{(i)}$. Now, we wish to learn a function $g$ such that $g(\code^{(i)};\x)=f_i(\x)$ 
for examples of the form $(\code^{(i)};\x,f_i(\x))$. This $g$ is a ``monolithic'' function combining the $k$ tasks. More generally, there may be some noise (bounded within a small ball around $\code^{(i)}$)  in the task codes which would require learning the monolithic function $g(\code,x) = f_{j} (\x)$ where $j=\text{argmin}_i \|\code-\code^{(i)}\|_2$ . 
Alternately the task-codes are not given explicitly but are inferred by checking which ball-center $\code^{(i)}$ (unique per task)  is closest to  the input $\x$ (see Fig. \ref{fig:functions} (left) for an example). Note that these are all generalizations of a simple one-hot coding.

\begin{figure}
\centering
    \includegraphics[width=0.8\textwidth]{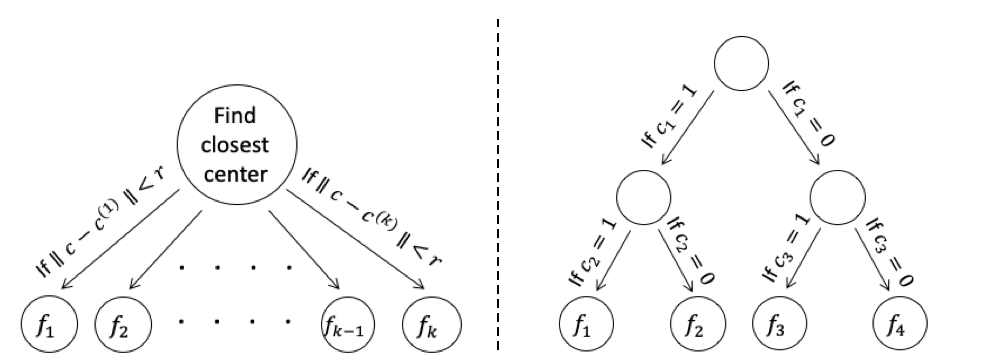}
    \caption{Some of the task codings which fit in our framework. On the left, we show a task coding via clusters. Here, $\code^{(i)}$ is the code for the $i$th cluster. On the right, we show a task coding based on low-depth decision trees. Here, $\code_{i}$ is the $i$th coordinate of the code $\code$ of the input datapoint.}
    \label{fig:functions}
\end{figure}

%The most basic type of task codings we consider check whether $\code$ lies near a specific point
%or subspace associated
%with a particular task. Thus, each of the $k$ tasks are encoded, respectively, by sufficiently 
%well-separated prototypical task code $\code^{(i)}$, and for any datapoint $(\code, \x)$ its task is given by the prototypical task codes $\code^{(i)}$ closest to  $\code$
%; that is, 
%$g(\code,x) = f_{argmin_i ||c-c^{i}||_2} (\x)$ . 
%or distinct subspaces $\A^{(i)}\code=0$.
%\aga{if not onerous, would like the $A_{i}$ to be bold as well. If too much work then don't worry.} Not hard at all. --BJ
%Note that this generalizes a simple one-hot coding. 

We assume throughout that the $f_i$ are analytic, with bounded-norm multinomial Taylor series representations. Our technical tool is the following Lemma
(proved in Appendix \ref{sec:analytic})
%\aga{where? Generally should have pointers to proofs} which shows that
which shows that the univariate step function $\mathbf{1}(x\ge 0)$ can be approximated with error $\eps$ and margin $\gamma$ using a low-degree polynomial which can be learnt using SGD.
\begin{lemma}\label{lem:indicator_poly_informal}
Given a scalar $x$, let  $$\Phi(x,\gamma,\eps)=(1/2)\left(1+ \erf\left({Cx\sqrt{\log(1/\eps)}}/{\gamma}\right) \right)$$ 
where $\erf$ is the Gauss error function and $C$ is a constant. Let ${\Phi'}(x,\gamma,\eps)$ be the function $\Phi(x,\gamma,\eps)$ with its Taylor series truncated at degree $O(\log(1/\eps)/\gamma)$. Then,
\[ 
{\Phi'}(x,\gamma,\eps) =\begin{cases} 
       O(\eps) & x\le -\gamma/2, \\
      1-O(\eps) & x \ge \gamma/2  . 
   \end{cases}
\]
Also, ${\Phi'}(x,\gamma,\eps)$ can be learnt using SGD with at most $e^{O((\log(1/\eps)/\gamma^2))}$ examples.
\end{lemma}
Using this lemma, we show
that indicator functions for detecting membership in a ball near a prototype $\code^{(i)}$ can also be sufficiently 
well approximated by functions with such a Taylor series representation.
Specifically,
we use the truncated representation of the $\erf$ 
function to indicate that $\norm{\code-\code^{(i)}}$ is small. As long as the centers are sufficiently well-separated, we can find a low-degree, low-norm function this way using Lemma \ref{lem:indicator_poly_informal}. For example, to check if $\code$ is within distance $r$ of center $\code^{(i)}$ we can use $\mathbf{1}(\norm{\code-\code^{(i)}}^2\le r^2)$, which can be approximated using the $\phi'$ function in Lemma \ref{lem:indicator_poly_informal}.
 Then given such 
approximate representations for the task indicators $I_1(\code),\ldots,I_k(\code)$, the function 
$g(\code;\x)=I_1(\code)f_1(\x)+\cdots+I_k(\code)f_k(\x)$ has norm linear in the complexities of the task functions, so that they are 
learnable by Theorem~\ref{thm:analytic-informal} (we scale to inputs to lie within the unit ball as required by Theorem~\ref{thm:analytic-informal}).
%\VS{which theorem to cite here?}. Executive decision: we use the informal statement in the previous section, so it's self-contained. --BJ
We state the result below, for the formal statement and proof see Appendix \ref{sec:clusterfunctions}.

%[[*****state for separated balls per task too where no explicit task codes are given:Rina****]]
%Packing multiple functions into one Network:
%To illustrate this let us look at k different analytical functions (that can be represented using multinomial Taylor series with bounded norm on the coefficients). We can embed these functions into into different clusters that are far apart. If each $f_i$ is from $R^d->R$, we can pack all these functions into a single function $g$ where to evaluate $f_i(x)$ we simply evaluate $g(c_i ; x)$  where $c_i$ is a vector and ";" represents the concatenation operator that represents the cluster center of the $i$th function. Alternately We can also pack them into clusters by doing $g(c_i + x/k)$ -- the scaling by $k$ keeps the clusters well separated. The above method can be used to pack $k$ functions $f_i$ where the complexity of $g$ is the sum of the complexities of the individual functions.

\begin{theorem}\label{thm:informa_clusterfunctions}(Informal)
Given $k$ analytic functions having Taylor series representations with norm at most $\poly(k/\eps)$ and degree at most $O(\log(k/\eps))$, a two-layer neural network trained with SGD can learn the following functions $g(\code;\x)$ on the unit sphere to accuracy $\eps$ with sample complexity  $poly(k/\eps)$ times the sum of the sample complexities for learning each of the individual functions:
\begin{compactitem}
%\item for $\Omega(1)$-far subspaces $\A^{(1)},\ldots,\A^{(k)}$, if $\A^{(i)}\code=0$ then $g(\code;\x)=f_i(\x)$
\item for $\Omega(1)$-separated codes $\code^{(1)},\ldots,\code^{(k)}$, if $\|\code-\code^{(i)}\|_2\leq O(1)$, then $g(\code;\x)=f_i(\x)$.
%\item for $\Omega(1)$-separated $\code^{(1)},\ldots,\code^{(k)}$, if $\code$ is in a small $\ell_2$ ball around $\code^{(i)}$, then $g(\code;\x)=f_i(\x)$.
\end{compactitem}
%By packing $k$ functions into different clusters/subspaces, a neural network with one hidden layer can learn all of the functions to accuracy $\eps$ with sample complexity  $poly(k/\eps)$ times the sum of the sample complexities for learning each of the individual functions.
\end{theorem}

%\emph{Note that the $\code^{(i)}$ need to be at least $10\log k$ coordinates so that subspaces of the cluster centers are far apart. In fact if we use exactly $\log k$ then it is provably hard, which is shown in section xxxx.}\VS{do we need these lines?} Doesn't seem like it at the moment. So I am cutting it. --BJ

\subsection{Task coding via low-depth decision trees}

Theorem~\ref{thm:informa_clusterfunctions} can be viewed as performing a single $k$-way branching choice of which task function to evaluate. Alternatively, we can consider a sequence of such choices, and obtain a {\em decision tree} in which the leaves indicate which task function is to be applied to the input. We first consider the simple case of a decision tree when $\code$ is a $\{\pm 1\}$-valued vector. We can check that the values $c_1,\ldots,c_h$ match the fixed assignment $c^{(i)}_1,\ldots,c^{(i)}_h$ that reaches a given leaf of the tree using the function $I_{\code^{(i)}}(\code)=\prod_{j=1}^h\frac{c_j+c^{(i)}_j}{2}$ (or similarly for any subset of up to $h$ of the indices). Then $g(\code;\x)=I_{\code^{(1)}}(\code)f_1(\x)+\cdots+I_{\code^{(k)}}(\code)f_k(\x)$ represents our decision tree coding of the tasks (see Fig. \ref{fig:functions} (right) for an example). For  the theorem, we again scale the inputs to lie within the unit ball: 
%Note that the above packing can be viewed as a decision tree that branches $k$-ways depending on how close the input $x$ is to each center $h_i$. Alternatively we can consider a decision tree with multiple levels where each node is branching based on a coordinates' value, with a separate function applied to the input at every leaf node. This is an alternate way of packing $k$ functions.

%Correspondence to learning decision trees:
%We prove Theorem 1: 

\begin{theorem}\label{thm:decisiontree_informal}
(Informal) Two-layer neural networks trained with SGD can learn such a decision tree with depth $h$ within error $\eps$  with sample complexity $O(d^{h}/\eps^2)$ times the sum of the sample complexity for learning each of the individual functions at the leaves. Furthermore, conditioned on the hardness of learning parity with noise, $d^{\Omega(h)}$ examples are in fact necessary to learn a decision tree of depth $h$.
\end{theorem}

We can generalize the previous decision tree to allow a threshold based decision at every internal node, instead of just looking at a coordinate. Assume that the input data lies in the unit ball and that each decision is based on a margin of at least $\gamma$. We can then use a product of our truncated $\erf$ polynomials to represent branches of the tree. We thus show: %(for formal statement and proofs, see Section \ref{sec:decision_app} in the Appendix): 

%Decision trees:
%If we are given a decision tree where each decision of depth $h$ is based on a discrete binary variables taking values in $\{-1,+1\}$ then such a function can be learnt in time $d^h$ and this is tight (we prove a matching lower bound based on parity with noise Theorem \ref{thm:parity})

\begin{theorem}\label{thm:decisiontreemargin_informal}
(Informal)
If we have a decision tree of depth $h$ where each  decision is based on a margin of at least $\gamma$, then we can learn such a such a function within error $\eps$ with sample complexity $e^{O(h\log(1/\eps)/\gamma^2)}$ times the sample complexity of learning each of the leaf functions.
\end{theorem}

For the formal statements and proofs, see Appendix \ref{sec:decision_app}. 
Note that by Theorem \ref{thm:decisiontree_informal}, the exponential dependence on the depth in these theorems is necessary.%: we show a corresponding lower bound in Section \ref{sec:lower} in the Appendix.

\subsection{Simple programming constructs}

So far, we have discussed jointly learning $k$ functions with task codings represented by clusters and decision trees. We now move to a more general setup, where we allow simple programming constructs such as compositions, aggregation, concatenation, and branching  on different functions. At this stage, the distinction between ``task codes'' and ``inputs'' becomes somewhat arbitrary. Therefore, 
%as we are no longer evaluating one of $k$ distinct functions 
we will generally drop the task codes $\code$ from the inputs. The class of programming constructs  we can learn is a generalization of the decision tree and we refer to it as a \emph{generalized decision program}. 

\begin{definition}
We define a \emph{generalized decision program} to be a circuit with fan-out 1 (i.e., a tree topology). Each gate in the circuit computes a function of the outputs of its children, and the root (top) node computes the final output. All gates, including the leaf gates, have access to the input $\x$. 

\end{definition}

We can learn generalized decision programs where each node evaluates one among a large family of operations, first described informally below, and then followed by a formal definition.

\paragraph{Arithmetic/analytic formulas}
As discussed in Section \ref{sec:learning_analytic}, learnability of analytic functions not only allows us to learn functions with bounded Taylor series, but also sums, products, and ratios of such functions. Thus, we can learn constant-depth arithmetic formulas with bounded outputs and analytic functions (with appropriately bounded Taylor series) applied to such learnable functions.
%We first observe that the sum of two analytic functions with bounded Taylor representations yields a function with the same degree and similar norm, and hence is learnable in essentially the same sample complexity. Likewise, the product of two such functions yields a function in which the degrees sum and the norms are polynomially related. Finally, the composition of such functions yields a function in which the degrees multiply and the norm increases polynomially. Thus, we observe that we can learn constant-depth arithmetic formulas with bounded constants and analytic functions (with appropriately bounded Taylor representations) applied to such learnable functions. \VS{This can be shortened now.}
%Note that in the above decision tree we can also handle other operators such as sum nodes with essentially the same sample complexity. We can also get an analytic representation if the tree involves product node by multiplying the analytic representations for the individual functions. More generally, we can apply any analytic transform over the child nodes. We will now generalize this to a programming language view of a decision program.

\paragraph{Aggregation}

We observe that the sum of $k$ functions with bounded Taylor representations yields a function of the same degree and norm that is at most $k$ times greater; the average of these $k$ functions, meanwhile does not increase the magnitude of the norm. Thus, these standard aggregation operations are represented very efficiently. These enable us to learn functions that answer a family of SQL-style queries against a fixed database as follows: suppose $I(\x,r)$ %\aga{task codes were used here after promising to get rid of them!} Yeah, OK. removed for consistency. --BJ
is an indicator function for whether or not the record $r$ satisfies the predicate with parameters $\x$. Then a sum of the $m$ entries of a database that satisfy the predicate given by $\x$ is represented by $I(\x,r^{(1)})r^{(1)}+\cdots+I(\x,r^{(m)})r^{(m)}$. Thus, as long
as the predicate function $I$ and records $r^{(i)}$ have bounded norms, the function mapping the parameters $\x$ to the result of the query is learnable. We remark that max aggregation can also be represented as a sum of appropriately scaled threshold indicators, provided that there is a sufficient gap between the maximum value and other values.

%Functions returning objects:
\paragraph{Structured data}
We note that our networks already receive vectors of inputs and may produce vectors of outputs. Thus, one may trivially structured inputs and outputs such as those in Fig. \ref{fig:learn} (right) using these vectors. %But, the different task functions may operate on different kinds of data structures that operate on different fields, which by virtue of the monolithic architecture, must share the same set of input and output wires in the network. If we regard the wiring as fixed, this may be a concern. But here, we note that this concern may be resolved if the network operates on a ``sketched'' representation of a large space of labeled fields.
We now formalize this by defining the class of functions we allow. 

\begin{definition}
 We support the following operations at any gate in the generalized decision program. Let every gate have at most $k$ children. Let $g$ be the output of some gate and $\{f_1,\dots,f_k\}$ be the outputs of the children of that gate.

\begin{tightenumerate}
    \item Any analytic function of the child gates which can be approximated by a polynomial of degree at most $p$, including sum $g=\sum_{i=1}^{k} f_i$ and product of $p$ terms $g=\Pi_{i=1}^p f_i$.
    \item Margin-based switch (decision) gate with children $\{f_1,f_2\}$ and some constant margin $\gamma$, i.e., $g=f_1  \text{ if }\ip{\bbet}{\x}-\alpha\leq -\gamma/2,$ and $g=f_2 \text{ if }\ip{\bbet}{\x}-\alpha\geq \gamma/2$, for a vector $\bbet$ and constant $\alpha$.
    \item Cluster-based switch gate with $k$  centers $\{\code^{(1)},\dots,\code^{(k)}\}$, with separation $r$ (for some constant $r$), i.e. the output is $f_i$ if $\norm{\x-\code^{(i)}}\le r/3$. A special case of this is a look-up table which returns value $v_i$ if $\x=\code^{(i)}$, and 0 if $\x$ does not match any of the centers.
    \item Composition of two functions,  $g(\x)=f_1(f_2(\x))$.
    \item Create a tuple out of separate fields by concatenation: given inputs $\{f_1,\dots, f_k\}$ $g$ outputs a tuple $[f_1,\dots, f_k]$, which creates a single data structure out of the children. Or, extract a field out of a tuple: for a fixed field $i$, given the tuple $[f_1,\ldots,f_k]$, $g$ returns $f_i$.  
    \item For a fixed table $T$ with $k$ entries $\{r_1,\dots, r_k\}$, a Boolean-valued function $b$, and an analytic function $f$,  SQL queries of the form \verb|SELECT SUM f(r_i), WHERE b(r_i, x)| for the input $\x$, i.e., $g$ computes $\sum_{i:b(r_i,\x)=1}f(r_i)$. (We assume that $f$ takes bounded values and $b$ can be approximated by an analytic function of degree at most $p$.) For an example, see the function \verb|avg_income_zip_code()| in Fig.~\ref{fig:learn} (right).
    \end{tightenumerate}

\end{definition}

%Thus type information and objects can be interpreted in deep networks by encoding types and object-structures into subspaces. Thus no explicit type support may be needed and dynamic types may automatically “arise” in networks as new subclusters and subspaces. Different functions, modules, classes may not need different physical networks and may all be concurrently implemented by the same network but may pack their data flow into different subspaces of the input/outputs of the layers of these networks.

As an example of a simple program we can support, refer to Fig.~\ref{fig:learn} (right) which involves table lookups, decision nodes, analytic functions such as Euclidean distance, and SQL queries. Theorem \ref{thm:program_informal} is our learning guarantee for  generalized decision programs. See Section \ref{sec:program_app} in the Appendix for proofs, formal statements, and a detailed description of the program in Fig.~\ref{fig:learn} (right).

\begin{theorem}\label{thm:program_informal}
(Informal)
Any generalized decision program of constant depth $h$ using the above operations with $p\le O(\log (k/\eps))$  can be learnt within error $\eps$ with sample complexity $k^{\poly(\log (k/\eps))}$. For the specific case of the program in Fig. \ref{fig:learn} (right), it can be learnt using $(k/\eps)^{O(\log (1/\eps))}$ examples, where $k$ is the number of individuals in the database. 
\end{theorem}

\section{Experiments}\label{sec:case1exp}

We next empirically explore the learnability of multiple functions by a two layer neural network when the tasks are coded by well-separated clusters or decision trees, and more generally the learnability of SQL-style aggregation for a fixed database. We find good agreement between the empirical performance and the bounds of Section \ref{sec:techinical_overview}. See Appendix \ref{sec:appendix_app} for more details of the experimental setup.

%We demonstrate the usefulness of the theoretical  results in Section \ref{sec:techinical_overview} and explore the learnability of multiple functions 1`Specifically, the sample complexity bounds [[cross-ref needed]] agrees with experiment results.

\paragraph{Learning binary classification for well-separated clusters data} We demonstrate through experiments
on synthetic data that a single neural network can learn multiple tasks if the tasks are well-separated into clusters, as we discussed in Section \ref{sec:clusters}.  Here the data is drawn from a mixture of $k$ well-separated Gaussians in $d=50$ dimensions. Within each Gaussian, the data points are marked with either of two labels. For the label generation, we consider two cases, first when the labels within each cluster are determined by a simple linear classifier, and second when the labels are given by a random teacher neural network with one hidden layer of $10$ hidden units. Fig. \ref{fig:nn} shows the performance of a single two-layer neural network with $50 k$ hidden units on this task. The performance of the neural network changes only slightly on increasing the number of clusters ($k$), suggesting that a single neural network can learn across all clusters. 
%{\bf We should say how many hidden units we use for the monolithic NN. --BJ} \xin{Done.}

\begin{figure}
\centering
\begin{subfigure}{0.48\textwidth}
    \centering
    \includegraphics[width=\textwidth]{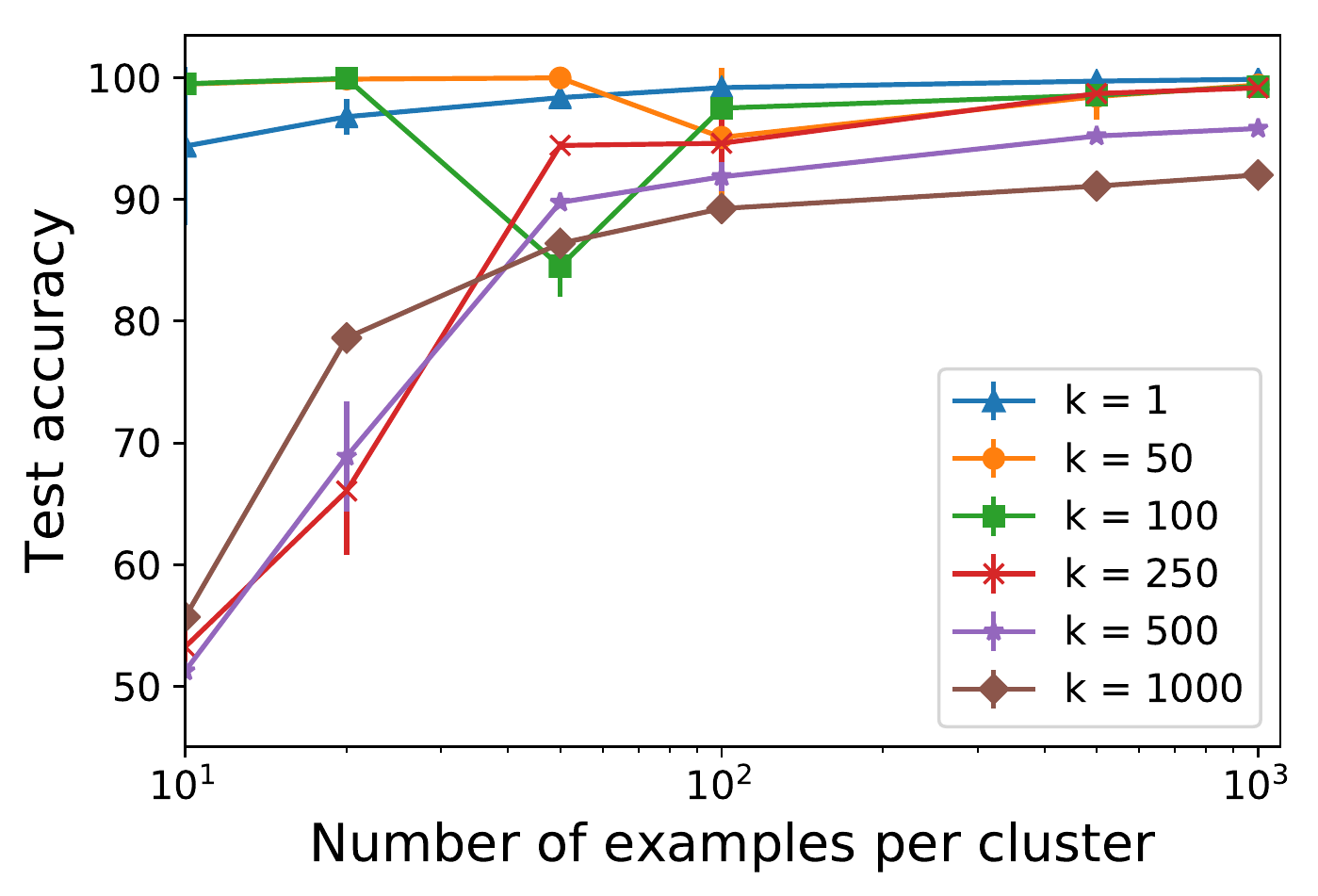}
    \caption{Random linear classifier for each cluster.}
    \label{fig:nn1}
\end{subfigure}
\begin{subfigure}{0.48\textwidth}
    \centering
    \includegraphics[width=\textwidth]{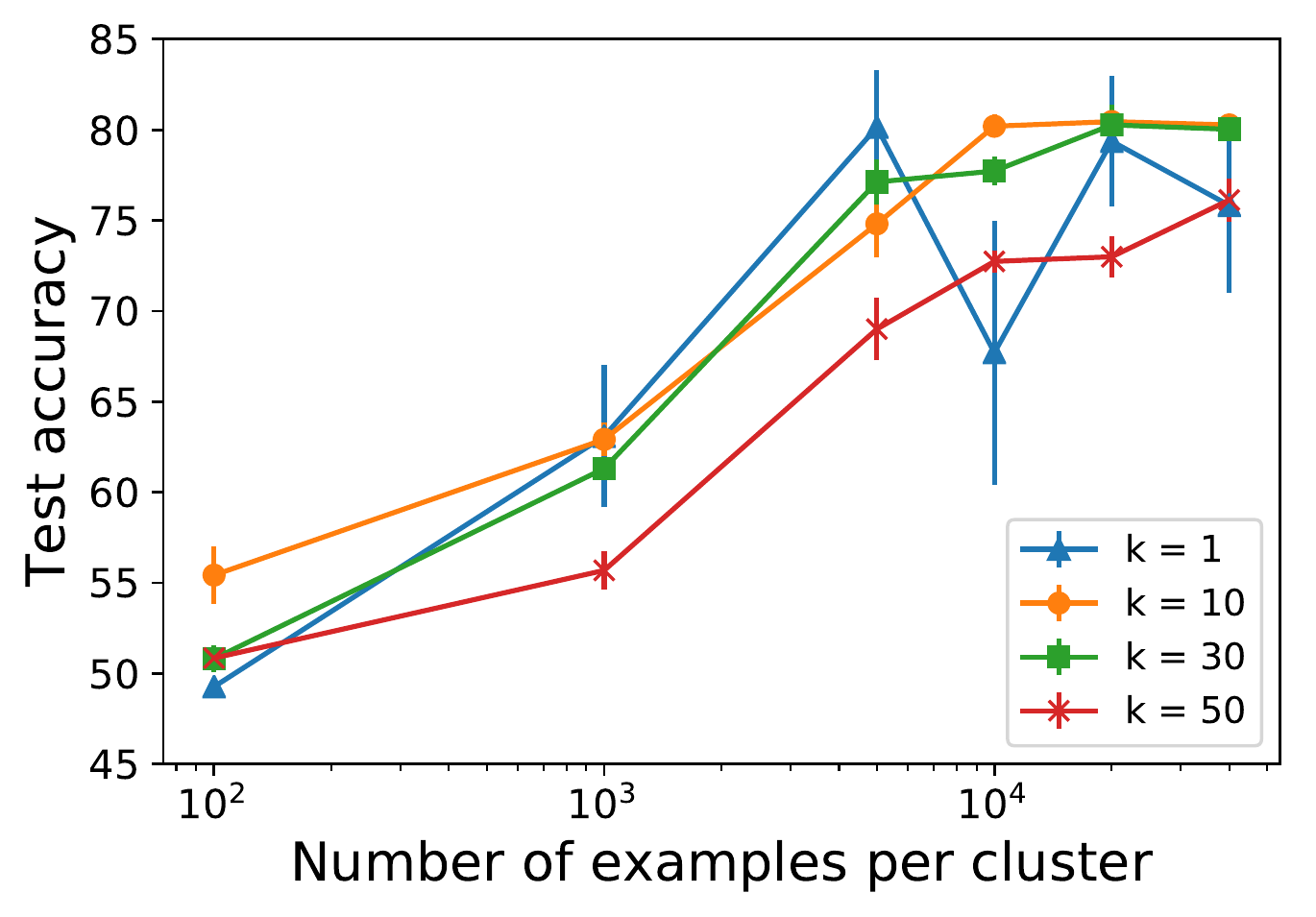}
    \caption{Random teacher network for each cluster.}
    \label{fig:nn2}
\end{subfigure}
\caption{Binary classification on multiple clusters, results are an average over 3 trials. A single neural network does well even when there are multiple clusters. The error does not increase substantially on increasing the number of clusters $k$}
\label{fig:nn}
\end{figure}

\paragraph{Learning polynomial functions on leaves of a decision tree} We consider the problem of learning polynomial functions selected by a decision tree. The data generation process is as follows. We first fix parameters: tree depth $h$, decision variable threshold margin $\gamma$, number of variables $k$, and degree $p$ for leaf functions. Then we specify a full binary decision tree of depth $h$ with a random polynomial function on each leaf. To do this, we first generate thresholds $t_1, t_2, ..., t_h$ from the uniform distribution on $[0, 1]$ and $2^h$ leaf functions which are homogeneous polynomials of $k$ variables and degree $p$, with uniformly distributed random coefficients in $[0, 1]$. A train/test example $(\x, y)$ where $\x = (x_1, ..., x_h, x_{h+1}, ..., x_{h+p})$ is generated by first randomly sampling the $x_i$'s from the uniform distribution on $[0, 1]$, selecting the corresponding leaf based on $x_1, ..., x_h$ (that is, go left at the first branch if $x_1 \leq t_1$,  otherwise go right, etc), and computing $y$ by evaluating the leaf function at $(x_{h+1}, ..., x_{h+p})$. The data is generated with the guarantee that each leaf has the same number of data points. Fig.~\ref{fig:decision_tree} shows the performance of a two-layer neural network with $32 \times 2^{h}$ hidden units, measured in the R-squared metric. Here the R-squared metric is defined as $1 - \sum_{i} (\hat{y}_i - y_i)^2/\sum_{i} (y_i - \overline{y})^2$, and is the fraction of the underlying variance explained by the model. Note that for a model that outputs the mean $\overline{y}$ for any input, the R-squared metric would be zero. We observed for a fixed number of training samples, accuracy increases as threshold margin increases, and the dependence of sample complexity on test error agrees with the bound in Theorem \ref{thm:decisiontreemargin_informal}.

%We observe if the training set size is scaled linearly with number of leaves, then the test error does not change much when the depth of the tree varies. Also, increasing the threshold margin decreases the test error. Both observation are in accordance with the $e^{h/\gamma^2}$ sample complexity bound in Theorem \ref{thm:decisiontreemargin_informal}. 

\begin{figure}
\centering
\begin{subfigure}{0.48\textwidth}
    \centering
    \includegraphics[width=\textwidth]{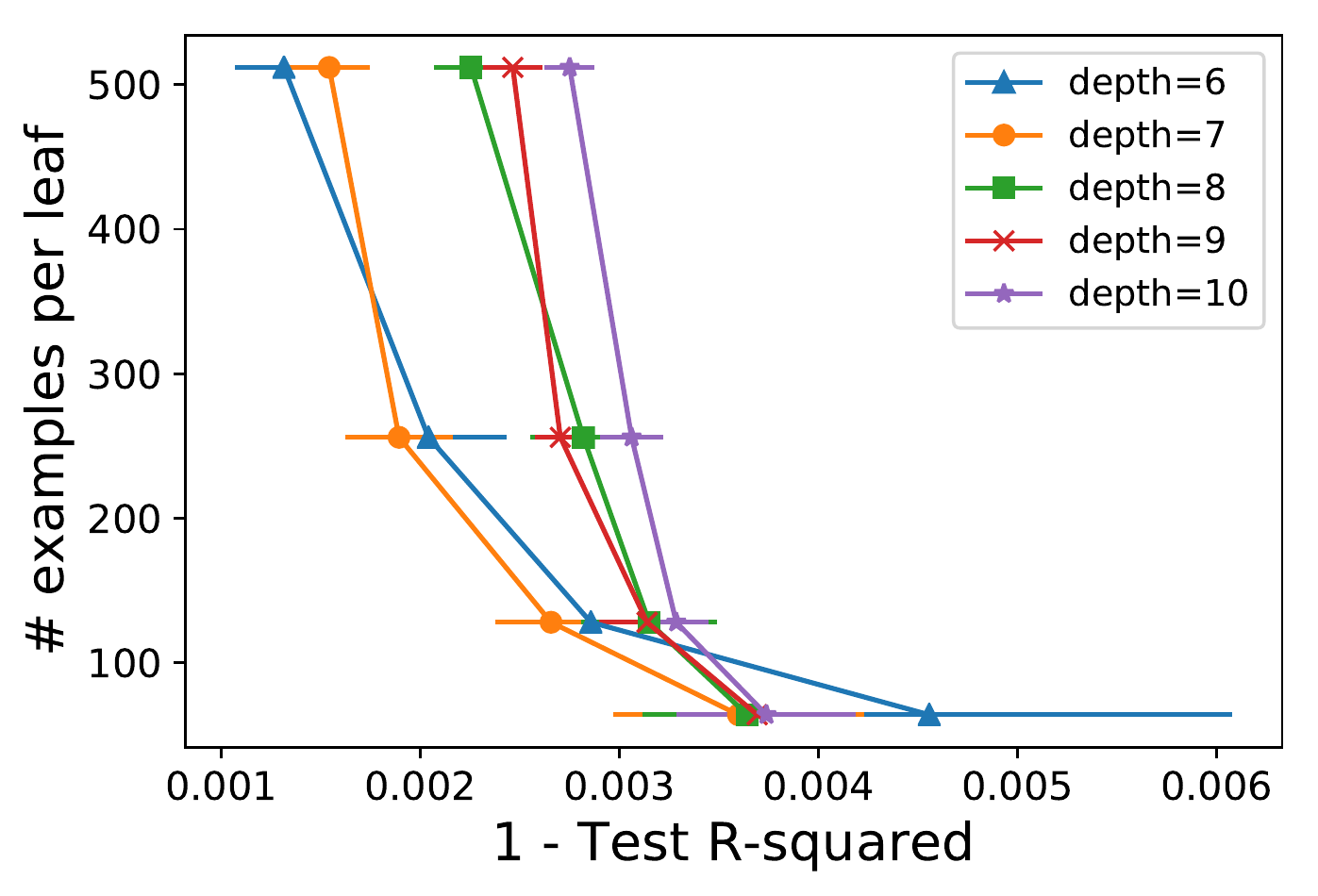}
    \caption{Fixed threshold margin $\gamma=0.1$.}
    \label{fig:decision_tree_1}
\end{subfigure}
\begin{subfigure}{0.48\textwidth}
    \centering
    \includegraphics[width=\textwidth]{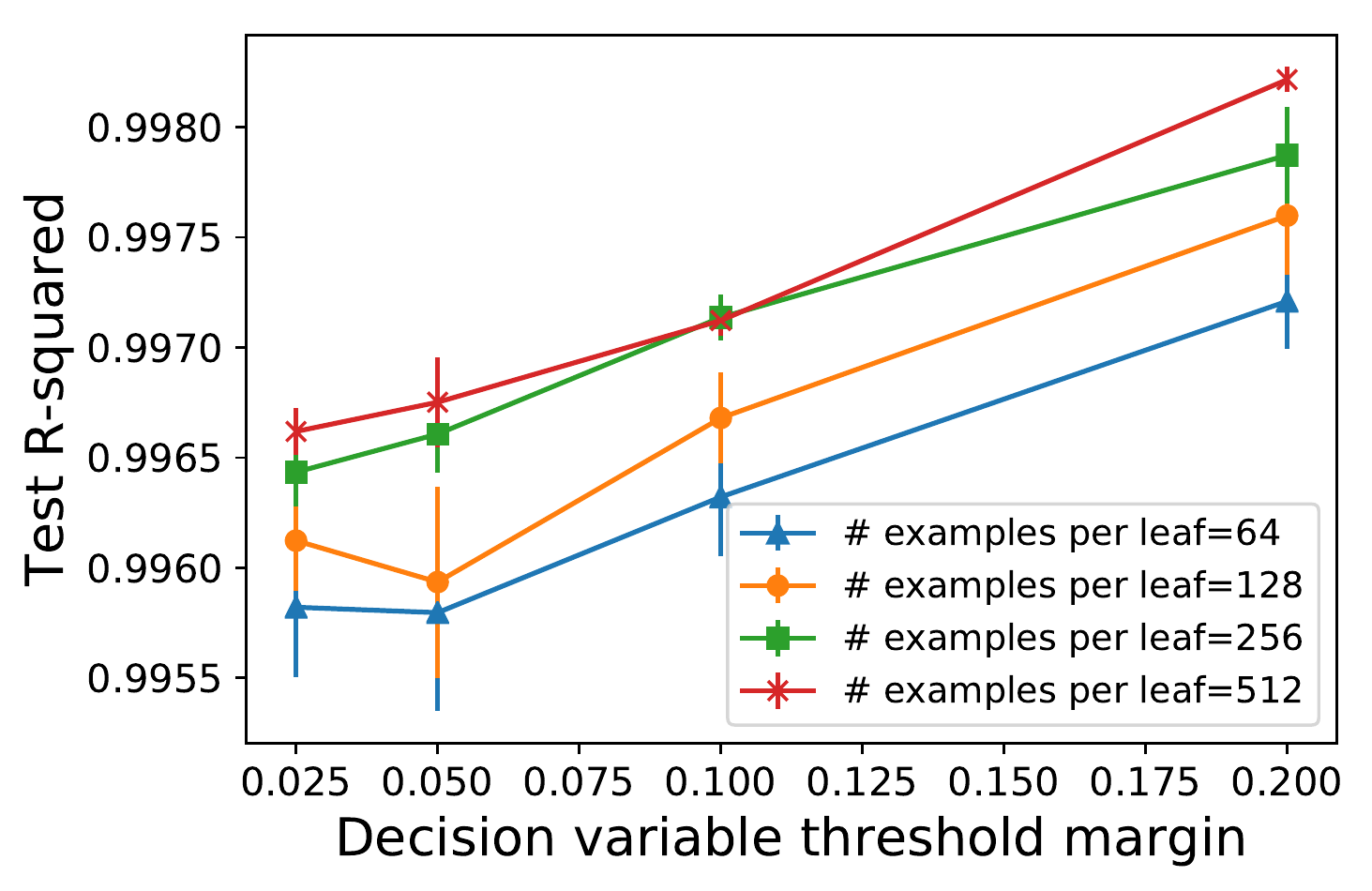}
    \caption{Fixed tree depth $h = 10$.}
    \label{fig:decision_tree_2}
\end{subfigure}
\caption{Learning random homogeneous polynomials of $4$ variables and degree $4$ on the leaves of a
decision tree, the results are averaged over 7 trials. (a) Sample complexity scales as $e^{O(h\log(1/\eps)/\gamma^2)}$ with error $\epsilon$, where error is measured by (1-Test R-squared). (b) For fixed tree depth, accuracy increases with increasing margin. } %\aga{descriptions of (a) and (b) should be main takeaways from plot. %For a fixed number of training examples per leaf node, test error for decision trees with different depth are similar. The decision tree has depth $6, 7, 8, 9$ or $10$, threshold margin $0.1$.
\label{fig:decision_tree}
\end{figure}

%The tree has $2^h$ leaves, and the polynomial functions on the leaves are homogeneous polynomials of $k$ variables and degree $p$ with uniformly distributed random coefficients in $[0, 1]$. This fully specifies a binary decision tree with polynomial leaf functions. Train and test data are then generated with the guarantee that each leaf has the same number of data points. For more details, see the appendix.

\paragraph{Learning SQL-style aggregation queries} We demonstrate the learnability of SQL-style aggregation queries, which are functions of the form \verb|SELECT SUM/MIN/MAX f(x)|  \verb|WHERE p(x) from DATABASE|. The train and test datasets are generated from the Penn World Table dataset \citep{feenstra2015next}, which contains $11830$ rows of economic data. The \verb|WHERE| clause takes the form of $(x_{i_1} \geq t_{i_1}$) \text{AND} \ldots \text{AND} ($x_{i_k} \geq t_{i_k})$, where $x_{i_1},\ldots, x_{i_k}$ are $k$ randomly selected columns and $t_{i_1},\ldots, t_{i_k}$ are randomly selected values from the columns. The query target function is randomly selected from \verb|SUM|, \verb|MAX|, and \verb|MIN| and is over a fixed column (\verb|pl_x| in the table, which stands for price level for imports). The R-squared metric for a two-layer neural network with $40$k hidden units is summarized in Table~\ref{tab:sql_aggregation}. We observe that a neural network learns to do SQL-style aggregation over dozens of data points, and for a fixed database, the test error only varies slightly for different numbers of columns in the \verb|WHERE| clause. % For more details, see the appendix. 

\begin{table}[h]
    \centering
    \caption{R-Squared for SQL-style aggregation. A single network with one hidden layer gets high R-Squared values, and the error does not increase substantially if the complexity of the aggregation is increased by increasing the number of columns in the WHERE clause.}
    \label{tab:sql_aggregation}
    % neurips style file is asking that table headings be above table
    \resizebox{\columnwidth}{!}{%
    \begin{tabular}{@{}lccccc@{}}
    \toprule
    \# columns in WHERE clause & 1 & 2 & 3 & 4 & 5\\
    \midrule
%    Average \# data points & 151 &  134 & 103 & 71 & 39 \\
%    \midrule
    Median \# data points & 21 &  12 & 9 & 4 & 3 \\
    \midrule
    Test R-Squared    & ($93.31 \pm 0.11$) \% &  ($93.01 \pm 2.7$)\% & ($91.86 \pm 2.59$) \% & ($94.84 \pm 1.86$) \% & ($92.51 \pm 2.2$) \% \\
    \bottomrule
    \end{tabular}
    }
    \vspace{2pt}
\end{table}

%\vspace{-5pt}
%\subsection{Theoretical analysis}

%It is challenging to that that neural networks can provably learn multiple functions because we do not have a good understanding of the functions that neural networks can learn. However, we show that neural networks can provably learn multiple functions from a certain class, using a recent framework of \citet{arora2019fine}.

%\citet{arora2019fine} recently showed that certain classes of smooth functions can be learnt by gradient descent, despite over-parameterization. We show that if the data points corresponding to each task individually satisfy the property in \citet{arora2019fine}, then the union of those data points also satisfies the property in \citet{arora2019fine} and hence can be learnt by a single neural network---as long as the data points from the different tasks are orthogonal.

%\begin{restatable}{thm}{mult}
%\label{thm:mult}
%If the data-points for different tasks are orthogonal and if the tasks individually satisfy the property in \citet{arora2019fine}, then the overall problem satisfies the property in \citet{arora2019fine} and can be learnt by a single neural network with sample complexity equal to the sum of the sample complexity of learning the individual tasks. 

%\end{restatable}	

%\vspace{-5pt}

\section{Conclusion and Future Work}

Our results indicate that even using a single neural network, we can still learn tasks across multiple, diverse domains.
However, modular architectures may still have benefits over monolithic ones: they might use less energy
and computation, as only a portion of the total network needs to evaluate any data point. They may also be more interpretable,
as it 
is clearer what role each part of the network is performing. It is an open question if any of these benefits
of modularity can be extended to monolothic networks.
For instance, is it necessary for a monolithic network to have modular parts which perform identifiable simple computations? And if so, can we efficiently identify these from the larger network? This could help in 
interpreting and understanding large neural networks. %\aga{Didn't understand these last two points - are you saying the monolithic network is secretly modular, but we just didn't have to bake than in?} {\bf Yes. The questions are (1) whether this is necessarily true, and (2) whether we can spot this somehow. Revised this, see if it's clearer. --BJ}

Our work also begins to establish how neural networks can learn
functions which are represented as simple programs.
This 
perspective raises the question, how rich can these programs
be? Can we learn programs from a full-featured language? In particular, 
supposing that they combine simpler programs using other basic operations 
such as composition, can such libraries of tasks be learned as well, i.e., can these learned programs be reused? 
We view this as a compelling direction for future work.

\iffalse
\textcolor{red}{remove below?}
Though we primarily focused on two-layer neural networks with ReLU activations, an open question is what models
are optimal for different tasks and task combinations. For example, our results are applicable
even for learning using kernel methods since a two-layer infinite width network can be 
viewed as a kernel function. In Appendix \aga{ref} we show that for learning the gravity function
the NTK kernel for ReLU 
activation has superior learning guarantees than the Gaussian kernel. Our theory may enable a more detailed analysis
of popular models on tasks of interest. \VS{maybe move this to 2.1?}
\fi

\subsubsection*{Acknowledgements}
Brendan Juba was partially supported by NSF Awards CCF-1718380, IIS-1908287, and IIS-1939677, and was visiting Google during a portion of this work. Vatsal Sharan was supported in part by NSF award 1704417.

%\section*{Broader Impact} 

%As this work is largely theoretical, this work does not present any forseeable societal consequences. Our experiments validate our theoretical predictions, but do not provide state-of-the-art results on any tasks of consequence.
%% using paragraph heading so that it occupies less space...
% Broader impact is allowed to go on the 9th page.

\bibliographystyle{iclr2021_conference}
\bibliography{refs.bib}

% NOTE: I have moved the appendix to supplementary_main.tex

\newpage
\appendix

\section{Theoretical Results}

\subsection{Kernel learning bounds}
\label{sec:kernel_learning_bounds}

In this section, we develop the theory of learning analytic functions. For a given function $g$, we define
a parameter $M_{g}$ related to the sample complexity of learning $g$ with small error with respect to a given
loss function:

\begin{definition}
\label{def:eff_learnable}
Fix a learning algorithm, and a 1-Lipschitz loss function $\Lo$.
For a function $g$ over a distribution of inputs $\mathcal{D}$, a given error scale $\eps$, and a confidence parameter $\delta$, let the {\em sample complexity} $n_{g,\mathcal{D}}(\eps,\delta)$ be the smallest integer such that when the algorithm is given $n_{g,\mathcal{D}}(\eps,\delta)$ i.i.d.\ examples of $g$ on $\mathcal{D}$,
with probability greater than $1-\delta$, it produces a trained model
$\hat{g}$ with generalization error $\expect_{\x\sim\mathcal{D}}[\Lo(g(\x),\hat{g}(\x))]$ less than $\eps$. 
Fix a constant $C>0$. We say $g$ is \emph{efficiently learned} by the algorithm (w.r.t.\ $C$) if there exists a constant $M_g$ (depending on $g$) such that for all $\eps$, $\delta$, and distributions $\mathcal{D}$ on the inputs of $g$, 
$n_{g,\mathcal{D}}(\eps,\delta) \leq C([M_g+\log(\delta^{-1})]/\eps^2)$.
\end{definition}

For example, it is known (\cite{talagrand1994sharper}) that there exists a suitable choice of $C$ such that empirical risk
minimization for a class of functions efficiently learns those functions with $M_g$ at most the VC-dimension of that 
class.

\iffalse
Old definition below----
\begin{definition}
\label{def:eff_learnable}
Given a learning algorithm, we say that a function $g$ over a distribution of inputs $\mathcal{D}$
is \emph{efficiently learnable} if, given an error scale $\eps$, with probability greater than $1-\delta$,
the generalization error $\expect_{\x\sim\mathcal{D}}[\Lo(g(\x),\hat{g}(\x))]$ of the trained model
$\hat{g}$ with respect to any 1-Lipschitz loss function $\Lo$
is less than $\eps$ when the training data consists of at least $O([M_g+\log(\delta^{-1})]/\eps^2)$
i.i.d. samples drawn from $\mathcal{D}$, for some $n$-independent constant $M_g$.
\end{definition}
\fi

Previous work focused on computing $M_{g}$, for functions defined on the unit sphere, for wide neural
networks trained with SGD. We extend the bounds derived in \cite{arora_finegrained_2019} to analytic
functions, and show that they apply to kernel learning methods as well as neural networks.

The analysis in \cite{arora_finegrained_2019} focused on the case of training the hidden layers
of wide networks with SGD. We first show that these bounds are more general and in particular
apply to the case where only the
final layer weights are trained (corresponding to the NNGP kernel in \cite{lee_wide_2019}),
and therefore our results will apply to general kernel learning as well.
The proof strategy
consists of showing that finite-width networks have a sensible infinite-width limit, and showing that
training causes only a small change in parameters of the network.

Let $m$ be the number of hidden units, and $n$ be the number of data points.
Let $\y$ be the $n \times 1$ dimensional vector of training outputs.
Let $\h$ be a $n \times m$ random matrix denoting the activations of the hidden layer (as a function of the weights
of the lower layer) for all $n$ data points. We will first show the following:

\begin{theorem}\label{thm:kernel_bound}
For sufficiently large $m$, a function $g$ can be learned efficiently in the sense of
Definition \ref{def:eff_learnable} by training the final layer weights only with SGD,
where the constant $M_{g}$ given by
\begin{equation}
M_{g} \leq \y^{\tpose}(\H)^{-1}\y
\end{equation}
where we define $\H$ as
\begin{equation}
 \H = \expect[\h\h^{\tpose}]   
\end{equation}
which is the NNGP kernel from \cite{lee_wide_2019}.
\end{theorem}

We require some technical lemmas in order to prove the theorem. We first need to show that
$\H$ is, with high probability, invertible.
If $K(\x,\x')$, the kernel function which generates $\H$
is given by a infinite Taylor series in $\x\cdot\x'$ it can be argued that 
$\H$ has full rank for most real world distributions. For example, the ReLU activation this holds as long as no two data 
points are co-linear (see Definition 5.1 in \cite{arora_finegrained_2019}). We can prove this more explicitly in the
following lemma:

\begin{lemma}
If all the $n$ data points $x$ are distinct and the Taylor series of  $K(\x,\x')$ in $\x\cdot\x'$ has positive coefficients everywhere then $\H$ is not singular.
\end{lemma}
\begin{proof}
First consider the case where the input $x$ is a scalar.
Since the Taylor series 
corresponding to $K(x,x')$ consists of monomials of all degrees of
$xx'$, we can view it as some inner product in a kernel space induced by
the function $\phi(x) = (1,x,x^2,\ldots)$, where the inner product is diagonal
(but with potentially different weights) in this basis.
For any distinct set of inputs $\{x_1, .., x_n\}$ the set of vectors $\phi(x_i)$ are linearly independent.
The first $n$ columns produce the Vandermonde matrix obtained by stacking rows
$1,x,x,...,x^{n-1}$ for $n$ different values of $x$, which is well known to be non-singular
(since a zero eigenvector would correspond to a degree $n-1$ polynomial with $n$ distinct roots
$\{x_1, .., x_n\}$).

This extends to the case of multidimensional $\x$ if the values, projected
along some dimension, are distinct.
In this case, the kernel space corresponds to the direct sum of copies of $\phi$ applied
elementwise to each coordinate $\x_{i}$. If all the points are distinct and
and far apart from each other, the probability that a given pair coincides under random projection
is negligible. From a union bound, the probability that a given pair coincide is also bounded --
so there must be directions such that projections along that direction are distinct.
Therefore, $\H$ can be considered to be invertible in general.
\end{proof}

As $m \rightarrow \infty$, $\h\h^{\tpose}$ 
concentrates to its expected value. More precisely,
$(\h\h^{\tpose})^{-1}$
approaches $(\H)^{-1}$ for large $m$ if we assume that the smallest 
eigenvalue $\lambda_{min}(\H) \geq \lambda_0$, which
from the above lemma we know
to be true for fixed $n$.
(For the ReLU NTK the difference becomes negligible with high probability for
$m = poly(n/\lambda_0)$ \cite{arora_finegrained_2019}.)
This allows us to replace $\h\h^{\tpose}$ with $\H$ in any bounds involving the former.

We can get learning bounds in terms of $\h\h^{\tpose}$ by studying the upper layer
weights $\w$ of the network after training.
After training, we have $\y = \w\cdot\h$. If $\h\h^{\tpose}$ is invertible
(which the above arguments show is true with high probability
for large $m$),
 the following lemma holds:
\begin{lemma}
\label{lem:upper_layer}
If we initialize a random lower layer and train the weights of the upper layer,
then there exists a solution $\w$
with norm $\sqrt{\y^{\tpose} (\h\h^{\tpose})^{-1} \y}$.
\end{lemma}

\begin{proof}
The minimum norm solution to $\y = \w^{\tpose}\h$ is
\begin{equation}
\w^* = (\h^{\tpose}\h)^{-1}\h^{\tpose}\y.
\end{equation}
The norm squared $(\w^*)^{\tpose}\w^{*}$ of this solution is
given by $\y^{\tpose}\h(\h^{\tpose}\h)^{-2}\h^{\tpose}\y$.

We claim that $\h(\h^{\tpose}\h)^{-2}\h^{\tpose} = (\h\h^{\tpose})^{-1}$.
To show this, consider the SVD decomposition $\h = \m{U}\m{S}\m{V}^{\tpose}$.
Expanding we have
\begin{equation}
\h(\h^{\tpose}\h)^{-2}\h^{\tpose} = \m{U}\m{S}\m{V}^{\tpose} (\m{V}\m{S}^{2} \m{V}^{\tpose})^{-2} \m{V}\m{S}\m{U}^{\tpose}.
\end{equation}
Evaluating the right hand side gets us $\m{U}\m{S}^{-2}\m{U}^{\tpose} = (\h\h^{\tpose})^{-1}$.

Therefore, the norm of the minimum norm solution is $\y^{\tpose} (\h\h^{\tpose})^{-1}\y$.
\end{proof}

We can now complete the proof of Theorem~\ref{thm:kernel_bound}. 
\begin{proof}[Proof of Theorem~\ref{thm:kernel_bound}]
For large $m$, the squared norm
of the weights approaches $\y^{\tpose} (\H)^{-1}\y$.
Since the lower layer is fixed, the optimization problem is linear and therefore convex in the trained weights
$\w$. Therefore
SGD with small learning rate will reach
this optimal solution. The 
Rademacher complexity of this function class is at most $\sqrt{\y^{\tpose}(\H)^{-1}\y}$ which we at most by $\sqrt{M_g}$ where $M_g$ is an upper bound on $\y^{\tpose}(\H)^{-1}\y$.
The optimal solution has $0$ train error based on the 
assumption that $\H$ is full rank and the generalization error will be no more than $O(\sqrt{\frac{\y^{\tpose}(\H)^{-1}\y}{2n}})$ which is at most $\eps$ if we use at least $n = \Omega(M_g/\eps^2)$ training samples - note that this is 
identical to the previous results for training the hidden layer only \cite{arora_finegrained_2019, du_gradient_2019}.
\end{proof}

\subsection{Learning analytic functions}

\label{sec:analytic}

Now, we derive our generalization bounds for single variate functions. We use Theorem \ref{thm:kernel_bound}
to prove the following corollary, a more general version of Corollary 6.2 proven in 
\cite{arora_finegrained_2019} for wide ReLU networks with trainable hidden layer only:

\begin{corollary}
\label{cor:arora}
Consider the function $g:\mathbb{R}^{d}\to\mathbb{R}$ given by:
\begin{equation}
g(\m{x}) = \sum_{k} a_{k}(\bbet_{k}^{\tpose}\x)^{k}
\end{equation}
Then, if $g$ is restricted to $||\m{x}||=1$, and
the NTK or NNGP kernel can be written as $H(\x,\x') = \sum_{k}b_k(\x\cdot\x')^k$,
the function can be learned efficiently with
a wide one-hidden-layer network in the sense of Definition \ref{def:eff_learnable}
with
\begin{equation}
\sqrt{M_{g}} = \sum_{k} b_k^{-1/2} |a_{k}| ||\bbet_{k}||_{2}^{k}
\label{eq:arora_bound}
\end{equation}
up to $g$-independent constants of $O(1)$, where $\bnorm_k\equiv ||\bbet_{k}||_{2}$.
In the particular case of a ReLU network, the bound is
\begin{equation}
\sqrt{M_{g}} = \sum_{k} k |a_{k}| ||\bbet_{k}||_{2}^{k}
\label{eq:arora_bound_relu}
\end{equation}
\end{corollary}
The original corollary applied only to networks with trained hidden layer, and the bound on the ReLu
network excluded odd monomials of power greater than $1$.

\begin{proof}
The extension to NNGP follows from Theorem \ref{thm:kernel_bound}, which allows for the
application of the arguments used to prove
Corollary 6.2 from \cite{arora_finegrained_2019} (particularly those found in Appendix E).

The extension of the ReLu bound to odd powers can be acheived with the following modification.
consider appending a constant
component to the input $\x$
so that the new input to the network is $(\x/\sqrt{2},1/\sqrt{2})$.
The kernel then becomes:
\begin{equation}
\label{eq:ntk-bias}
K(\x,\x') = \frac{\x\cdot\x'+1}{4\pi}\left(\pi-\arccos\left(\frac{\x\cdot\x'+1}{2}\right)\right).
\end{equation}
Re-writing the power series as an expansion around
$\x\cdot\x' = 0$, we have terms of all powers. An asymptotic analysis
of the coefficients using known results
shows that coefficients $b_k$ are asymptotically
$O(k^{-3/2})$ - meaning
in Equation \ref{eq:arora_bound_relu} applies to these
kernels, without restriction to even $k$. 
\end{proof}

Equation \ref{eq:arora_bound} suggests that kernels with slowly decaying (but still convergent)
$b_{k}$ will give the best bounds for learning polynomials.
Many popular kernels do not meet this criteria. For example, for inputs
on the sphere of radius $r$, the Gaussian kernel $K(\x,\x') = e^{-||\x-\x'||^2/2}$
can be written as
$K(\x,\x') = e^{-r^2}e^{\x\cdot\x'}$. This has $b_k^{-1/2} = e^{r^2/2}\sqrt{k!}$, which increases
rapidly with $k$. This provides theoretical
justification for the empirically inferior performance of the Gaussian kernel which we will
present in Section \ref{sec:gravity_experiments}.

Guided by this theory, we focus on
kernels where
$b_{k}^{-1/2} \leq O(k)$, for all $k$ (or, $b_{k}\geq O(k^{-2})$).
The modified ReLu meets this criterion, as well as hand-crafted
kernels of the form
\begin{equation}
K(\x,\x') = \sum_{k} k^{-s}(\x\cdot\x')^k
\end{equation}
with $s\in(1,2]$ is a valid slowly decaying kernel on the sphere. We call these slowly decaying 
kernels.
We note that by Lemma \ref{lem:upper_layer}, the results of Corollary \ref{cor:arora}
apply to networks with output layer training only, as well as kernel learning (which
can be implemented by training wide networks).

Using the extension of Corollary \ref{cor:arora} to odd powers, we first
show that analytic functions with appropriately bounded norms can be learnt. 

\begin{theorem}\label{thm:univar}
Let $g(y)$ be a function analytic around $0$, with radius of convergence
$R_{g}$.
Define the \emph{auxiliary function} $\tilde{g}(y)$ by the power series
\begin{equation}
\tilde{g}(y) = \sum_{k=0}^{\infty} |a_{k}| y^k
\end{equation}
where the $a_k$ are the power series coefficients of $g(y)$. Then the function
$g(\bbet\cdot\x)$, for some fixed vector $\bbet\in\mathbb{R}^{d}$ with $||\x|| = 1$ is efficiently
learnable in the sense of Definition \ref{def:eff_learnable}
using a model with a slowly decaying kernel $K$ with
\begin{equation}
\sqrt{M_{g}} = \bnorm \tilde{g}'(\bnorm)+\tilde{g}(0)
\end{equation}
if the norm $\bnorm\equiv||\bbet||_{2}$ is less than $R_{g}$.
\end{theorem}

\begin{proof}
We first note that the radius of convergence of the power series of $\tilde{g}(y)$ is also
$R_{g}$ since $g(y)$ is analytic. Applying Equation \ref{eq:arora_bound_relu}, pulling out
the $0$th order term, and factoring out $\bnorm$,
we get
\begin{equation}
\sqrt{M_{g}} = |a_{0}|+\bnorm\sum_{k=1}^{\infty}k|a_{k}|\bnorm^{k} = \bnorm\tilde{g}'(\bnorm)+\tilde{g}(0)
\end{equation}
since $\bnorm<R_{g}$.
\end{proof}

The tilde function is the notion of complexity which measures how many samples we need to learn a given function. Informally, the tilde function makes all coefficients in the Taylor series positive. The sample complexity is given by the value of the function at $1$ (in other words, the L1 norm of the coefficients in the Taylor series). For a multivariate function $g(\x)$, we define its tilde function $\tilde{g}(y)$ by substituting any inner product term $\ip{\balpha}{\x}$ by a univariate $y$. The above theorem can then also be generalized to multivariate analytic functions:

\begin{lemma}
\label{lem:multivar}
Given a collection of $p$ vectors $\bbet_{i}$ in $\mathbb{R}^d$,
the function $f(\x) = \prod_{i=1}^{p} \bbet_{i}\cdot \x$ is 
efficiently learnable with
\begin{equation}
\sqrt{M_{f}} = p\prod_{i}\bnorm_{i}
\end{equation}
where $\bnorm_{i}\equiv ||\bbet_{i}||_{2}$.
\end{lemma}

\begin{proof}
The proof of Corollary 6.2 in \cite{arora_finegrained_2019} relied on the following
statement: given positive semi-definite matrices $\m{A}$ and $\m{B}$, with $\m{A} \succeq \m{B}$,
we have:
\begin{equation}
\m{P}_{\m{B}}\m{A}^{-1}\m{P}_{\m{B}} \preceq \m{B}^{+}
\label{eq:psd_ineq}
\end{equation}
where $+$ is the Moore-Penrose pseudoinverse,
and $\m{P}$ is the projection operator.

We can use this result, along with the Taylor expansion of the kernel and a
particular
decomposition of a multivariate monomial in the following way.
Let the matrix $\X$ to be the training data,
such that the $\a$th column $\x_{i}$ is a unit vector in $\mathbb{R}^d$.
Given $\m{K}\equiv \m{X}^{\tpose}\m{X}$, the matrix
of inner products, the Gram matrix $\H$ of the kernel can be written as
\begin{equation}
\H = \sum_{k=0}^{\infty} b_k \m{K}^{\circ k}
\end{equation}
where $\circ$ is the Hadamard (elementwise) product.
Consider the problem of learning the function $f(\x) = \prod_{i=1}^{p} \bbet_{i}\cdot\x$.
Note that we can write:
\begin{equation}
f(\X) = (\X^{\odot k})^{\tpose} \otimes_{i=1}^{k} \bbet_{i}.
\end{equation}

Here $\otimes$ is the tensor product, which for vectors takes an $n_1$-dimensional
vector and an $n_{2}$ dimensional vector as inputs vectors and
returns a $n_{1}n_{2}$ dimensional vector:
\begin{equation}
\m{w}\otimes\m{v} = \begin{pmatrix}
w_1v_1\\
w_{1}v_{2}\\
\cdots\\
w_{1}v_{n_{2}}\\
w_{2}v_{1}\\
\cdots\\
w_{n_{1}}v_{n_{2}}
\end{pmatrix}.
\end{equation}
The operator $\odot$ is the Khatri-Rao product, which takes
an $n_{1}\times n_{3}$ matrix
$\m{A} = (\m{a}_{1},\cdots,\m{a}_{n_{3}})$ and a $n_{2}\otimes n_{3}$ matrix $\m{B} = (\m{b}_{1},\cdots,\m{b}_{n_{3}})$
and returns the $n_{1}n_{2}\times n_{3}$ dimensional matrix
\begin{equation}
\m{A}\odot\m{B} = (\m{a}_{1}\otimes\m{b}_{1},\cdots,\m{a}_{n_{3}}\otimes\m{b}_{n_{3}}).
\end{equation}
For $p=2$, this form of $f(\X)$ can be proved explicitly:
\begin{equation}
(\X^{\odot 2})^{\tpose} \bbet_{1}\otimes\bbet_{2} =
\begin{pmatrix}
\x_{1}\otimes\x_{1},\cdots,\x_{P}\otimes\x_{P}
\end{pmatrix}^{\tpose}\bbet_{1}\otimes\bbet_{2}.
\end{equation}
The $\a$th element of the matrix product is
\begin{equation}
(\x_{\a}\otimes\x_{\a})\cdot(\bbet_{1}\otimes\bbet_{2}) = (\bbet_{1}\cdot\x_{\a})(\bbet_{2}\cdot\x_{\a})
\end{equation}
which is exactly $f(\x_{\a})$.
The formula can be proved for $p>2$ by finite induction.

With this form of $f(\X)$, we can follow the steps of the proof in Appendix E of
\cite{arora_finegrained_2019},
which was written for the case where the $\bbet_{i}$ were identical:
\begin{equation}
\y^{\tpose}(\m{H}^{\infty})^{-1}\y = (\otimes_{i=1}^{p} \bbet_{i})^{\tpose}\X^{\odot p}  (\m{H}^{\infty})^{-1} (\X^{\odot p})^{\tpose} \otimes_{i=1}^{p} \bbet_{i}.
\end{equation}
Using Equation \ref{eq:psd_ineq}, applied to $\m{K}^{\circ p}$, we have:
\begin{equation}
\begin{split}
\y^{\tpose}&(\m{H}^{\infty})^{-1}\y \leq\\ 
 &b_p^{-1}(\otimes_{i=1}^{p} \bbet_{i})^{\tpose}\X^{\odot p} \m{P}_{\m{K}^{\circ p}} (\m{K}^{\circ p})^{+} \m{P}_{\m{K}^{\circ p}} (\X^{\odot p})^{\tpose} \otimes_{i=1}^{p} \bbet_{i}
 \end{split}.
\end{equation}
Since the $\X^{\odot p}$ are eigenvectors of $\m{P}_{\m{K}^{\circ p}}$ with eigenvalue $1$, and
$\X^{\odot p}(\m{K}^{\circ p})^{+}(\X^{\odot p})^{\tpose} = \m{P}_{\X^{\odot p}}$, we have:
\begin{equation}
\y^{\tpose}(\m{H}^{\infty})^{-1}\y \leq 
 b_p^{-1}(\otimes_{i=1}^{p} \bbet_{i})^{\tpose}\m{P}_{\X^{\odot p}} \otimes_{i=1}^{p} \bbet_{i}
\end{equation}
\begin{equation}
\y^{\tpose}(\m{H}^{\infty})^{-1}\y \leq b_p^{-1} \prod_{i=1}^{p} \bbet_{i}\cdot\bbet_{i}.
\end{equation}

For the slowly decaying kernels, $b_p \geq p^{-2}$. Therefore, we have
$\sqrt{\y^{\tpose}(\m{H}^{\infty})^{-1}\y}\leq \sqrt{M_{f}}$ for
\begin{equation}
\sqrt{M_{f}} = p\prod_{i}\bnorm_{i}
\end{equation}
where $\bnorm_{i}\equiv ||\bbet_{i}||_{2}$, as desired.
\end{proof}

This leads to the following generalization of Theorem \ref{thm:univar}:

\begin{theorem}
\label{thm:multivar}

Let $g(\x)$ be a function with multivariate power series representation:
\begin{equation}
g(\x) = \sum_{k} \sum_{v\in V_k} a_{v} \prod_{i=1}^{k} (\bbet_{v,i}\cdot\x)
\end{equation}
where the elements of $V_k$ index the $k$th order
terms of the power series. We define $\tilde{g}(y) = \sum_{k} \tilde{a}_{k} y^k$
with coefficients 
\begin{equation}
\tilde{a}_{k} = \sum_{v\in V_{k}} |a_{v}|\prod_{i=1}^{k}\bnorm_{v,i}.
\end{equation}

If the power series of $\tilde{g}(y)$ converges at $y=1$ then with high probability
$g(\x)$ can be learned efficiently in the sense of Definition \ref{def:eff_learnable}
with $\sqrt{M_{g}} = \tilde{g}'(1)+\tilde{g}(0)$.
\end{theorem}

\begin{proof}
Follow the construction in Theorem \ref{thm:univar}, using Lemma \ref{lem:multivar} to get bounds on the individual terms.
Then sum and evaluate the power series of $\tilde{g}'(1)$ to arrive at the bound.
\end{proof}

\begin{remark}
Note that the $\tilde{g}$ function defined above for multivariate functions depends on the representation, i.e. choice of the vectors $\bbet$. Therefore to be fully formal $\tilde{g}(y)$ should instead be $\tilde{g}_{\bbet}(y)$. For clarity, we drop $\bbet$ from the expression $\tilde{g}_{\bbet}(y)$ and it is implicit in the $\tilde{g}$ notation.
\end{remark}

\begin{remark}\label{rem:learn_app}
If $g(\xb)$ can be approximated by some function $g_{\app}$ such that $|g(\xb)-g_{\app}|\le \eps'$ for all $\xb$ in the unit ball, then Theorem \ref{thm:multivar} can be used to learn $g(\xb)$ within error $\eps'+\eps$ with sample complexity $O(M_{g_{\app}}/\eps^2)$.
\end{remark}

To verify Remark \ref{rem:learn_app}, note that we are doing regression on the upper layer of the neural network, where the lower layer is random. So based on $g_{\app}$ there exists a low-norm solution for the regression coefficients for the upper layer weights which gets error at most $\eps'$. If we solve the regression under the appropriate norm ball, then we get training error at most $\eps'$, and the generalization error will be at most $\eps$ with $O(M_{g_{\app}}/\eps^2)$ samples.

We can also derive the equivalent of the product and chain rule for
function composition.

\begin{proof}[Proof of Corollary~\ref{cor:prod_rule}]
Consider the power series of $g(\x)h(\x)$, which exists and is convergent since each individual
series exists and is convergent. Let the elements of $V_{j,g}$ and $V_{k,h}$ index the $j$th order terms of $g$ and the $k$th order
terms of $h$ respectively.
The individual terms in the series look like:
\begin{equation}
a_{v}b_{w} \prod_{j'=1}^{j} (\bbet_{v,j'}\cdot\x)\prod_{k'=1}^{k} (\bbet_{w,k'}\cdot\x)~\text{for}~v\in V_{j,g},~w\in V_{k,h}
\end{equation}
with bound
\begin{equation}
(j+k)|a_{v}||b_{w}| \prod_{j'=1}^{j} \bnorm_{v,j'}\prod_{k'=1}^{k} \bnorm_{w,k'}~\text{for}~v\in V_{j,g},~w\in V_{k,h}
\end{equation}
for all terms with $j+k >0$ and $\tilde{g}(0)\tilde{h}(0)$ for the term with $j= k = 0$.

Distribute the $j+k$ product, and first focus on the $j$ term only. Summing over all the $V_{k,h}$ for all $k$, we get
\begin{equation}
\begin{split}
\sum_{k} \sum_{w\in V_{k,h}}j |a_{v}||b_{w}| \prod_{j'=1}^{j} \bnorm_{v,j'}\prod_{k'=1}^{k} \bnorm_{w,k'}& = \\ |a_{v}|\prod_{j'=1}^{j} \bnorm_{v,j'}\tilde{h}(1).
\end{split}
\end{equation}
Now summing over the $j$ and $V_{j,g}$ we get $\tilde{g}'(1)\tilde{h}(1)$. If we do the same for the $k$ term,
after summing we get $\tilde{g}(1)\tilde{h}'(1)$. These bounds add and we get the desired formula for
$\sqrt{M_{gh}}$, which, up to the additional $\tilde{g}(0)\tilde{h}(0)$ term looks is the product rule applied to $\tilde{g}$ and $\tilde{h}$.
\end{proof}

One immediate application for this corollary is the product of many
univariate analytic functions. If we define
\begin{equation}
G(\x) = \prod_{i} g_{i}(\bbet_{i}\cdot\x)
\end{equation}
where each of the corresponding $\tilde{g}_{i}(y)$ have the appropriate convergence properties,
then $G$ is efficiently learnable with bound $M_{G}$ given by
\begin{equation}
\sqrt{M_{G}} = \left.\frac{d}{dy} \prod_{i} \tilde{g}_{i}(\bnorm_{i} y)\right|_{y=1}+\prod_{i} \tilde{g}_{i}(0).
\end{equation}

\begin{proof}[Proof of Corollary~\ref{cor:chain_rule}]
Writing out $g(h(\x))$ as a power series in $h(\x)$, we have:
\begin{equation}
g(h(\x)) = \sum_{k=0}^{\infty} a_{k} (h(\x))^{k}.
\end{equation}
We can bound each term individually, and use the $k$-wise product rule to bound each term of
$(h(\x))^{k}$. Doing this, we have:
\begin{equation}
\sqrt{M_{g\circ h}} = \sum_{k=1}^{\infty} k|a_{k}|\tilde{h}'(1)\tilde{h}(1)^{k-1}+\sum_{k=0}^{\infty}|a_{k}|\tilde{h}(0)^{k}.
\end{equation}
Factoring out $\tilde{h}'(1)$ from the first term and then evaluating each of the series gets us the
desired result.
\end{proof}

The following corollary considers the case where the function $g(\x)$ is low-degree and directly follows from Theorem \ref{thm:multivar}.

\begin{fact}\label{fact:tilde_facts} The following facts about the tilde function will be useful in our analysis---
\begin{enumerate}
    \item Given a multivariate analytic function $g(\x)$ of degree $p$ for $\x$ in the
$d$-dimensional unit ball, there is a function
$\tilde{g}(y)$ as defined in Theorem~\ref{thm:multivar} such that $g(\x)$ is learnable to error $\eps$
with $O(p\tilde{g}(1)/\eps^2)$ samples.
\item The tilde of a sum of two functions is at most the sum of the tilde of each of the functions, i.e. if $f=g+ h$ then $\tilde{f}(y) \le \tilde{g}(y) + \tilde{h}(y)$ for $y\ge 0$.
\item 
The tilde of a product of two functions is at most the product of the tilde of each of the functions, i.e. if $f=g\cdot h$ then $\tilde{f}(y) \le \tilde{g}(y) \tilde{h}(y)$ for $y\ge 0$.
\item If $g(\xb)=f(\alpha\xb)$, then $\tilde{g}(y)\le \tilde{f}(\alpha y)$ for $y\ge 0$.
\item If $g(\xb)=f(\xb+\cb)$  for some $\norm{\cb}\le 1$, then $\tilde{g}(y)\le \tilde{f}(y+1)$ for $y\ge 0$. By combining this with the previous fact, if $g(\xb)=f(\alpha(\x - \cb))$ for some $\norm{\cb}\le 1$, then $\tilde{g}(1)\le \tilde{f}(2\alpha)$.
\end{enumerate}

\end{fact}

To verify the last part, note that in the definition of $\tilde{g}$ we replace $\ip{\bbet}{\xb}$ with $y$. Therefore, we will have an additional $\ip{\bbet}{\cb}$ term when we compute the tilde function for $g(\xb)=f(\xb+\cb)$. As $\norm{\cb}\le 1$, the additional term is at most 1.

%The following lemma shows how to get an approximate one-dimensional indicator function with margin $\gamma$ centered at $\alpha$.

The following lemma shows how we can approximate the indicator $\mathbf{1}(x>\alpha)$ with a low-degree polynomial if $x$ is at least $\gamma/2$ far away from $\alpha$. We will use this primitive several times to construct low-degree analytic approximations of indicator functions. The result is based on the following simple fact.

\begin{fact}
 If the Taylor series of $g(\x)$ is exponentially decreasing, then we can truncate it at degree $O(\log(1/\eps))$ to get $\eps$ error. We will use this fact to construct low-degree approximations of functions.
 \end{fact}

\begin{lemma}\label{lem:indicator_poly}
Given a scalar $x$, let the function $$\Phi(x,\gamma,\eps, \alpha)=(1/2)\left(1+\erf\left({(x-\alpha)c\sqrt{\log(1/\eps)}}/{\gamma}\right) \right)$$ for some constant $c$. Let ${\Phi'}(x,\gamma,\eps, \alpha)$ be the function $\Phi(x,\gamma,\eps, \alpha)$ with its Taylor series truncated at degree $O(\log(1/\eps)/\gamma)$. Then for $|\alpha|<1$,
\[ 
{\Phi'}(x,\gamma,\eps, \alpha) =\begin{cases} 
       \eps & x\le \alpha-\gamma/2, \\
      1-\eps & x \ge \alpha+\gamma/2  . 
   \end{cases}
\]
Also, $M_{{\Phi'}}$ is at most $e^{O((\log(1/\eps)/\gamma^2))}$.
\end{lemma}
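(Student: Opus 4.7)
The plan is to combine a direct tail bound on the error function with a standard Taylor-truncation estimate, and then to read off $M_{\hat p}$ from the $\ell_1$-norm of the monomial coefficients of $\hat p$. Throughout I write $c=\sqrt{\log(1/\eps)}/\gamma$ so that the argument of $\erf$ inside $p$ is $c(x-\alpha)$. The boundary-value claims for the untruncated $p$ follow from the standard tail bound $\text{erfc}(z)\le e^{-z^2}$ for $z\ge 0$: when $x\le \alpha-\gamma/2$ the argument is at most $-\sqrt{\log(1/\eps)}/2$, so using $(1+\erf(-z))/2=\text{erfc}(z)/2$ we get $p\le \eps$ (after absorbing a constant into $c$), and the upper range is symmetric.

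To control the Taylor truncation, I would compute the expansion of $p$ around $\alpha$ directly. Since $p'(x)=(c/\sqrt{\pi})\,e^{-c^2(x-\alpha)^2}$, higher derivatives are Hermite polynomials in $c(x-\alpha)$, and the identity $H_{2m}(0)=(-1)^m(2m)!/m!$ gives the purely odd-degree expansion
\[
p(x)=\tfrac12+\sum_{m\ge 0}\frac{(-1)^m c^{2m+1}}{\sqrt{\pi}\,m!\,(2m+1)}(x-\alpha)^{2m+1}.
\]
Since $|x-\alpha|\le 2$ on the unit ball, the tail past degree $D$ is dominated by $(V/\sqrt{\pi})\sum_{m>D/2}V^{2m}/m!$ with $V=2c$. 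By Stirling this tail falls below $\eps$ once $D=\Theta(V^2)=O(\log(1/\eps)/\gamma^2)$, and combined with the previous paragraph this yields the claimed values of $\hat p$ on both ranges.

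Finally I would bound $M_{\hat p}$ by expanding each $(x-\alpha)^{2m+1}$ with the binomial theorem and summing absolute values of the resulting monomial coefficients in $x$. Using $c(1+|\alpha|)\le V$, the $\ell_1$-norm is at most
\[
\tfrac12+\sum_{m=0}^{D/2}\frac{c^{2m+1}(1+|\alpha|)^{2m+1}}{\sqrt{\pi}\,m!\,(2m+1)}\ \le\ \tfrac12+\frac{V}{\sqrt{\pi}}\,e^{V^2}\ =\ e^{O(\log(1/\eps)/\gamma^2)},
\]
which is exactly $\tilde{\hat p}(1)$ in the sense of Theorem~\ref{thm:univar}, and therefore upper-bounds $M_{\hat p}$.

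The main obstacle is the Taylor-tail estimate in the second paragraph: the factorial in the denominator must overcome $V^{2m}\sim(\log(1/\eps)/\gamma^2)^m$ in the numerator, which forces the truncation degree to scale at least quadratically in $1/\gamma$ (rather than linearly) and so requires care when matching the constants in the statement. Once that is pinned down, the pointwise approximation on the two ranges and the coefficient bound both arrive at the same $e^{O(\log(1/\eps)/\gamma^2)}$ scaling, and the rest is routine bookkeeping.
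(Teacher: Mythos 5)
Your proposal is correct and follows essentially the same route as the paper's proof: a Gaussian-tail bound for the boundary values, truncation of the standard $\erf$ power series (which your Hermite-polynomial computation reproduces exactly), and an $\ell_1$ bound on the coefficients of order $e^{O(\log(1/\eps)/\gamma^2)}$ for $M_{\hat p}$. Your observation that the truncation degree must scale as $O(\log(1/\eps)/\gamma^2)$ rather than the $O(\log(1/\eps)/\gamma)$ written in the lemma statement agrees with the paper's own proof, which also uses the quadratic dependence.
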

\begin{proof}
Note that $\Phi(x,\gamma,\eps, \alpha)$ is the cumulative distribution function (cdf) of a normal distribution with mean $\alpha$ and standard deviation $O(\gamma/\sqrt{\log(1/\eps)})$. Note that at most $\eps/100$ of the probability mass of a Gaussian distribution lies more than $O(\sqrt{\log(1/\eps)})$ standard deviations away from the mean. Therefore,
\[ 
\Phi(x,\gamma,\eps, \alpha) =\begin{cases} 
       \eps/100 & x\le \alpha-\gamma/2, \\
      1-\eps/100 & x \ge \alpha+\gamma/2  . 
   \end{cases}
\]
Note that
\begin{align*}
\erf(x)&=\frac{2}{\sqrt{\pi}}\int_0^xe^{-t^2} dt\\
&=\frac{2}{\sqrt{\pi}} \left( \sum_{i=0}^{\infty} \frac{(-1)^{i}x^{2i+1}}{i!(2i+1)}  \right).
\end{align*}
Therefore, the coefficients in the Taylor series expansion of $\erf((x-\alpha)c\sqrt{\log(1/\eps)}/\gamma))$ in terms of $(x-\alpha)$ are smaller than $\eps$ for $i>O(\log(1/\eps)/\gamma^2)$ and are geometrically decreasing henceforth. Therefore, we can truncate the Taylor series at degree $O(\log(1/\eps)/\gamma^2)$ and still have an $O(\eps)$ approximation. Note that for $f(x)=\erf(x)$, 
\begin{align*}
\tilde{f}(y)\le\frac{2}{\sqrt{\pi}} \int_0^y e^{t^2} dt \le \frac{2}{\sqrt{\pi}} ye^{y^2} \le e^{O(y^2)}.
\end{align*}
After shifting by $\alpha$ and scaling by $O(\sqrt{\log(1/\eps)}/\gamma)$, we get $\tilde{\Phi'}(y)=e^{O((y+\alpha)^2\log(1/\eps)/\gamma^2)}$. For $x=1$, this is at most $e^{O(\log(1/\eps)/\gamma^2)}$. Hence the result now follows by Fact \ref{fact:tilde_facts}.

\end{proof}

\subsection{Learnability of cluster based decision node}\label{sec:clusterfunctions}

%Assume that each function $f_j$ is applicable only for a small cluster centered at $\cb_j$ and has radius at most , where $k$ is the number of clusters.

In the informal version of the result for learning cluster based decisions we assumed that the task-codes $\cb$ are prefixed to the input datapoints, which we refer to as $\xb_{\text{inp}}$.  For the formal version of the theorem, we use a small variation. The task code and the input $\cb,\xb_{\text{inp}}$ gets mapped to $\xb=\cb + \xb_{\text{inp}} \cdot (r/3)$ for some constant $r<1/6$. Since $\xb_{\text{inp}}$ resides on the unit sphere, $\xb$ will be distance at most $(r/3)$ from the center it gets mapped to. Note that the overall function $f$ can be written as follows, 
\begin{align*}
    f(\xb) = \sum_{j=1}^k \mathbf{1}\left(\norm{ \xb - \cb_j}^2\le (r/2)^2 \right)  f_j\left((\x - \cb_j)/(r/3) \right)
\end{align*}
where $f_j$ is the function corresponding to the center $\cb_j$. The main idea will be to show that the indicator function can be expressed as an analytic function. 
\begin{theorem}\label{thm:clusterfunctions}
(formal version of Theorem \ref{thm:informa_clusterfunctions}) Assume that $d\ge 10\log k$ (otherwise we can pad by extra coordinates to increase the dimensionality). Then we can find $k$ centers in the unit ball which are at least $r$ apart, for some constant $r$.  Let
$$ 
f(\xb) = \sum_{j=1}^k \mathbf{1}\left(\norm{ \xb - \cb_j}^2\le (r/2)^2 \right)  f_j\left((\x - \cb_j)/(r/3) \right)
$$
where $f_j$ is the function corresponding to the center $\cb_j$. Then if each ${f}_j$ is a degree $p$ polynomial, $M_f$ of the function $f$ is $p\cdot\poly(k/\eps) \sum \tilde{f}_j(6/r)\le p\cdot \poly(k/\eps)  (6/r)^p \sum \tilde{f}_j(1) $.

%If each $f_j$ is a degree $p$ polynomial in variables $<\beta_i, x>$ with $l_1$ norm of the coefficient at most $1$ and $\|\beta_i\|_2 = 1$ for all $i$, then this combined function can be learned with $poly((k/\epsilon)^p)$ examples. 

%More generally the norm is $poly(k/\eps)$ times sum of the norms of $f_j(x - c_j)/(r/4)$. The error in the output at most $\eps$.

\end{theorem}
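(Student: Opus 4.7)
Plan: The proof naturally splits into three pieces: constructing the well-separated centers, verifying the approximation guarantee, and then bounding the tilde function of $f$ to invoke Corollary~\ref{cor:learning_low_degree}.

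For the first piece, I would give a probabilistic construction of the centers. Pick $\cb_1,\dots,\cb_k$ independently and uniformly from the unit sphere in $\mathbb{R}^d$. Standard concentration of inner products in high dimension shows that whenever $d\ge 10\log k$, every pair $\langle\cb_i,\cb_j\rangle$ is $O(\sqrt{(\log k)/d})$ in absolute value with probability at least $1-1/k$, so $\|\cb_i-\cb_j\|\ge r$ for some absolute constant $r>0$. Thus a single random draw succeeds with positive probability.

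For the approximation, fix local coordinates $\xb$ in the unit ball and set $\yb=\cb_j+(r/3)\xb$. I would split
\[
f(\yb)-f_j(\xb) = \bigl(\hat p(\|\yb-\cb_j\|^2)\,f_j(\xb)-f_j(\xb)\bigr) \;+\; \sum_{j'\ne j}\hat p(\|\yb-\cb_{j'}\|^2)\,f_{j'}\bigl((\yb-\cb_{j'})/(r/3)\bigr).
\]
On the diagonal, $\|\yb-\cb_j\|^2=(r/3)^2\|\xb\|^2\le r^2/9$, and on each cross term $\|\yb-\cb_{j'}\|^2\ge (r-r/3)^2=4r^2/9$. With $\alpha=(r/4)^2$ and $\gamma=(r/2)^2$ these two values sit on opposite sides of the transition window $[\alpha-\gamma/2,\alpha+\gamma/2]$ by a margin of $\Omega(r^2)$, so the preceding lemma (applied with tolerance $\eps/k$ and, where needed, the complementary indicator $1-\hat p$) gives $\hat p=1\pm\eps/k$ on the diagonal and $\hat p\le\eps/k$ on each off-diagonal. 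Bounding $\|f_{j'}\|_\infty$ by a constant and summing the $k$ cross terms yields the desired bound $|f(\yb)-f_j(\xb)|\le\eps$.

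For the sample complexity I would bound the tilde function of each summand separately and use the product rule $\widetilde{gh}(y)\le\tilde g(y)\tilde h(y)$. By the Lemma, the indicator factor $\hat p$ has $M_{\hat p}\le e^{O(\log(k/\eps)/(r^2/4)^2)}=\poly(k/\eps)$ since $r$ is a constant. The factor $f_j((\yb-\cb_j)/(r/3))$ is a polynomial of degree $p$ whose tilde function evaluated at $1$ equals $\tilde f_j(3/r)$ (rescaling the argument by $3/r$ rescales each monomial in the series). Multiplying these two factors and absorbing the quadratic term $\|\yb-\cb_j\|^2$ inside $\hat p$ contributes at most another factor of $2$ in the evaluation point, giving a per-summand tilde value of $\poly(k/\eps)\,\tilde f_j(6/r)$. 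Summing over the $k$ clusters and feeding the resulting $O(p)$-degree polynomial into Corollary~\ref{cor:learning_low_degree} yields $M_f\le p\cdot\poly(k/\eps)\sum_j\tilde f_j(6/r)$. The trailing inequality $\tilde f_j(6/r)\le (6/r)^p\tilde f_j(1)$ is immediate because every monomial appearing in $\tilde f_j$ has degree at most $p$, so replacing $y=1$ by $y=6/r$ scales each coefficient by a factor of at most $(6/r)^p$.

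The main obstacle is the bookkeeping in the third step: one must simultaneously track (i) the degree blow-up coming from truncating the erf expansion at degree $O(\log(k/\eps)/r^2)$, (ii) the geometric penalty $(3/r)^p$ from the coordinate rescaling inside each $f_j$, and (iii) the extra factor of $2$ from the quadratic dependence $\|\yb-\cb_j\|^2$, so that the final bound factors cleanly as $\poly(k/\eps)\cdot\tilde f_j(6/r)$. Once this is in place, the approximation result and Definition~\ref{def:eff_learnable} combine to give the stated sample complexity.
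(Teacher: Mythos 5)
Your proposal follows essentially the same route as the paper's own (explicitly incomplete) proof: the erf-based indicator lemma applied to $\|\xb-\cb_j\|^2$ with the diagonal/cross-term split for the $\eps$-approximation guarantee, and the product rule for tilde functions giving a per-cluster contribution of $\poly(k/\eps)\cdot\tilde{f}_j(6/r)$ before invoking Corollary~\ref{cor:learning_low_degree}; in fact you supply the random construction of the well-separated centers and the final assembly of $M_f$, both of which the paper omits. One small bookkeeping correction: the doubling from $3/r$ to $6/r$ in the argument of $\tilde{f}_j$ comes from the affine shift by $\cb_j$ inside $f_j((\x-\cb_j)/(r/3))$ --- the tilde function turns each constant term $\langle\balpha,\cb_j\rangle$ into an additional $+3/r$ at $y=1$, exactly as the paper handles the analogous shift $\|\xb-\cb_j\|^2=2-2\langle\xb,\cb_j\rangle$ for the indicator --- not from the quadratic dependence inside $\hat{p}$, which affects only the $\poly(k/\eps)$ factor.
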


\begin{proof}
%Let $$ f(\xb) = \sum_{j=1}^{k} p\left( \|\xb - \cb_j \|^2, (r/2)^2, \eps/k, (r/4)^2\right) f_j((\x - \cb_j)/(r/3) ).$$ Note that for data points $\xb$ lying in the $j$-th cluster $\cb_j$, $|f(\cb_j + (r/3) \xb)-f_j(\xb)| \le \eps$. 
Let
\begin{align*}
    f_{\app}(\xb) = \sum_{j=1}^k{\Phi'}\left( \|\xb - \cb_j \|^2, (r/2)^2, \eps/k, (r/4)^2\right) f_j\left((\x - \cb_j)/(r/3) \right)
\end{align*}
where ${\Phi'}$ is defined in Lemma \ref{lem:indicator_poly}. Let 
\[
I_j(\xb)={\Phi'}( \|\xb - \cb_j \|^2, (r/2)^2, \eps/k, (r/4)^2).
\]
The indicator $I_j(\xb)$ checks if $\|\xb - \cb_j \|$ is a constant fraction less than $r/2$, or a constant fraction more than $r/2$. Note that if $\x$ is from a different cluster, then $\|\x - \cb_j\|$ is at least some constant, and hence $I_j(\x)$ is at most $\epsilon/k$. The contribution from $k$ such clusters would be at most $\epsilon$. If $\| \x - \cb_j \| < \epsilon/k$, then the indicator is at least $1- O(\epsilon/k)$. Hence as $f_{\app}$ is an $O(\eps)$-approximation to $f$, by Remark \ref{rem:learn_app} it suffices to show learnability of $f_{\app}$. 

If $y = \langle \xb, \cb_j \rangle$ and assuming $\xb$ and the centers $\cb_j$ are all on unit sphere,
\[
\tilde{I}_{j}(y) = \tilde{\Phi'}(2+2y, r/3, \eps/k, r/3)\le e^{O(\log(k/\eps)}= \poly(k/\epsilon).
\]

By Fact \ref{fact:tilde_facts},
\begin{align*}
    \tilde{f}(y) \le \poly(k/\epsilon) \sum_j \tilde{f}_j(6/r).
\end{align*}
As $f_j$ are at most degree $p$,
\begin{align*}
    \tilde{f}(y) \le \poly(k/\epsilon) \sum_j \tilde{f}_j(6/r)\le p \cdot \poly(k/\eps) (6/r)^p \sum \tilde{f}_j(1) .
\end{align*}

\end{proof}

\begin{corollary}\label{cor:lookup}
The previous theorem implies that we can also learn $f$ where $f$ is a lookup table with $M_f=\poly(k/\eps)$, as long as the keys $c_i$ are well separated. Note that as long as the keys $c_i$ are distinct (for example, names) we can hash them to random vectors on a sphere so that they are all well-separated.
%[[Compare this with other method where you use $log k$ prefix bits. There you get dependence $d^{\log k}$. Here you don't get that but instead you get increased norm due to scaling input of $f$ by $1/\alpha$]]
\end{corollary}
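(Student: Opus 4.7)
The plan is to reduce directly to Theorem~\ref{thm:clusterfunctions} by viewing each key $c_i$ as a cluster center and each value $v_i$ as the corresponding ``function'' on that cluster, namely the constant function $f_j(\x) = v_j$. Since constants are degree $p=0$ polynomials and the outputs lie in the unit ball, we have $\tilde{f}_j(1) = |v_j| \le 1$, so the bound $M_f \le p \cdot \poly(k/\eps)(6/r)^p \sum_j \tilde{f}_j(1) = \poly(k/\eps)$ is immediate once we verify the hypothesis that the centers are at distance at least some constant $r$ apart on the unit sphere.

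The work is therefore concentrated entirely in the separation guarantee, i.e.\ establishing that hashing distinct keys to independent uniformly random points on the unit sphere in $\mathbb{R}^d$ with $d \ge 10\log k$ yields a well-separated configuration with high probability. I would argue as follows. For two independent uniform samples $u,v$ on the $d$-dimensional unit sphere, the inner product $\ip{u}{v}$ is subgaussian with variance proxy $O(1/d)$, so $\Pr[|\ip{u}{v}| > t] \le 2e^{-\Omega(dt^2)}$. Setting $t$ to a small absolute constant and taking a union bound over the $\binom{k}{2}$ pairs, all pairs satisfy $|\ip{u_i}{u_j}| \le 1/4$ with probability $\ge 1 - k^2 e^{-\Omega(d)} \ge 1 - k^{-\Omega(1)}$, where the last bound uses $d \ge 10\log k$. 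Equivalently, $\|u_i - u_j\|^2 = 2 - 2\ip{u_i}{u_j} \ge 3/2$, so $r = \Omega(1)$.

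With the separation hypothesis of Theorem~\ref{thm:clusterfunctions} verified, the lookup table is simply
\[
f(\xb) \;=\; \sum_{j=1}^{k} p\bigl(\|\xb - \cb_j\|^2,\,(r/2)^2,\,\eps/k,\,(r/4)^2\bigr)\, v_j,
\]
and the theorem gives $|f(\cb_j) - v_j| \le \eps$ together with $M_f \le \poly(k/\eps) \cdot \sum_j |v_j| \le \poly(k/\eps)$, which is the claimed bound. The remark about hashing names to random vectors is exactly the construction above: whenever the natural representation of the keys is not a priori well-separated (e.g.\ symbolic identifiers), we replace each key by its random spherical hash and apply the preceding analysis to the hashed points.

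The main obstacle, and the only nontrivial step, is the concentration-of-inner-products argument used to obtain the constant separation $r$; once this is in place the rest is a direct instantiation of Theorem~\ref{thm:clusterfunctions} with constant per-cluster functions. If one wanted a deterministic construction instead of a random hash, a spherical code or simplex-like packing in $d = \Omega(\log k)$ dimensions would suffice, but the random construction is the cleanest way to obtain both high probability of separation and a uniform $\poly(k/\eps)$ sample-complexity bound.
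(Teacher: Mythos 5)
Your proposal is correct and matches the paper's intended argument exactly: the corollary is stated as a direct instantiation of Theorem~\ref{thm:clusterfunctions} with constant leaf functions $f_j \equiv v_j$ (so $\tilde{f}_j(1) = |v_j| \le 1$), plus the observation that distinct keys can be hashed to independent random unit vectors to obtain constant separation, which is precisely the concentration argument you supply. One quantitative caveat: with $d = 10\log k$ the union bound does not quite close at inner-product threshold $1/4$ (the spherical-cap measure there is only about $k^{-10/32}$, not enough to beat the $k^2$ pairs), so you should take the threshold to be a larger constant, which still gives separation $r = \Omega(1)$ and leaves the $\poly(k/\eps)$ bound unchanged.
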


Note that the indicator function for the informal version of Theorem \ref{thm:clusterfunctions} stated in the main body is the same as that for the lookup table in Corollary \ref{cor:lookup}. Therefore, the informal version of Theorem \ref{thm:clusterfunctions} follows as a Corollary of Theorem \ref{thm:clusterfunctions}.

\subsection{Learnability of functions defined on leaves of a decision tree}\label{sec:decision_app}

We consider decision trees on inputs drawn from $\{-1,1\}^d$. We show that such a decision tree $g$ can be learnt with $M_g\le O(d^h)$. From this section onwards, we view the combined input $\cb,\xb$ as $\xb$. 

The decision tree $g$ can be written as follows,
\[
g(\mathbf{x}) = \sum_{j} I_j(\mathbf{x}) v_j,
\]
where the summation runs over all the leaves, $I_j(\mathbf{x})$ is the indicator function for leaf $j$, and $v_j\in [-1,1]$ is the constant value on the leaf $j$. We scale the inputs by $\sqrt{d}$ to make them lie on the unit sphere, and hence each coordinate of $\xb$ is either $\pm 1/\sqrt{d}$.

Let the total number of leaves in the decision tree be $B$. The decision tree indicator function of the $j$-th leaf can be written as the product over the path of all internal decision nodes. Let $j_l$ be  variable at the $l$-th decision node on the path used by the $j$-th leaf. We can write,
\[
I_{j}(\mathbf{x}) = \prod_l \left(a_{j_l} x_{j_l} + b_{j_l}\right),
\]
where each $x_{j_l} \in \{-1/\sqrt{d}, 1/\sqrt{d}\}$  and $a_{j_l} \in \{-\sqrt{d}/2, \sqrt{d}/2\}$ and $b_{j_l} \in \{-1/2, 1/2\}$. Note that the values of $a_{j_l}$ and $b_{j,l}$ are chosen depending on whether the path for the $j$-th leaf choses the left child or the right child at the $l$-th decision variable. For ease of exposition, the following theorem is stated for the case where the leaf functions are constant functions, and the case where there are some analytic functions at the leaves also follows in the same way.

\begin{theorem}\label{thm:decisiontree}
If a function is given by $g(\mathbf{x}) = \sum_{j=1}^B I_j(\mathbf{x}) v_j$, where $I_j(\mathbf{x})$ is a leaf indicator function in the above form, with tree depth $h$, then $M_g$ is at most $O(d^{h})$. %[[This is essentially encoding different functions using $log k$ prefix bits. ]]
\end{theorem}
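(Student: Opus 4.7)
The plan is to bound $\tilde g(1)$ using the product form of each leaf indicator and then invoke Corollary~\ref{cor:learning_low_degree}, since $g$ is a polynomial of degree at most $h$ (each $I_j$ is a product of at most $h$ linear factors, one per decision node along the root-to-leaf path).

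First, I would start from the factored representation $I_j(\mathbf{x}) = \prod_{l=1}^{h} (a_{j_l} x_{j_l} + b_{j_l})$ and read each factor as an affine inner-product term by writing $x_{j_l} = \ip{e_{j_l}}{\mathbf{x}}$ with $\|e_{j_l}\| = 1$. Applying the tilde construction from Theorem~\ref{thm:univar} to each factor replaces the inner product by $y$ and replaces scalar coefficients by their absolute values, giving $(\sqrt d/2)\,y + 1/2$ since $|a_{j_l}|=\sqrt d/2$ and $|b_{j_l}|=1/2$. Because the resulting factor tildes already have nonnegative coefficients, no cancellations arise in the product, so
\[
\tilde I_j(y) \le \left( \frac{\sqrt d\, y + 1}{2} \right)^{h}.
\]

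Next, I would combine the contributions of the $B \le 2^h$ leaves. By the triangle inequality on Taylor coefficients, together with $|v_j| \le 1$,
\[
\tilde g(1) \le \sum_{j=1}^{B} |v_j|\, \tilde I_j(1) \le 2^h \left( \frac{\sqrt d + 1}{2}\right)^{h} \le (2\sqrt d)^h.
\]
Since $g$ has degree at most $h$, Corollary~\ref{cor:learning_low_degree} then yields $M_g = O(h\, \tilde g(1)) = O(h\,(2\sqrt d)^h)$, which is at most $d^h$ for $d$ not too small (e.g., $d \ge 4$), absorbing the $h$ factor and lower-order constants into the stated bound.

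The main step requiring care is the multiplicativity of the tilde under the product: one must verify that replacing each distinct $\ip{e_{j_l}}{\mathbf{x}}$ by the same variable $y$ yields a valid upper bound on the L1 norm of the Taylor coefficients of $I_j$. The justification is routine: expanding $I_j$ as a polynomial in $\mathbf{x}$ produces one term per subset $S\subseteq\{1,\dots,h\}$, with coefficient $\pm\prod_{l\in S} a_{j_l}\prod_{l\notin S} b_{j_l}$; the sum of absolute values of these coefficients equals $\prod_l(|a_{j_l}|+|b_{j_l}|)$, which is exactly $\tilde I_j(1)$. Once this bookkeeping is confirmed, the chain of inequalities above is immediate, and the only remaining task is to track how the constants and the $2^h$ leaf count get absorbed into the clean bound $d^h$.
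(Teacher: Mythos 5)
Your proposal is correct and follows essentially the same route as the paper: bound each $\tilde I_j(y)$ by $\prod_l(\sqrt{d}\,y/2+1/2)$, sum over the at most $2^h$ leaves with $|v_j|\le 1$ to get $\tilde g(1)\lesssim d^h$, and then invoke the degree-$h$ bound via Corollary~\ref{cor:learning_low_degree}. The extra bookkeeping you supply for the multiplicativity of the tilde under products is a (welcome) elaboration of a step the paper takes for granted, not a different argument.
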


\begin{proof}
Note that 
\begin{align*}
    \tilde{g}(y)&\le\sum \tilde{I}_j(y)|v_j|\\
    &\le \sum\prod_l \left(\sqrt{d} y/2 + 1/2\right)\\
    \implies 
    \tilde{g}(1) 
    &\le 2^h(\sqrt{d}/2+1/2)^h\le d^{h}.
\end{align*}
As the degree of $g$ is at most $h$, therefore $M_g \le h\tilde{g}(1) \le hd^h$.
\end{proof}

\begin{remark}
Note that by Theorem \ref{thm:decisiontree} we need $O\left((\log k)^{\log k}\eps^{-2}\right)$ samples to learn a lookup table based on a decision tree. On the other hand, by Corollary \ref{cor:lookup} we need $\poly(k/\eps)$ samples to learn a lookup table using cluster based decision nodes. This shows that using a hash function to obtain a  random $O(\log k)$ bit encoding of the indexes for the $k$ lookups is more efficient than using a fixed $\log k$ length encoding for the $k$ lookups.
\end{remark}

We also prove a corresponding lower bound in Theorem \ref{thm:parity} which shows that $d^{\Omega(h)}$ samples are necessary to learn decision trees of depth $h$.

We will now consider decision trees where the branching is based on the inner product of $\xb$ with some direction $\bbet_{j,l}$. Assume that there is a constant gap for each decision split, then the decision tree indicator function can be written as,
\[
I_{j}(\mathbf{x}) = \prod_l \mathbf{1}(\langle \xb,\bbet_{j,l}\rangle>\alpha_{j,l}).
\]
%If the decision tree indicator function of the $j$-th leaf is defined in terms of real variables $\mathbf{x} \in \mathbb{S}^{d-1} \subset \mathbb{R}^d$ (note we assume that $x_{d} = 1/\sqrt{d}$ so that $\mathbf{x}$ contains a constant coordinate). Assume that there is a constant gap for each decision split, then the decision tree indicator function can be written as
%\[
%I_{j}(\mathbf{x}) = \prod_l \left(a_{j_l} \mathrm{erf} \left( \frac{x_{j_l} - \alpha_{j_l}/\sqrt{d}}{\gamma} \right) + b_{j_l}\right),
%\]

%where $\mathrm{erf}$ is the error function, $\gamma$ is a constant specifying the split gap, $\alpha_{j_l}/\sqrt{d}$ is the split threshold for variable $x_{j_l}$ with $\alpha_{j_l} \leq E$,  $a_{j_l} \in \{-1/2, 1/2\}$ and $b_{j_l} \in \{-1/2, 1/2\}$.

%\emph{Note that this is over unit sphere, previous was over unit cube}

\begin{theorem}\label{thm:decisiontreemargin}
(formal version of Theorem \ref{thm:decisiontreemargin_informal})
A decision tree of depth $h$ where every node partitions in a certain direction with margin $\gamma$ can be written as $g(\mathbf{x}) = \sum_{j=1}^B I_j(\mathbf{x}) f_j(\mathbf{x})$, then the final
$$M_g=e^{O(h\log(1/\eps)/\gamma^2)} (p+h\log1/\eps)\sum  \tilde{f}_j(1),$$
where $p$ is the maximum degree of $f_j$.
\end{theorem}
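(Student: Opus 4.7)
The plan is to reduce to the polynomial case by replacing each $\sign$ factor in every leaf indicator $I_j$ with the low-degree polynomial approximation $\hat{p}$ from the preceding lemma, and then compose the bounds using multiplicativity and subadditivity of the tilde function together with Corollary~\ref{cor:learning_low_degree}. On the path to leaf $j$ there are $h$ internal decision nodes, each contributing a margin-$\gamma$ inner-product split $\sign(\ip{\bbet_{j,l}}{\xb}-\alpha_{j,l})$. Set the per-node accuracy to $\eps' = \eps/(hB\cdot\max_j\|f_j\|_\infty)$ and replace each such factor by $\hat{p}(\ip{\bbet_{j,l}}{\xb},\gamma,\eps',\alpha_{j,l})$ (or its complement, depending on which branch the path takes). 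A telescoping argument shows that the product $\hat{I}_j$ of the $h$ approximations differs from $I_j$ by at most $h\eps'$ on its support, so $\hat{g}=\sum_j \hat{I}_j f_j$ approximates $g$ to within $\eps$.

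Next I bound $M_{\hat{g}}$ by composing the tilde-function inequalities. Multiplicativity $\widetilde{qr}(1)\le\tilde{q}(1)\tilde{r}(1)$ (which follows from $|\sum_i a_ib_{k-i}|\le \sum_i |a_i||b_{k-i}|$) together with the lemma's estimate $\tilde{\hat{p}}(1)\le e^{O(\log(1/\eps)/\gamma^2)}$ give, over the depth-$h$ product,
\[
\tilde{\hat{I}}_j(1) \;\le\; e^{O(h\log(1/\eps)/\gamma^2)}.
\]
Subadditivity of the tilde function over the sum over leaves then yields
\[
\tilde{\hat{g}}(1) \;\le\; \sum_j \tilde{\hat{I}}_j(1)\,\tilde{f}_j(1) \;\le\; e^{O(h\log(1/\eps)/\gamma^2)}\sum_j\tilde{f}_j(1).
\]
The degree of $\hat{g}$ is at most the degree $p$ of each $f_j$ plus $O(\log(1/\eps))$ from each of the $h$ replacement factors, i.e.\ at most $p+O(h\log(1/\eps))$; any extra $1/\gamma$ factor in the degree is harmlessly absorbed into the exponential prefactor. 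Feeding these into Corollary~\ref{cor:learning_low_degree} yields the claimed
\[
M_g \;=\; e^{O(h\log(1/\eps)/\gamma^2)}\,(p+h\log(1/\eps))\sum_j \tilde{f}_j(1).
\]

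The main technical obstacle is controlling error propagation through the product of $h$ approximated factors and then the sum over $B$ leaves without the errors compounding catastrophically. Choosing $\eps'$ a factor $hB\cdot\max_j\|f_j\|_\infty$ smaller than $\eps$ costs only an additive $\log(hB\max_j\|f_j\|_\infty)$ inside the logarithms, which the $O(\cdot)$ hides, so the final bound retains the advertised form. The remainder is routine bookkeeping once the tilde-function calculus of Theorem~\ref{thm:decisiontree} is set up, with the single substantive change being that each Boolean literal is replaced by the margin-based $\hat{p}$ instead of the exact affine indicator $a_{j_l}x_{j_l}+b_{j_l}$.
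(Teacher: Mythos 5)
Your proposal is correct and follows essentially the same route as the paper: replace each $\sign$ factor with the erf-based polynomial $p(\ip{\bbet_{j,l}}{\xb},\gamma,\cdot,\alpha_{j,l})$, bound the tilde function multiplicatively over the depth-$h$ product and additively over the leaves, and combine with the degree bound via Corollary~\ref{cor:learning_low_degree}. You additionally spell out the error-propagation argument relating the approximation $\hat{g}$ back to $g$, a step the paper's own proof explicitly leaves open.
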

\begin{proof}
Define $g_{\app}$,
$$
{g}_{\app}(\mathbf{x}) = \sum_{j=1}^B \Pi_l \Phi'(\langle \xb,\bbet_{j,l}\rangle,\gamma,\eps/h,\alpha_{j,l} )f_j(\mathbf{x})
$$
where $\Phi'$ is as defined in Lemma \ref{lem:indicator_poly}. Note that for all $y= 1$,
$$
\tilde{\Phi'}(1,\gamma,\eps/h,\alpha_{j,l}) \le e^{O(\log(1/\eps)/\gamma^2)}.
$$
Therefore,
\begin{align*}
\tilde{g}_{\app}(1) &\le \sum_{j=1}^B \Pi_l \tilde{\Phi'}(1,\gamma,\eps/h,\alpha_{j,l} )\tilde{f}_j(1),\\
&\le  e^{O(\log(1/\eps)/\gamma^2)}\sum \tilde{f}_j(1).
\end{align*}
Note that the degree of $g_{\app}$ is at most $O(p+h\log(1/\eps)/\gamma^2)$. Therefore, 
$$
M_{g_{\app}} \le e^{O(h\log(1/\eps)/\gamma^2)} (p+h\log(1/\eps)/\gamma^2)\sum  \tilde{f}_j(1).
$$
By Remark \ref{rem:learn_app}, learnability of $g$ follows from the learnability of its analytic approximation $g_{\app}$.
\end{proof}

\subsection{Generalized Decision Program}\label{sec:program_app}

In this section, instead a decision tree, we will consider a circuit with fan-out 1, where each gate (node) evaluates some function of the values returned by its children and the input $\x$. A decision tree is a special case of such circuits in which the gates are all switches. 

So far, the function outputs were univariate but we will now generalize and allow multivariate (vector) outputs as well. Hence the functions can now evaluate and return data structures, represented by vectors. We assume that each output is at most $d$ dimensional and lies in the unit ball. 

\begin{definition}\label{def:multivar_tilde}
For a multivariate output function $f$, we define $\tilde{f}(y)$ as the sum of $\tilde{f}_i(y)$ for each of the output coordinates $f_i$.
\end{definition}

\begin{remark}
Theorem \ref{thm:clusterfunctions} , \ref{thm:decisiontree} and \ref{thm:decisiontreemargin} extend to the multivariate output case. Note that if each of the individual functions has degree at most $p$, then the sample complexity for learning the multivariate output $f$ is at most $O(p\tilde{f}(1)/\eps^2))$ (where the multivariate tilde function is defined in Definition \ref{def:multivar_tilde}).
\end{remark}

% Our previous results directly extend to the multivariate case. For instance, Theorem \ref{thm:clusterfunctions} still applies except that we will use $\tilde{f}$ for the vector function for each cluster. Similarly for Theorems \ref{thm:decisiontree} and \ref{thm:decisiontreemargin}.

%1. Object representation: combining fields into a single object, and accessing a certain field given an object. By packing the different fields in an object into orthogonal subspaces, we can get a sketched representation of the object. If the different fields are computed by functions representable by subtrees in the class C, then the entire The fields of the objects can be accessed by projection. 

We now define a  generalized decision program and the class of functions that we support.

\begin{definition}
We define a generalized decision program to be a circuit with fan-out 1 (i.e., a tree topology) where each gate evaluates a function of the values returned by its children and the input $\x$, and the root node evaluates the final output. All gates, including those at the leaves, have access to the input $\x$. We support the following gate operations. Let $h$ be the output of a gate, let each gate have at most $k$ children, and let $\{f_1,\dots,f_k\}$ be the outputs of its children.

\begin{enumerate}
    \item Any analytic function of the child gates of degree at most $p$, including sum $h=\sum_{i=1}^{k} f_i$ and product of $p$ terms $h=\Pi_{i=1}^p f_i$.
    \item Margin based switch (decision) gate with children $\{f_1,f_2\}$, some constant margin $\gamma$, vector $\bbet$ and constant $\alpha$,
    \[h= \begin{cases} 
      f_1 & \text{ if }\ip{\bbet}{\x}-\alpha\leq -\gamma/2, \\
      f_2 & \text{ if } \ip{\bbet}{\x}-\alpha\geq \gamma/2.
   \end{cases}
\]
    \item Cluster based switch gate with $k$ centers $\{\code^{(1)},\dots,\code^{(k)}\}$, with separation $r$ for some constant $r$, and the output is $f_i$ if $\norm{\x-\code^{(i)}}\le r/3$. A special case of this is a look-up table which returns value $v_i$ if $\x=\code^{(i)}$, and 0 if $\x$ does not match any of the centers.
    \item Create a data structure out of separate fields by concatenation such as constructing a tuple $[f_1,\dots, f_k]$ which creates a single data structure out of its children, or extract a field out of a data structure.   
    \item Given a table $T$ with $k$ entries $\{r_1,\dots, r_k\}$, a Boolean-valued function $p$ and an analytic function $f$,  SQL queries of the form \verb|SELECT SUM f(r_i), WHERE p(r_i, x)|. Here, we assume that $f$ has bounded value and $p$ can be approximated by an analytic function of degree at most $p$.
    \item Compositions of functions,  $h(\x)=f(g(\x))$.
    \end{enumerate}

\end{definition}

First, we note that all of the above operators can be approximated by low-degree polynomials.

\begin{claim}
If $p\le O(\log (k/\eps))$, each of the above operators in the generalized decision program can be expressed as a polynomial of degree at most $O(\log(k/\eps))$, where $k$ is maximum out-degree of any of the nodes.

% specify approximation parameters in terms of complexity of poly

% compose poly to get results

\end{claim}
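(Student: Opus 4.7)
The strategy is to go through the six operator types one at a time and show that each can be written as a polynomial of degree at most $O(\log(k/\eps))$ in the input $x$ together with the children's outputs (treated as fresh variables). The three easy cases are (1), (4), and (5). Case (1) is immediate: by hypothesis $p\le O(\log(k/\eps))$, so an analytic combination of degree at most $p$ is already a polynomial of the required degree. Case (4), concatenation and field extraction, is linear in the children, so degree one. For case (5), the SQL query expands as $\sum_{i=1}^{k}\hat{p}(r_i,x)f(r_i)$, where $\hat{p}$ is the given analytic approximation of the predicate $p$ of degree at most $p\le O(\log(k/\eps))$; since each $f(r_i)$ is a bounded constant in $x$, the overall degree in $x$ stays $O(\log(k/\eps))$.

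Cases (2) and (3) are both handled via the truncated erf approximation $\hat{p}$ from the preceding lemma. For the margin-based decision node, I would write
\[
h \;=\; \hat{p}\!\bigl(\ip{\beta}{x},\,\gamma,\,\eps,\,\alpha\bigr)(f_2-f_1) \;+\; f_1,
\]
so that the lemma's guarantee yields $h\approx f_1$ on one side of the margin and $h\approx f_2$ on the other. Since $\gamma$ is a constant, the lemma gives degree $O(\log(1/\eps)/\gamma^2)=O(\log(1/\eps))$ in $x$, and the expression is linear in $f_1,f_2$, so the total degree is $O(\log(k/\eps))$. For the cluster-based branching of case (3), I would invoke the construction of Theorem~\ref{thm:clusterfunctions}: the cluster indicator $\hat{I}_j(x)=\hat{p}(\|x-c_j\|^2,(r/2)^2,\eps/k,(r/4)^2)$ has degree $O(\log(k/\eps))$ when $r$ is a positive constant, and the operator's output $\sum_{j}\hat{I}_j(x)f_j$ inherits that bound.

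The one place that requires care is case (6), composition $h(x)=f(g(x))$. Read naively, composing two degree $O(\log(k/\eps))$ polynomials multiplies the degrees, which would violate the bound. The correct reading of the claim (consistent with the rest of the section, where children's outputs are treated as opaque objects passed up the tree) is that each operator is expressed as a polynomial in its children's outputs and in $x$, and for composition this is simply degree $\deg(f)=O(\log(k/\eps))$ in the child $g(x)$. Cascading degree blow-ups across a chain of compositions is then a separate issue that is handled only when we analyze the whole program, not the single operator. I expect this subtle interpretive point to be the main obstacle in the write-up, but it does not affect any of the arguments for the five other operator types.
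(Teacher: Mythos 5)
Your proposal is correct and follows exactly the route the paper intends: the paper states this claim without proof, and your case analysis assembles the intended ingredients from the surrounding results (the truncated-$\erf$ lemma for the margin and cluster indicators, the degree hypothesis for the analytic and SQL operators, and linearity for concatenation). Your observation about the composition case --- that the degree bound only holds per-operator, treating children's outputs as atomic variables, with degree blow-up across compositions deferred to the whole-program analysis in Theorem~\ref{thm:general_learning} --- is the right reading and correctly flags the one point where the claim as literally stated needs that interpretive care.
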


\begin{remark}
Note that for the SQL query, we can also approximate other aggregation operators apart from SUM, such as MAX or MIN. For example, to approximate MAX of $x_1,\dots,x_k$ up to $\eps$ where the input lies between $[0,1]$ we can first write it as 
\begin{align*}
   \text{MAX}(x_1,\dots,x_k) =\eps \sum_j \mathbf{1}\left( \sum_i (\mathbf{1}(x_i>\eps j)>1/2)\right),
\end{align*}
and then approximate the indicators by analytic functions.
\end{remark}

Lemma \ref{cor:tilde} shows how we can compute the tilde function of the generalized decision program.

\begin{lemma}\label{cor:tilde}
The tilde function for a generalized decision program can be computed recursively with the following steps:

\begin{enumerate}
    \item For a sum gate $h=f+g$, $\tilde{h}(y)=\tilde{f}(y)+\tilde{g}(y)$.
    \item For a product gate, $h=f.g$, $\tilde{h}(y)=\tilde{f}(y)\cdot\tilde{g}(y)$.
    \item For a margin based decision gate (switch) with children $f$ and $g$, $h=I_{left}f+(1-I_{left})g$ and $\tilde{h}(y)=\tilde{I}_{left}(\tilde{f}(y)+\tilde{g}(y))+\tilde{g}(y)$. Here $I_{left}$ is the indicator for the case where the left child is chosen.
    \item For cluster based decision gate (switch) with children $\{f_1,...,f_k\}$, $\tilde{h}(y)\le \sum_i \tilde{I_i}\tilde{f}_i(6y/r)$. Here $I_i$ is the indicator for the cluster corresponding to the $i$-th child.  
    \item For a look-up table with $k$ key-values, $\tilde{h}(y)\le k\tilde{I}(y)$ as long as the $\ell_1$ norm of each key-value is at most 1.  
    \item Creating a data structure out of separate fields can be done by concatenation, and $\tilde{h}$ for the result is at most sum of the original tilde functions. Extracting a field out of a data structure can also be done in the same way.  
    \item Given an analytic function $f$ and a Boolean function $p$, for a SQL operator $h$ over a table $T$ with $k$ entries $\{r_1,\dots, r_k\}$ representing \verb|SELECT SUM f(r_i), WHERE p(r_i, x)|, or in other words $h=\sum_i f(r_i) p(r_i,x)$, $\tilde{h}(y)\le\sum_i \tilde{I}_{p,r_i}(y)$, where ${I}_{p,r_i}$ is the indicator for $p(r_i, x)$. For example, $x$ here can denote some threshold value to be applied to a column of the table, or selecting some subset of entries (in Fig. \ref{fig:learn}, $x$ is the zip-code).
    \item For $h(\x)=f(g(\x))$, $\tilde{h}(y)\le \tilde{f}(\tilde{g}(y))$.
\end{enumerate}
\end{lemma}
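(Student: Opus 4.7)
The plan is to verify each of the eight recursive rules by reading off the definition of $\tilde{\cdot}$ and propagating it through the corresponding power-series identity. Recall that $\tilde{g}$ is obtained from $g$ by replacing every Taylor coefficient by its absolute value (and, for multivariate $g$, by first substituting each inner product $\ip{\alpha}{x}$ by the scalar $y$). The overall strategy is that every node operation in the generalized decision program corresponds to a standard power-series operation whose coefficient formula only involves sums and products of child coefficients; the triangle inequality then converts these identities into the claimed upper bounds on $\tilde{h}$.

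First I would dispatch (1) and (2): for $h=f+g$ the Taylor coefficients add, so the triangle inequality gives $\tilde{h}(y)\le \tilde{f}(y)+\tilde{g}(y)$; for $h=f\cdot g$ the coefficients convolve, and the convolution of nonnegative sequences is the Cauchy product of their generating functions, giving $\tilde{h}(y)\le \tilde{f}(y)\tilde{g}(y)$. Case (3) follows by rewriting $h=I_{\text{left}}f+(1-I_{\text{left}})g$ and applying (1) and (2). For (4) and (5), I would invoke the analysis behind Theorem~\ref{thm:clusterfunctions}: each $f_i$ is evaluated at the shifted–scaled argument $(x-c_i)/(r/3)$, which on the unit ball has norm at most $6/r$, so the inner-product substitution in the definition of $\tilde{\cdot}$ picks up a factor $6/r$ and one gets $\sum_i \tilde{I}_i(y)\tilde{f}_i(6y/r)$; look-up tables are the special case where each $f_i$ is a constant, and the $k$ distinct keys enter through a union-bound-style sum. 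Case (6) is immediate from the coordinate-wise definition of $\tilde{\cdot}$ on vector-valued functions, and case (7) reduces to (1) together with the observation that each $f(r_i)$ is a bounded constant, so all of the complexity is carried by the indicators approximating $p(r_i,x)$.

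The most delicate step will be (8), the composition rule. Here I would apply the Fa\`a di Bruno formula, which expresses the $k$-th Taylor coefficient of $h(x)=f(g(x))$ as a polynomial in the coefficients of $f$ and $g$ whose combinatorial coefficients are all nonnegative. Bounding each coefficient of $h$ in absolute value then yields exactly the corresponding polynomial in $|a_k^f|$ and $|a_k^g|$, which is precisely the $k$-th Taylor coefficient of the formal power-series composition $\tilde{f}(\tilde{g}(y))$; this gives $\tilde{h}(y)\le \tilde{f}(\tilde{g}(y))$. The main obstacles I expect are the bookkeeping in case (4), where one must carefully propagate a shift by a unit-norm center together with the $r/3$ scaling through the inner-product substitution in the multivariate definition of $\tilde{\cdot}$, and the invocation of Fa\`a di Bruno in case (8), where nonnegativity of the combinatorial coefficients is exactly what allows one to pass from $|\mathrm{coef}_k(h)|$ to $\mathrm{coef}_k(\tilde{f}\circ\tilde{g})$ without slack. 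Every other item is then a mechanical consequence of (1) and (2).
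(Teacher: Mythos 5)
Your handling of items (1)--(7) matches the paper, which simply observes that these follow from the results of the preceding subsections (the sum/product/indicator decompositions behind Theorems~\ref{thm:clusterfunctions}, \ref{thm:decisiontree}, and \ref{thm:decisiontreemargin}); your triangle-inequality/Cauchy-product reading of the coefficient identities is exactly the intended content. The one step the paper proves in detail is the composition rule (8), and there your route genuinely differs. The paper argues structurally: it first takes $f=\ip{\bbet}{\x}$ with $\norm{\bbet}=1$, getting $\tilde{h}(y)=\sum_i|\beta_i|\tilde{g}_i(y)\le\tilde{g}(y)$, then handles $f$ equal to a product of $p$ inner products (yielding $\tilde{g}(y)^p$), and finally extends by linearity and by the coordinate-sum definition of $\tilde{f}$ for multivariate $f$. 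You instead bound the Taylor coefficients of $f\circ g$ directly via Fa\`a di Bruno (equivalently, the classical majorant-series argument: expand $h=\sum_m b_m g^m$, note every combinatorial coefficient is nonnegative, and conclude $|[x^n]h|\le[x^n]\,\tilde{f}(\tilde{g}(x))$). Both are valid. Your approach is more self-contained and does not depend on decomposing $f$ into inner-product monomials; the paper's approach has the advantage that it interfaces directly with the paper's somewhat nonstandard multivariate definition of $\tilde{\cdot}$ (substitution of inner products by $y$ and summation over output coordinates), which is where a naive multivariate Fa\`a di Bruno would require extra bookkeeping to match the definition exactly. If you write up (8) your way, you should make explicit that the formal rearrangement $h=\sum_m b_m g^m$ and the coefficient comparison are being carried out after the inner-product substitution that defines $\tilde{g}$, so that the quantity being majorized is the one the lemma actually refers to.
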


All except for the last part of the above Lemma directly follow from the results in the previous sub-section. Below, we prove the result for the last part regarding function compositions.

\begin{lemma}
Assume that all functions have input and output dimension at most $d$. If $f$ and $g$ are two functions with degree at most $p_1$ and $p_2$, then $h(\x)=f(g(\x))$ has degree at most $p_1p_2$ and $\tilde{h}(y)\le \tilde{f}(\tilde{g}(y))$.
\end{lemma}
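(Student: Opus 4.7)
The plan is to prove both claims by expanding $f$ as a power series in its $d$-dimensional argument and tracking how the tilde operation behaves under composition. First I would write $f(z_1,\dots,z_d)=\sum_{\alpha} a_\alpha z^\alpha$, where $\alpha$ ranges over multi-indices with $|\alpha|\le p_1$, and write each output coordinate $g_i(x)$ as a polynomial of degree at most $p_2$ in $x$. Substituting, each monomial $\prod_i g_i(x)^{\alpha_i}$ becomes a polynomial in $x$ of degree at most $|\alpha|\cdot p_2\le p_1p_2$, which gives the degree bound $p_1p_2$ on $h$ immediately.

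For the tilde bound, I would first record the sub-multiplicativity of tilde under products, $\widetilde{uv}(y)\le \tilde{u}(y)\,\tilde{v}(y)$, which is exactly the product rule already stated in Lemma~\ref{cor:tilde} and follows from the fact that the $L^1$ norm of Taylor coefficients is sub-multiplicative under polynomial multiplication. Iterating this gives $\widetilde{\prod_i g_i^{\alpha_i}}(y)\le \prod_i \tilde{g}_i(y)^{\alpha_i}$ for every multi-index $\alpha$. Next I would invoke the vector-output convention (introduced in the paragraph beginning the Generalized Decision Program section) that for a vector-valued function we set $\tilde{g}(y)=\sum_i \tilde{g}_i(y)$. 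Since each $\tilde{g}_i(y)\ge 0$ and $\tilde{g}_i(y)\le \tilde{g}(y)$, the elementary monomial inequality $\prod_i t_i^{\alpha_i}\le (\sum_i t_i)^{|\alpha|}$, valid for $t_i\ge 0$, yields $\prod_i \tilde{g}_i(y)^{\alpha_i}\le \tilde{g}(y)^{|\alpha|}$.

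Combining these ingredients via the triangle inequality applied to the power-series expansion of $h$ gives
\begin{equation*}
\tilde{h}(y)\ \le\ \sum_\alpha |a_\alpha|\,\widetilde{\textstyle\prod_i g_i^{\alpha_i}}(y)\ \le\ \sum_\alpha |a_\alpha|\prod_i \tilde{g}_i(y)^{\alpha_i}\ \le\ \sum_\alpha |a_\alpha|\,\tilde{g}(y)^{|\alpha|}\ =\ \tilde{f}(\tilde{g}(y)),
\end{equation*}
where the final equality is just the definition of the univariate $\tilde{f}(z)=\sum_\alpha |a_\alpha|z^{|\alpha|}$ evaluated at $z=\tilde{g}(y)$.

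The only slightly delicate step is the collapse of the vector-valued tilde $(\tilde{g}_1,\dots,\tilde{g}_d)$ into the scalar $\tilde{g}=\sum_i \tilde{g}_i$ while preserving a clean bound; once the monomial inequality above is in hand this is immediate, but it does introduce looseness unless $\alpha$ is supported on a single coordinate. Everything else is routine bookkeeping over multi-indices, plus one direct invocation of the product-tilde sub-multiplicativity already established earlier in the lemma.
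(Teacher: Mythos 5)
Your proof is correct and takes essentially the same route as the paper's: both expand $f$ into a sum of products of linear terms, apply sub-multiplicativity of the tilde under products, and bound each coordinate $\tilde{g}_i(y)$ by the vector tilde $\tilde{g}(y)=\sum_i\tilde{g}_i(y)$ before resumming to get $\tilde{f}(\tilde{g}(y))$. The only cosmetic difference is that you expand $f$ in coordinate monomials while the paper works with products of general inner products $\ip{\bbet}{\x}$ with $\norm{\bbet}=1$ (matching its inner-product-based definition of the multivariate tilde); your version just makes the multi-index bookkeeping more explicit.
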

\begin{proof}
Note that this follows if $f$ and $g$ are both scalar outputs and inputs. Let $g(\x)=(g_1(\x), ..., g_d(\x))$. Let us begin with the case where $f=\ip{\bbet}{\x}$, where $\|\bbet\|=1$. Then $\tilde{h}(y)=\sum_i |\beta_i| \tilde{g}_i(y)\le \sum_i \tilde{g}_i(y)\le \tilde{g}(y)$. When $f=\Pi_{i=1}^{p_1}\ip{\bbet_i}{\x}$, $\tilde{h}(y)\le \tilde{g}(y)^{p_1} \le \tilde{f}(\tilde{g}(y))$. The same argument works when we take a linear combination, and also for a multivariate function $f$ (as $\tilde{f}$ for a multivariate $f$ is the summation of individual $\tilde{f}_i$, by definition).
\end{proof}

We now present our result for learning generalized decision programs.

\begin{theorem}\label{thm:general_learning} Let the in-degree of any gate be at most $k$. The sample complexity for learning the following classes of generalized decision programs is as follows:

\begin{enumerate}
    \item If every gate is either a decision node with margin $\gamma$, a sum gate, or a lookup of size at most $k$, then $M_g\le e^{O(h\log(1/\eps)/\gamma^2)}k^{O(h)}$.
    \item For some constant $C$, if there are at most $C$ product gates with degree at most $C$, and every other gate is a decision gate with margin $\gamma$ or a sum gate with constant functions at the leaves, then $M_g\le e^{O(h\log(1/\eps)/\gamma^2)}$.
    \item Given a function $f$ and a Boolean function $p$ which can be approximated by a polynomial of degree at most $O(\log (k/\eps))$, for a SQL operator $g$ over a table $T$ with $k$ entries $\{r_1,\dots, r_k\}$ representing \verb|SELECT SUM f(r_i), WHERE p(r_i, x)|, $M_g\le \sum_i \tilde{I}_{p,r_i} (1)$.
    \item Let the function at every gate be an analytic function $f$ of degree at most $p$ and the sum of the coefficients of $f$ is upper bounded by $c^p$ for some constant $c$. Then note that $\tilde{f}(y)\le (cy)^p$ for $y\ge 1$. Therefore, the final function $\tilde{g}(y) \le (cky)^{p^h}$ and hence $M_g\le (ck)^{p^h}$.
\end{enumerate}
\end{theorem}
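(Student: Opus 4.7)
The plan is to derive each of the four bounds by applying Lemma \ref{cor:tilde} recursively from leaves to the root, bounding $\tilde{g}(1)$ and the effective polynomial degree, and then invoking Corollary \ref{cor:learning_low_degree} (via Theorem \ref{thm:multivar}) at the end. Each part differs only in which rules of Lemma \ref{cor:tilde} dominate the recursion; the work is to collect the per-node factors and bound their accumulation over the $h$ levels.

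For part 1, a margin-$\gamma$ decision node contributes a multiplicative factor of $e^{O(\log(1/\eps)/\gamma^2)}$ to $\tilde{\cdot}(1)$ by Lemma \ref{cor:tilde}(3) combined with the earlier erf-approximation lemma, a $k$-lookup contributes a factor of $k\cdot\poly(k/\eps)$ via Lemma \ref{cor:tilde}(5) and Theorem \ref{thm:clusterfunctions}, and a sum node just adds. Since the tree has depth $h$ and in-degree at most $k$, there are at most $k^h$ root-to-leaf paths, and the contribution along each path is bounded by $e^{O(h\log(1/\eps)/\gamma^2)}\poly(k/\eps)^h$. Summing gives $\tilde{g}(1)\le e^{O(h\log(1/\eps)/\gamma^2)}k^{O(h)}$, and the effective degree is $O(h\log(1/\eps)/\gamma^2 + h\log k)$, so Corollary \ref{cor:learning_low_degree} converts this into the claimed $M_g$ after folding the degree into $k^{O(h)}$.

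Part 2 is the most delicate. Since all leaves are constants (so $\tilde{\cdot}(1)=O(1)$) and the only non-product internal operations are sums and margin-$\gamma$ decisions, unrolling Lemma \ref{cor:tilde}(3) shows that every root-to-leaf path picks up at most $h$ indicator factors, each bounded by $e^{O(\log(1/\eps)/\gamma^2)}$. The branching is at most $2^{O(h)}$, and this is swallowed into the exponent as soon as $\log(1/\eps)/\gamma^2=\Omega(1)$, so $\tilde{g}(1)$ stays inside $e^{O(h\log(1/\eps)/\gamma^2)}$. The $O(1)$ product nodes of degree $O(1)$ contribute only a constant multiplicative factor by Lemma \ref{cor:tilde}(2). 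Part 3 is immediate from Lemma \ref{cor:tilde}(7): the tilde at $1$ is $\sum_i \tilde{I}_{p,r_i}(1)$ and the degree of the approximation is $O(\log(k/\eps))$, so Corollary \ref{cor:learning_low_degree} delivers the claim directly.

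For part 4, I set up the recursion $\tilde{g}_{l+1}(y)\le \tilde{f}\bigl(k\,\tilde{g}_l(y)\bigr)\le \bigl(ck\,\tilde{g}_l(y)\bigr)^p$ from Lemma \ref{cor:tilde}(8) combined with summing over the $k$ children at each node, starting from $\tilde{g}_0(y)=y$. Unrolling gives $\tilde{g}_h(1)\le (ck)^{p+p^2+\cdots+p^h}\le (ck)^{O(p^h)}$, and since the final degree grows to $p^h$, Corollary \ref{cor:learning_low_degree} yields $M_g\le (ck)^{p^h}$ after absorbing the degree factor into the base. The main obstacle is part 2: one must argue that the branching over the tree does not reintroduce a $k^h$ factor once products are confined to $O(1)$ nodes of $O(1)$ degree, and this relies precisely on the sum-rather-than-multiply structure of Lemma \ref{cor:tilde}(3) for decision nodes, so the blowup over $h$ levels stays within $e^{O(h\log(1/\eps)/\gamma^2)}$.
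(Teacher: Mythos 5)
Your proposal is correct and follows essentially the same route as the paper: parts 1--3 by recursive application of Lemma \ref{cor:tilde} (which the paper states without elaboration, so your accounting of the per-node factors is a welcome expansion), and part 4 via the same bottom-up recursion $\tilde{g}_i \le (c\sum_j \tilde{g}_j)^p$ unrolled to $(ck)^{O(p^h)}$ at the root. The only slight imprecision is describing the $O(1)$ product nodes in part 2 as contributing a ``constant multiplicative factor'' --- by Lemma \ref{cor:tilde}(2) they raise the accumulated tilde bound to a constant power rather than multiplying it by a constant, but since the bound is already of the form $e^{O(h\log(1/\eps)/\gamma^2)}$ this is absorbed and the conclusion stands.
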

\begin{proof}
The first three claims can be obtained using Lemma \ref{cor:tilde}.

For the final claim, consider the final polynomial obtained by expanding the function at each gate in a bottom-up way. We will upper bound $\tilde{g}(y)$ for the overall function $g$ corresponding to the generalized decision program. $\tilde{g}(y)$ can be upper bounded by starting with $\tilde{f}(y)$ for the leaf nodes $f$. For any internal gate $i$, let $g_i(x)= f_i(f_{j_1}(x),\dots, f_{j_p}(x))$ where $f_{j_t}$ are the outputs of the children of the gate $i$. We recursively compute $\tilde{g}_i(y)=\tilde{f}_i(\sum_{l} \tilde{f}_{j_l}(y))$. Therefore, for a gate with $k$ children $\tilde{g}_{i}(y)\le (c \sum_{l}\tilde{g}_{j_l}(y))^p$. Therefore, for the root gate $g_0$, $\tilde{g}_0(y)\le (cky)^{p^h}$.
\end{proof}

\begin{remark}
Note that the dependence on $h$ is doubly exponential. We show a corresponding lower bound in Theorem \ref{thm:depth_general} that this is necessary.
\end{remark}

Theorem \ref{thm:general_learning} implies that we can learn programs such as the following formal version of Fig. \ref{fig:learn} (right)---which involves analytic functions, SQL queries, data structures, and table look-up. 

\begin{example}
Consider the following program:
\begin{verbatim}
class Person{
    string name;
    Address address;
    int income;
    public string get_zip_code(){
        return address.zip_code;
    }
    init(input_name, input_address, input_income){
        name = input_name;
        address = input_address;
        income = input_income;
    }
}
class Address{
    int street_number;
    string street_name;
    string city;
    string state;
    string zip_code;
    public string get_zip_code(){
        return zip_code;
    }
    init(...){
        ... # function to create new object with input values
    }
}
dictionary name_to_address_table;
dictionary zip_code_to_lat_long; #maps zip_code to tuple of (latitute, longitude)

boolean in_same_zip_code(Person A, Person B){
    return A.get_zip_code() == B.get_zip_code();
}

float get_straight_line_distance(Person A, Person B){
    lat_longA =  zip_code_to_lat_long[A.get_zip_code()];
    lat_longB =  zip_code_to_lat_long[B.get_zip_code()];
    return euclidean_distance(lat_longA, lat_longB);
}

float avg_income_zip_code(string zip_code){
    construct SQL table T with income, zip_code from name_to_address_table;
    return output of SQL query "SELECT AVG(INCOME) FROM T WHERE ZIP_CODE=zip_code"
}

\end{verbatim}
\end{example}

The following claim follows from Theorem \ref{thm:general_learning}.

\begin{claim}
The above classes and functions can be implemented and learnt using $(k/\eps)^{O(\log(1/\eps))}$ samples, where the tables are of size at most $k$.
\end{claim}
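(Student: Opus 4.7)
The plan is to decompose the example program into a generalized decision program of constant depth and apply Theorem~\ref{thm:general_learning} together with Lemma~\ref{cor:tilde}. First I would hash every string field (names, cities, zip codes) to a well-separated vector on the unit sphere using Corollary~\ref{cor:lookup}, so that each dictionary access like \texttt{name\_to\_address\_table[name]} becomes a look-up table of size at most $k$ with center separation $r = \Theta(1)$, and each string equality \texttt{==} becomes a cluster-based indicator. Each object (\texttt{Person}, \texttt{Address}) is encoded by the concatenation node of item~6 of Lemma~\ref{cor:tilde}, and every field accessor (e.g.\ \texttt{address.zip\_code}) is the corresponding projection, neither of which inflates $\tilde g(1)$ beyond the sum of children's tilde values.

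Next I would bound each of the three procedures in turn. The predicate \texttt{in\_same\_zip\_code} is two field extractions followed by a single hashed-equality indicator, so by Lemma~\ref{cor:tilde} item~5 we get $\tilde g(1)\le\poly(k/\eps)$. The function \texttt{get\_straight\_line\_distance} performs two size-$k$ lookups into \texttt{zip\_code\_to\_lat\_long} and then applies a fixed-degree analytic approximation of Euclidean distance; by item~8, the composition satisfies $\tilde g(1)\le \tilde f(\tilde I(1))\le\poly(k/\eps)$. The procedure \texttt{avg\_income\_zip\_code} matches the template \texttt{SELECT SUM f(r\_i) WHERE p(r\_i, x)} of item~7 with $f$ projecting out income and $p$ being the hashed zip-code equality; AVG is realized as the ratio of two such SUM queries (the divisor being a SUM with $f\equiv 1$, bounded away from zero whenever the zip code is nonempty), so Theorem~\ref{thm:general_learning} item~3 yields $M\le \sum_i \tilde I_{p,r_i}(1)\le\poly(k/\eps)$ for the aggregate.

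Finally I would assemble the overall bound. Every node type above (cluster indicator, margin indicator, Boolean predicate in the SQL WHERE clause) contributes a tilde-function factor of $e^{O(\log(k/\eps))}$ because approximating an indicator to error $\eps/k$ requires a Taylor truncation of degree $p = O(\log(k/\eps))$ per the indicator lemma. Since each procedure in the example has constant composition depth $h$, the bottom-up recursion of Lemma~\ref{cor:tilde} together with the fourth case of Theorem~\ref{thm:general_learning} gives $\tilde g(1)\le (ck)^{p^{h}}\le (k/\eps)^{O(\log(1/\eps))}$, and Corollary~\ref{cor:learning_low_degree} converts this into the claimed sample complexity. The main obstacle I expect is bookkeeping the composition order: if any procedure chained more than a constant number of indicator approximations, the bound would blow up doubly-exponentially in the composition depth, so one must verify that each method really does have $h = O(1)$ and that the analytic operations (Euclidean distance, division for AVG) each have bounded degree and bounded tilde norm.
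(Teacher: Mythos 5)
Your overall strategy---hash the string keys to well-separated centers, encode objects by concatenation, and push everything through Lemma~\ref{cor:tilde}---is the same route the paper takes, and your treatment of the dictionary lookups and of \texttt{in\_same\_zip\_code} is fine. But there are two genuine problems. First, you describe the Euclidean distance as ``a fixed-degree analytic approximation'' with $\tilde f(\tilde I(1))\le\poly(k/\eps)$. The square root $\sqrt{\sum_i (x_i-y_i)^2}$ is \emph{not} analytic at $0$ (branch cut), so one must use the shifted-power-series device from the gravitational-law proof (Theorem~\ref{thm:gravity_bound}); this yields a polynomial of degree $O(\log(1/\eps))$ with $\tilde f(y)\le (O(y))^{\log(1/\eps)}$, and composing it with the $\poly(k/\eps)$ lookup via item~8 of Lemma~\ref{cor:tilde} gives $(\poly(k/\eps))^{O(\log(1/\eps))}=(k/\eps)^{O(\log(1/\eps))}$---not $\poly(k/\eps)$. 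This non-analyticity is precisely where the $\log(1/\eps)$ in the exponent of the claim comes from, and your write-up loses it. (The same issue recurs in your AVG-as-ratio-of-SUMs step: division is a $1/x$ composition, again non-analytic at $0$, and again needs the shifted truncation; saying the denominator is ``bounded away from zero'' is the right hypothesis but you still have to pay the degree-$O(\log(1/\eps))$ factor.)

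Second, your final assembly is wrong as stated. Invoking case~4 of Theorem~\ref{thm:general_learning} with $p=O(\log(k/\eps))$ and constant depth $h$ gives $\tilde g(1)\le (ck)^{p^h}=e^{O(\log k\cdot \log^h(k/\eps))}$, which for any $h\ge 2$ is strictly larger than the claimed $(k/\eps)^{O(\log(1/\eps))}=e^{O(\log(k/\eps)\log(1/\eps))}$; the inequality $(ck)^{p^h}\le (k/\eps)^{O(\log(1/\eps))}$ you assert does not hold. The correct accounting is the one sketched above: the inner lookup/concatenation nodes have \emph{polynomial} tilde value at $1$ (even though their degree is $O(\log(k/\eps))$), and only the single outermost non-analytic function contributes an exponent of $O(\log(1/\eps))$ through the composition rule $\tilde h(y)\le\tilde f(\tilde g(y))$. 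On the positive side, you attempt to cover \texttt{in\_same\_zip\_code} and \texttt{avg\_income\_zip\_code} explicitly, which the paper's own proof does not; with the two corrections above your argument for those pieces would go through.
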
 
\begin{proof}

We begin with the \verb|in_same_zip_code()| function. Note that this is a special case of the cluster based functions. As in Corollary \ref{cor:lookup} all attributes such as zip-code are appropriately hashed such that they are well-separated. We can now test equality by doing an indicator function for a ball around the zip-code of Person A. The indicator function for a ball can be approximated by a low-degree polynomial as in the cluster-based branching results in Theorem \ref{thm:clusterfunctions}. As the total number of individuals is at most $k$, therefore by Theorem \ref{thm:clusterfunctions} the sample complexity is at most  $\poly(k/\eps)$. 

For the  \verb|avg_income_zip_code()| function, we use the SQL query result in Theorem \ref{thm:general_learning}. Note that the indicators are testing equality in the case of our program, and hence as in the previous case we can use the cluster-based branching  result in Theorem \ref{thm:clusterfunctions} to approximate these indicators by polynomial functions, to obtain a sample complexity of $\poly(k/\eps)$. 

Finally, we argue that we can learn the \verb|get_straight_line_distance()| function. Here, we are composing two functions $f$ and $(g_1,g_2)$ where $f$ is the distance function and $(g_1,g_2)$ are the lookups for the latitude and longitude for Person A and B. By Corollary \ref{cor:lookup}, the lookups have $\tilde{g_i}(1)\le \poly(k/\eps)$. By part 6 of Lemma \ref{cor:tilde}, the tilde for the concatenation is the sum of the tilde for the individual functions.  For computing the Euclidean distance $\sqrt{\sum(x_i-y_i)^2}$, note that the square root function does not have a Taylor series defined at 0. However, we can use the same analysis as in the proof for learning the $1/x$ function in the gravitational law (see Appendix \ref{sec:learning_gravity}) to get a polynomial of degree at most  $O(\log(1/\eps))$, and hence $\tilde{f}(y)\le (O(y))^{\log(1/\eps)}$. Thus using the composition rule in Lemma \ref{cor:tilde}, the sample complexity is $(k/\eps)^{O(\log(1/\eps))}$.

\end{proof}

\section{Learning dynamical systems}

\subsection{Gravitational force law}

\label{sec:learning_gravity}

We can use the product and chain rules to show that many functions
important in scientific applications can be efficiently learnable.
This is true even when the function has a singularity. As an example
demonstrating both, we prove the following bound on learning Newton's law
of gravitation:

\begin{theorem}
\label{thm:gravity_bound}
Consider a system of $k$ bodies with positions $\x_{i}\in \mathbb{R}^{3}$
and masses $m_{i}$, interacting via the force:
\begin{equation}
\F_{i} = \sum_{j\neq i} \frac{m_{i}m_{j}}{r_{ij}^3}(\x_{j}-\x_{i})
\end{equation}
where $r_{ij} \equiv ||\x_{i}-\x_{j}||$.
We assume that $\rrat = r_{max}/r_{min}$, the ratio between the largest
and smallest pairwise distance between any two bodies, is constant. 
Suppose the $m_i$ have been rescaled
to be between $0$ and $1$.
Then the force law is efficiently
learnable in the sense of Definition \ref{def:eff_learnable} using the modified ReLU kernel
to generalization error less than $\eps$
using $k^{O(\ln(k/\eps))}$ samples.
\end{theorem}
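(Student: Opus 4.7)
I will view the gravitational force law as a generalized decision program of constant depth whose only non-polynomial ingredient is the scalar $1/r_{ij}^3$. The full target is a vector function with $3k$ output coordinates, where each $\F_i$ is a sum of $k-1$ pairwise contributions $m_i m_j(\x_j-\x_i)/r_{ij}^3$. Sums, coordinate differences, and mass products are already in polynomial form, so the only real work is to approximate $1/r^3$. The hypothesis that $\rrat = r_{max}/r_{min}$ is constant is exactly what avoids the singularity: since the positions lie in the unit ball, $r_{max}\le 2$ and hence $r_{ij}\in[2/\rrat,2]$ for every pair, an interval bounded away from $0$ by an absolute constant.

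The technical heart is a low-degree polynomial approximation of $1/r^3$ on that interval with a tilde norm under control. I will set $u = 1 - r^2/r_{max}^2$, so that $u\in[0,1-1/\rrat^2]$ is uniformly bounded away from $1$, and use the binomial expansion $(1-u)^{-3/2} = \sum_k c_k u^k$ whose coefficients $c_k$ grow only polynomially in $k$. Because $u$ stays in a strictly sub-unit interval, truncating this series at degree $d = O(\log(1/\eps))$ yields an $\eps$-uniform approximation on the relevant domain. Substituting back $u = 1 - r^2/r_{max}^2$ (a degree-$2$ polynomial in the coordinates with $\tilde{u}(1)=O(1)$) gives a polynomial of degree $O(\log(1/\eps))$ in the $3k$ coordinates, and the composition rule (part 8 of Lemma~\ref{cor:tilde}) together with $\sum_{k\le d} c_k\cdot (\tilde{u}(1))^k \le (1/\eps)^{O(1)}$ controls its tilde at $1$. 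This is exactly the ``$1/x$'' argument already invoked in the example preceding the theorem.

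Finally I will assemble the pieces using Lemma~\ref{cor:tilde}: each pairwise term is a product of three factors (the two masses, the approximant for $1/r_{ij}^3$, and the linear vector $\x_j - \x_i$) whose tildes multiply, so its contribution at $y = 1$ is $(1/\eps)^{O(1)}$; summing over the $k-1$ neighbours of each body and concatenating the $k$ force vectors then gives $\tilde{\F}(1) \le k^2\cdot(1/\eps)^{O(\log(1/\eps))}$. Since the composite polynomial has degree $O(\log(1/\eps))$, Corollary~\ref{cor:learning_low_degree} delivers the stated sample complexity $k^{O(\log(k/\eps))}$. The main obstacle is the first step: a naive Taylor expansion of $1/r^3$ around $r=0$ fails outright, and the critical observation is that constant $\rrat$ permits the change-of-variables above to land in a strictly sub-unit interval where a geometrically convergent series, and thus a controlled tilde bound, is available.
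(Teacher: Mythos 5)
Your approach is essentially the paper's: approximate $1/r^3$ by a truncated binomial series for $(1-u)^{-3/2}$ with $u = 1-r^2/a^2$, use the constancy of $\rrat$ to keep $u$ in a strictly sub-unit interval so that truncation at degree $d=O(\rrat^2\log(\cdot))$ suffices, and then assemble the pairwise terms with the tilde-function sum, product, and composition rules. The paper centers the expansion at $a^2=(r_{min}^2+r_{max}^2)/2$ rather than $a^2=r_{max}^2$, but that choice is cosmetic. The one place your accounting glosses over real work is the claim that $\tilde{u}(1)=O(1)$: once the collective position vector is rescaled into the unit ball (as the framework requires), one has $r_{max}^2\le 2/k$, so $\tilde{u}(1)=O(1/r_{max}^2)=O(k)$ and each pairwise term contributes $(\tilde{u}(1))^{d}=k^{O(\log(k/\eps))}$ rather than $(1/\eps)^{O(1)}$; this factor is exactly the paper's $(1+12/r_{max}^2)^{d}$ term and is where the $k$ in the base of the exponent actually originates, and correspondingly the per-term approximation error must be driven down to $\eps/k$ rather than $\eps$, giving $d=O(\rrat^2\log(k/\eps))$. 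None of this changes the final bound $k^{O(\ln(k/\eps))}$, so your argument goes through once that bookkeeping is tracked.
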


\begin{proof}
We will prove learning bounds for each component of $F$ separately,
showing efficient learning with probability greater than $1-\delta/3k$. Then, using the union bound,
the probability of simultaneously learning all the components efficiently will be $1-\delta$.

There are two levels of approximation: first, we will construct a function which is within $\epsilon/2$
of the original force law, but more learnable. Secondly, we will prove bounds on learning that function
to within error $\epsilon/2$.

We first rescale the vector of collective $\{\x_{i}\}$ so that their collective length is at most $1$.
In these new units, this gives us $r_{max}^2\leq \frac{2}{k}$.
The first component
of the force on $\x_{1}$ can be written as:
\begin{equation}
(\F_{1})_{1} = \sum_{j=2}^{k} \frac{m_{1}m_{j}}{r_{1j}^{2}}\frac{((\x_{j})_{1}-(\x_{1})_{1})}{r_{1j}}.
\end{equation} 
If we find a bound $\sqrt{M_{f}}$ for an individual contribution $f$
to the force, we can get a bound on the total
$\sqrt{M_{F}} = (k-1)\sqrt{M_{f}}$. Consider an individual force term
in the sum. The force has a singularity at
$r_{1j} = 0$. In addition, the function $r_{1j}$ itself is non-analytic due to the branch cut at $0$.

We instead will approximate the force law with a finite power series in $r_{1j}^2$, and get bounds on learning said
power series. The power series representation of
$(1-x)^{-3/2}$ is $\sum_{n=0}^{\infty} \frac{(2n+1)!!}{(2n)!!}x^{n}$.
If we approximate the function with $d$ terms, the error can be 
bounded using Taylor's theorem. The Lagrange form
of the error gives us the bound
\begin{equation}
\left|\frac{1}{(1-x)^{3/2}}-\sum_{n=0}^{d} \frac{(2n+1)!!}{ (2n)!!}x^{n}\right| \leq \frac{\sqrt{\pi d} |x|^{d+1}}{(1-|x|)^{5/2+d}}
\end{equation}
where we use $\frac{(2n+1)!!}{(2n)!!}\approx \sqrt{\pi n}$
for large $n$.
We can use the above expansion by
rewriting
\begin{equation}
r_{1j}^{-3} = a^{-3}(1-(1-r_{1j}^{2}/a^{2}))^{-3/2}
\end{equation}
for some shift $a$.
Approximation with $f_{d}(r_{1j}^2)$, the first $d$ terms of the power series in $(1-r_{1j}^{2}/a^{2})$ gives us the error:
\begin{equation}
|f_{d}(r_{1j}^2)-r_{1j}^{-3}|\leq \frac{\sqrt{\pi d}|1-r_{1j}^{2}/a^{2}|^{d+1}}{a^3(1-|1-r_{1j}^{2}/a^{2}|)^{5/2+d}}
\end{equation}
which we want to be small over the range $r_{min}\leq r_{1j}\leq r_{max}$.

The bound is optimized when it takes the same value at
$r_{min}$ and $r_{max}$, so we set
$a^{2} = (r_{min}^2+r_{max}^2)/2$.
In the limit that $r_{max}\gg r_{min}$, where learning is most difficult,
the bound becomes
\begin{equation}
|f_{d}(r_{1j}^2)-r_{1j}^{-3}|\leq 
\frac{\sqrt{8\pi d}}{r_{max}^3}
\left(\rrat^2/2\right)^{5/2+d}e^{-2(d+1)/\rrat^2}
\end{equation}
where $\rrat = r_{max}/r_{min}$, which is constant
by assumption.

In order to estimate an individual contribution to the force force to error $\epsilon/2k$ (so the total error
is $\epsilon/2$), we must have:
\begin{equation}
m_1 m_j r_{max}  |f_{d}(r_{1j})-r_{1j}^{-3}|\leq \frac{\eps}{2k}
\end{equation}
This allows us to choose the smallest $d$ which gives us this error. Taking
the logarithm of both sides, we have:
\begin{equation}
\frac{1}{2}\ln(d)-(5/2+d)\ln\left(2/\rrat^2\right)-2(d+1)/\rrat^2\leq \ln(\eps/k^2).
\end{equation}
where we use that
$r_{max}^2\leq 2/k$ after rescaling.
The choice $d\geq \rrat^2\ln(k^2/\eps)$ ensures error less than $\eps/2k$ per term.

Using this approximation, we can use the product and chain rules to 
get learning bounds on
the force law. We can write the approximation
\begin{equation}
F_{\eps}(\x) = \sum_{j\neq 1} m_1m_j f_{d}(h_{j}(\x)) k_{j}(\x)
\end{equation}
where $h_{j}(\x) = ||\x_{1}-\x_{j}||$
and $k_{j}(\x) = (\x_{1})_1-(\x_{j})_{j}$
The number of samples needed for efficient learning is bounded by
$\sqrt{M_{F_{\eps}}} = \frac{\sqrt{8}k}{r_{max}^3} A_{F_{\eps}}$, for
\begin{equation}
A_{F_{\eps}} =
\tilde{f}_{d}'(\tilde{h}(1))\tilde{h}'(1) \tilde{k}(1) +\tilde{f}_{d}(\tilde{h}(1)) \tilde{k}'(1)
\end{equation}
with
\begin{equation}
\tilde{k}(y) = \sqrt{2}y,~\tilde{h}(y) = 6y^2,~\tilde{f}_{d}(y) = \sqrt{\pi d}(1+y/a^2)^{d}.
\end{equation}
Evaluating, we have
\begin{equation}
A_{F_{\eps}}   = 
 \sqrt{2\pi d}\left(1+\frac{12}{r_{max}^2}\right)^{d} +\sqrt{\pi d^3}\left(1+\frac{12}{r_{max}^2}\right)^{d-1}
\end{equation}
which, after using $r_{max}^2\leq 2/k$ and
$d = \rrat^2\ln(k^2/\eps)$
gives us the bound
\begin{equation}
\sqrt{M_{F_{\eps}}} \leq  k^{-1/2}  \left(\rrat^2\ln(k^2/\eps)\right)^{3/2}\left(24 k\right)^{R^2\ln(k^2/\eps)}.
\label{eq:gravity_bound}
\end{equation}
The asymptotic behavior is
\begin{equation}
\sqrt{M_{F_{\eps}}} = k^{O(\ln(k/\eps))}
\end{equation}
since $\rrat$ is bounded.

We can therefore
learn an $\epsilon/2$-approximation of one component of $\F_{1}$, with probability
at least $1-\delta/3k$ and error $\epsilon/2$ with  $O(4(M_{F_{\eps}}+\log(3k/\delta))/\epsilon^2)$
samples. Therefore, we can learn $\F_{1}$ to error $\epsilon$ with the same number
of samples. Using a union bound, with probability at least $1-\delta$ we can simultaneously
learn all components of all $\{\F_{i}\}$ with that number of samples.
\end{proof}

We note that since the cutoff of the power series at $d(\eps) = O(\rrat^2\ln(k^2/\eps))$ dominates
the bound, we can easily compute learning bounds for other power-series kernels as well.
If the $d$th power series coefficient of the kernel is $b_d$, then 
the bound on $\sqrt{M_{F_{\eps}}}$
is increased by $(d(\eps)^2b_{d(\eps)})^{-1/2}$.
For example, for the Gaussian kernel, since $b_d^{-1/2} = \sqrt{d!}$, the bound becomes
\begin{equation}
\sqrt{M_{F_{\eps}}} = (\rrat^2\ln(k^2/\eps)k)^{O(\ln(k/\eps))}
\end{equation}
which increases the exponent of $k$ by a
factor of $\ln(\rrat^2\ln(k^2/\eps))$.

\subsection{Empirical confirmation of learning bounds}

\label{sec:gravity_experiments}

We empirically validated our analytical learning bounds by
training models to
learn the gravitational force function for $k$ bodies (with $k$ 
ranging from $5$ to $400$) in a $3-$dimensional space.
We created synthetic datasets by randomly drawing $k$ points from
$[0, 1]^3$ corresponding to the location of $k$ bodies, and compute 
the gravitational force (according to Figure~\ref{fig:learn}) on a
target body also drawn randomly from $[0, 1]^3$. To avoid 
singularities, we ensured a minimum distance of $0.1$ between the 
target body and the other bodies
(corresponding to the choice $\rrat = 10$). 
As predicted by the theory, none of the models learn well
if $\rrat$ is not fixed.
We randomly drew the 
masses 
corresponding to the $k+1$ bodies from $[0,10]$. We generated $5$ 
million such examples - each example with $4(k+1)$ features 
corresponding to the location and mass of each of the bodies, and a 
single label corresponding to the gravitational force $F$ on the target 
body along the $x$-axis. We held out 
$10\%$ of the dataset as test data to compute the root mean square 
error (RMSE) in prediction.
We trained three different neural networks 
on this data, corresponding to various kernels we analyzed in the 
previous section:
\begin{enumerate}
\item A wide one hidden-layer ReLU network (corresponding
to the ReLU NTK kernel).
\item A wide one hidden-layer ReLU network with a 
constant bias feature added to the input (corresponding to the NTK 
kernel).
\item A wide one hidden-layer network with exponential 
activation function, where only the top layer of the network is
trained (corresponding to the Gaussian kernel).
\end{enumerate}

We used a hidden layer of width $1000$ for all the
networks, as we observed that 
increasing the network width further did not improve results 
significantly.
All the hidden layer weights were initialized randomly.

In Figure \ref{fig:expts} we show the normalized RMSE
(RMSE/[$F_{max}-F_{min}$]) for each of the neural networks for 
different values of the number of bodies $k$.

% \begin{table}[h!]
% \centering
% \begin{tabular}{ |p{1.6cm}||p{1.6cm}|p{1.6cm}|p{1.6cm}|  }
%  \hline
%  %\multicolumn{4}{|c|}{Normalized RMSE} \\
%  \hline
% $k$ (Number of Bodies) & ReLU & ReLU with bias & Gaussian Kernel\\
%  \hline
%  5 & 0.0036  & 0.041  & 0.029 \\
% 10 & 0.007  & 0.0065  & 0.034 \\
% 20 &  0.013  & 0.015  & 0.043\\
% 50 &  0.025  & 0.025  & 0.064\\
% 100 &  0.04  & 0.041  & 0.074\\
% 200 &  0.066  & 0.065  & 0.164\\
% 400 &  0.108  & 0.110  & 0.573\\
%  \hline
% \end{tabular}
% \caption{Normalized RMSE for learning gravitational
% force law for different kernels. $5\cdot10^{6}$ training examples, $\rrat = 10$.\label{table:expts}}
% \end{table}

\begin{figure}[h!]
\centering
\includegraphics[width=0.5\linewidth]{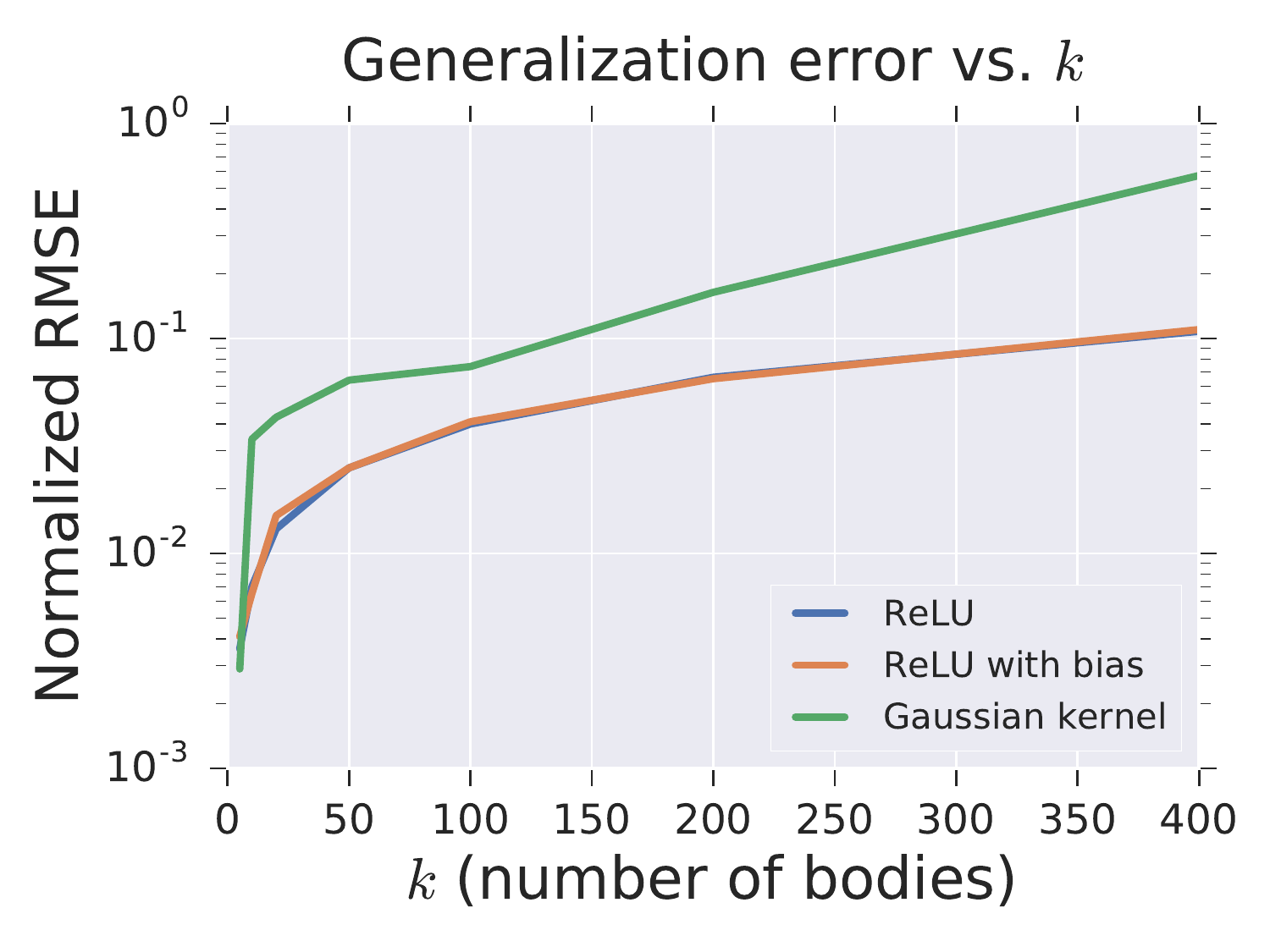}
\caption{RMSE vs number of bodies $k$ for learning gravitational
force law for different kernels. Normalized by the range $F_{max}-F_{min}$
of the forces. Gaussian kernels learn worse than
ReLU at large $k$.
}
\label{fig:expts}
\end{figure}

All three networks are able to learn the 
gravitational force equation with small normalized RMSE for hundreds of bodies.  
Both the ReLU network and ReLU with bias outperform the network corresponding to the Gaussian
kernel (in terms of RMSE) as $k$ increases.
In particular, the Gaussian kernel learning seems to quickly degrade at around $400$ bodies,
with a normalized RMSE exceeding $50\%$. This is consistent with the learning bounds
for these kernels in Section \ref{sec:analytic}, and suggests
that those bounds may in fact be useful to compare the performances
of different networks in practice.

We did not, however, observe much difference in the performance of the
ReLU network when adding a bias  to the input, which suggests that 
the inability to get an analytical bound due
to only even powers
in the ReLU NTK kernel might be a shortcoming of the proof
technique, rather than a property which fundamentally limits the
model.

\section{Lower Bounds}\label{sec:lower}

First, we show an exponential dependence on the depth $h$ is necessary for learning decision trees. The result depends on the hardness of solving parity with noise.

\begin{conjecture}
(hardness of parity with noise) Let $\ab, \xb\in\{0,1\}^d$ be $d$-dimensional Boolean vectors. In the parity with noise problem, we are given noisy inner products modulo 2 of the unknown vector $\xb$ with the examples $\ab_i$, i.e. $b_i=\ip{\ab_i}{\xb} +\eta_i \mod 2$ where $\eta_i$ is a Binomial random variable which is 1 with probability 0.1. Then any algorithm for finding $\xb$ needs at least $2^{\tilde{\Omega}( d)}$ time or examples (where $\tilde{\Omega}$ hides poly-logarithmic factors in $d$). Similarly, if $\xb$ is given to be $s$-sparse for $s\ll d$, then any  algorithm for finding $\xb$ needs at least $d^{{\Omega}( s)}$ time or examples.
\end{conjecture}

Note that the hardness of learning parity with noise is a standard assumption in computational learning theory and forms the basis of many cryptographic protocols \citep{regev2009lattices}. The best known algorithm for solving parity needs $2^{O(d/\log d)}$ time and examples \citep{blum2003noise}. Learning parities is also known to provably require $2^{\Omega(d)}$ samples for the class of algorithm known as \emph{statistical query algorithms}---these are algorithms are only allowed to obtain estimates of statistical properties of the examples but cannot see the examples themselves \citep{kearns1998efficient}. Note that the usual stochastic algorithms for training neural networks such as SGD can be implemented in the statistical query model \citep{song2017complexity}. Similar hardness result are conjectured for the problem of learning sparse parity with noise, and the best known algorithm runs in time $d^{\Omega(s)}$ \citep{valiant2015finding}.

Based on the hardness of parity with noise, we show that exponential dependence on the depth for learning decision trees is necessary.

\begin{theorem}\label{thm:parity}
Conditioned on the hardness of the sparse parity with noise problem, any algorithm for learning decision trees of depth $h$ needs at least $d^{\Omega( h)}$ time or examples. %any model which treats all coordinates symmetrically---and hence is invariant to a permutation applied to the coordinates of all the data points---needs at least $\Omega(d^b)$ time or samples to learn $f$. Hence a vanilla neural network which is invariant to a permutation of the coordinates needs at least $\Omega(d^b)$ time or samples to learn $f$. In contrast, learning a separate model for every set of the first $b$ bits needs $O(2^b)$ samples.
\end{theorem}
\begin{proof}
Note that we can represent a parity with noise problem where the answer is $h$-sparse by a decision tree of depth $h$ where the leaves represent the solutions to the parity problem. The result then follows by the hardness of the sparse parity with noise problem.
\end{proof}

We also show that the doubly exponential dependence on the depth for learning generalized decision programs is necessary.

\begin{theorem}\label{thm:depth_general}
Learning a generalized decision program which is a binary tree of depth $h$ using stochastic gradient descent requires at least $2^{2^{{\Omega}(h)}}$ examples. Conditioned on the hardness of learning noisy parities, any algorithm for learning a generalized program of depth $h$ needs at least $2^{2^{\tilde{\Omega}(h)}}$ time or examples (where $\tilde{\Omega}$ hides poly-logarithmic factors in $h$).
\end{theorem}

\begin{proof}
Note that a generalized decision program of depth $h$ can encode a  parity function over $D=2^h$ bits. Any statistical query algorithm to learn a parity over $D$ bits needs at least $2^{{\Omega}(D)}$  samples. As stochastic gradient descent can be implemented in the statistical query model, hence the bound for stochastic gradient descent follows.

To prove the general lower bound, note that a generalized decision program of depth $h$ can also encode a \emph{noisy} parity function over $D=2^h$ bits. Conditioned on the hardness of parity with noise, any algorithm for learning noisy parities needs at least $2^{\tilde{\Omega}(D)}$ samples. Hence the bound for general algorithms also follows.
\end{proof}

In our framework, we assume that all the underlying functions that we learn are analytic, or have an analytic approximation. It is natural to ask if such an assumption is necessary. Next, we show that learning even simple compositions of functions such as their sum is not possible without some assumptions on the individual functions.

\begin{lemma}
There exists function classes $F_1$ and $F_2$ which can be learnt efficiently but for every $f_1\in F_1$  there exists $ f_2\in F_2$ such that $f_1+f_2$ is hard to learn (conditioned on the hardness of learning parity with noise)
 \end{lemma}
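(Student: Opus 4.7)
The plan is to reduce parity-with-noise (PWN) to learning the class $\{f_1 + f_2 : f_1 \in F_1, f_2 \in F_2\}$ for two carefully chosen classes that are each individually efficiently learnable. The underlying intuition is that one class should play the role of the ``parity signal'' and the other the role of a ``noise carrier,'' and only their sum hides the signal inside apparent noise.

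The first attempt I would try is the most natural: take $F_1 = F_2 = \{\chi_S : S \subseteq [d]\}$, the class of parities in $\pm 1$ notation. Each class is efficiently learnable in the realizable noiseless setting via Gaussian elimination. This attempt fails, however, and it is useful to see why before moving on: given samples $(x,y)$ with $y = \chi_{S_1}(x) + \chi_{S_2}(x)$ on uniform $x$, we have $y^2 - 2 = 2\chi_{S_1 \triangle S_2}(x)$, so Gaussian elimination on the squared labels identifies $T = S_1 \triangle S_2$; and on the roughly half of the samples where $y \neq 0$, the identity $y/2 = \chi_{S_1}(x)$ then identifies $S_1$ (hence $S_2 = S_1 \triangle T$) by another round of Gaussian elimination. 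So sums of two parities are efficiently learnable and do not embed PWN.

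The refined plan keeps $F_1 = \{\chi_S : S \subseteq [d]\}$ but takes $F_2 = \{\eta_r : r \in R\}$ to be a family of ``noise-carrying'' functions $\eta_r : \{-1,+1\}^d \to \{-1,+1\}$, chosen so that each $\eta_r$ is balanced and $\eta_r(x)$ on uniform $x$ behaves statistically like iid Bernoulli noise with constant bias $\tau$. I would make $F_2$ individually learnable by keeping $|R|$ polynomial in $d$, so that polynomial-time enumeration plus a few identifying samples suffice to recover the right $\eta_r$ when given samples of $\eta_r$ directly. The reduction then embeds a PWN instance into the learning task for $F_1 + F_2$: for a random $r$, samples $(x, \chi_S(x) + \eta_r(x))$ are distributionally close to PWN samples $(x, \chi_S(x) \oplus \eta)$, and an efficient learner for the sum class would yield an efficient PWN solver via a Yao-style averaging over the hidden $r$.

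The main obstacle is the tension between learnability of $F_2$ alone and hardness of $F_1 + F_2$. A natural attack on the sum class enumerates $r' \in R$, subtracts $\eta_{r'}$ from the labels, and checks via Gaussian elimination whether the residual is a clean parity; for $r' = r$ this succeeds trivially, while for wrong $r'$ the residual is $\chi_S + (\eta_r - \eta_{r'})$. To block this attack the family $\{\eta_r\}$ must be engineered so that every pairwise difference $\eta_r - \eta_{r'}$ is itself noise-like, making wrong-$r'$ residuals look like PWN-hard noisy parities that cannot be distinguished from the correct residual in polynomial time. A probabilistic argument over random balanced families, together with a union bound over the polynomially many pairs $(r,r')$, produces such an $\{\eta_r\}$; fixing any one such family then yields the desired $F_2$. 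Ruling out every efficient attack on the residuals — not just the naive enumeration — is the crux, and that is precisely where the PWN hardness assumption is invoked.
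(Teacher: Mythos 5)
Your construction has a genuine gap, and in fact the class you build is efficiently learnable, so the reduction cannot go through. The fatal attack is not the one you try to block. Since you insist that $|R|$ be polynomial in $d$ (so that $F_2$ alone is learnable by enumeration), the sum class $\{\chi_S+\eta_r\}$ contains only $2^d\cdot|R|$ \emph{deterministic} functions, and the labels $y=\chi_S(x)+\eta_r(x)$ carry no actual stochastic noise. A learner can enumerate $r'\in R$, form the residuals $y_i-\eta_{r'}(x_i)$, and run Gaussian elimination to look for a parity consistent with them; for $r'=r$ the residuals are exactly $\chi_S(x_i)$, so a consistent hypothesis is always found in polynomial time. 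The learner does not need to identify the true $r$: any hypothesis consistent with $O(d+\log|R|)$ samples from a class of size $2^d\cdot\poly(d)$ generalizes by the standard Occam bound. So engineering the pairwise differences $\eta_r-\eta_{r'}$ to be ``noise-like'' is beside the point---you would need the deterministic function $\eta_r$ to be computationally indistinguishable from fresh i.i.d.\ noise against an adversary that can enumerate and verify candidates, which is impossible for a polynomial-size family (and routes that avoid this, e.g.\ pseudorandom functions, destroy the learnability of $F_2$ itself). Your own diagnosis of why the first attempt ($F_1=F_2=$ parities) fails applies, essentially unchanged, to the refined attempt.

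The paper sidesteps this tension entirely by using vector-valued outputs and making the hardness cancel rather than appear. Both $F_1$ and $F_2$ are built from the \emph{same} noisy-parity instance with secret $\bbet$: functions in $F_1$ output $[\bbet,y]$ and functions in $F_2$ output $[-\bbet,y]$, where $y$ is the noisy parity label. Each class is trivially learnable because the secret $\bbet$ is leaked in the clear as part of the output; but pairing $f_1$ with the $f_2$ carrying the same $\bbet$ makes the sum output $[0,2y]$, erasing the leak and leaving exactly a parity-with-noise instance, which is hard by assumption. If you want to salvage your scalar-output approach you would need the individual classes to be learnable for some reason other than small cardinality, and the paper's ``leak the secret in the output, cancel it in the sum'' trick is the cleanest known way to arrange that.
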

\begin{proof}
Both $f_1$ and $f_2$ are modifications of the parity with noise problem. The input in both cases is $\x\in \{0,1\}^d$. Let $\bbet$ be the solution to the noisy parity problem. The output for the function class $F_1$ is $[\bbet, y]$, where $y$ is the value of the noisy parity for the input. The output for the function class $F_2$ is $[-\bbet, y]$, where $y$ is again the value of the noisy parity for the input. Note that $F_1$ and $F_2$ are trivial to learn, as the  solution $\bbet$ to the noisy parity problem is already a part of the output. For any $f_1\in F_1$, choose $f_2\in F_2$ to be the function with the same vector $\bbet$. Note that conditioned on the hardness of learning parity with noise, $f_1+f_2$ is hard to learn. 
\end{proof}

%$f_0(x)=frac(2x), x\in [0,1]$. $f_i=f_{i-1}\odot f_{i-1}$. Then $f_h(x)=frac(2^{2^h}x)$. In particular, to approximate $f_h$ by a degree $p$ polynomial we need $p\ge 2^{2^h}$. Specifically, if we want to output $step(f_h(<\alpha,x>))$ then this can be mapped to parity with noise as parity is equal to $ 2*frac((\sqrt{dk}/2)<\alpha,x>)$, where each $\alpha_i$ is either $1/\sqrt{k}$ or $0$ and each $x_i$ is either $1/\sqrt{d}$ or $0$. Setting $k$ so that $h=\log \log \sqrt{dk}$, we know that the complexity of learning $step(f_h)$ with noise is at least $d^k \ge d^{2^{2^{h}}/\sqrt{d}}$, for any $\log \log \sqrt{d}\le h\le \log \log d$.

\subsection{Lower bounds for learning any analytic function}

In this section, we show that there is a lower bound on the Rademacher complexity $\bar{\yb}^T \bar{H}^{-1} \yb$ based on the coefficients in the polynomial expansion of the $\tilde{g}$ function. Hence the $\tilde{g}$ function characterizes the complexity of learning $g$.

For any $J=(J_1,\dots, J_n) \in \mathbb{N}^n$, write a monomial $X_J=x_1^{J_1}\dots x_n^{J_n}$. Define $|J|=\sum_k J_k$. For a polynomial $p(x)= \sum_J a_J x_J$, where $a_J\in \mathbb{C}$, its degree $\deg(p) = \max_{a_J\ne 0} |J|$. The following fact shows that monomials form an orthogonal basis over the unit circle in the complex plane.

\begin{fact}\label{fact:complex_orth}
$\ip{X_{J}}{X_{J'}}_{\mathbb{C}^n} = 1$ if $J=J'$ and 0 otherwise (here, $\ip{\cdot}{\cdot}_{\mathbb{C}^n}$ denotes the inner product over the unit circle in the complex plane).
\end{fact}

Note that according to Theorem \ref{thm:univar} the sample complexity for learning $g(x)$ depends on $\tilde{g}'(1) = \sum_j j|a_j|$, and hence is the $\ell_1$ norm of the derivative. The following Lemma shows that this is tight in the sense that $\Omega(\sum_j j a_j^2)$ samples or the $\ell_2$ norm of the derivative are necessary for learning $g(x)$.

For any variable $x$ let $\bar{x}$ denote the complex conjugate of $x$. Let $\xb_1, \xb_2, \dots, \xb_n$ denote the training examples. Let $Q$ denote the kernel polynomial so that $K(\xb_i, \xb_j)= Q(\bar{\xb_i}^T \xb_j)$. Let $Q(t) = \sum_i q_i t^i$. For simplicity, let us look at the case where the power series and the kernel polynomial are univariate polynomials of a bounded degree $\deg(q)$. We will assume that we have enough samples that Fact \ref{fact:complex_orth} hold when averaging over all samples. Let $q_J$ be the coefficient of $T_J$ in the polynomial expansion of $Q(t_1+ \dots + t_n)$.

\begin{lemma}
For a univariate polynomial $y=p(x)$ , $ \bar{\yb}^T {H}^{-1} \yb = \sum_j a_j^2/q_j$ asymptotically in the sample size, where $a_j$ are the coefficients of the polynomial $p$. For a multivariate polynomial, $ \bar{\yb}^T {H}^{-1} \yb = \sum_j a_J^2/q_J$ asymptotically in the sample size. Here, ${H}^{-1}$ denotes the pseudoinverse of ${H}$.
\end{lemma}
\begin{proof}
 
We will begin with the univariate case. Let $\{(x_1,y_1), (x_2,y_2, \dots, (x_n,y_n)\}$ denote the training examples and their labels. Let $\yb$ be the vector of all the labels $\{y_i\}$. Let $d=\max\{\deg(p),\deg(q)\}$ (where we assume that $\deg(q)$ is bounded for simplicity). Now consider the matrix $G$ with $n$ rows and $d$ columns where the $(i,j)$-th entry is $x_i^j$. Note that $\bar{G}^T$ transforms $\yb$ from the standard basis to the monomial basis, i.e. the expected value of $(1/n) \bar{G}^T \yb$ is $(a_1, \dots, a_d)$ (by Fact \ref{fact:complex_orth}). Therefore,  $(1/n) \bar{G}^T \yb=(a_1, \dots, a_d)$ asymptotically in the sample size $n$. We claim that $H= {G} D \bar{G}^T$ where $D$ is the diagonal matrix where $D_{k,k}= q_k$. To verify this, let $G_{(i)}$ denote that $i$-th row of $G$ and observe that the $(i,j)$-th entry ${G}_{(i)} D \bar{G}_{(j)}^T= \sum_k {x}_i^k q_k \bar{x_j}^k=q_k({x_i} \bar{x}_j)^k=K(x_i, x_j)=H_{i,j}$. Now given the orthonormality of the monomial basis, $(1/n)\bar{G}^T{G} =I$. Therefore since $H= {G} D \bar{G}^T$ is the SVD of $H$, ${H}^{-1}= (1/n^2){G} D^{-1} \bar{G}^T$. Hence $ \bar{\yb}^T {H}^{-1} \yb = {((1/n){G^T\bar{\yb}})}^T D^{-1} ((1/n)\bar{G}^T{\yb})= \sum_j (1/q_j) a_j^2 $. 
%\bar{((1/n)\bar{Gy})}^T D^{-1} ((1/n)\bar{Gy})

%as ${H}^{-1} {H} y=G \bar{G}^T y=y$ (as $\bar{G}^T$ will transform $y$ from the standard basis to the monomial basis, and applying $G$ on the result will transform it back to the standard basis)

For the multivariate case, instead of having $d$ columns for $G$, we will have one column for every possible value of $J$ of degree at most $d$. In the diagonal entry $D_{J,J}$ we put $q_J$, where $q_J$ is the coefficient of $T_J$ in the polynomial expansion of $Q(t_1+ \dots + t_n)$.
\end{proof}

\begin{corollary}
\label{cor:lower}
For the ReLU activation $q_j=\Omega(1/j)$, and hence $ \bar{\yb}^T \bar{H}^{-1} \yb \ge \Omega(\sum_j j a_j^2)$ asymptotically in the sample size.
\end{corollary}

Note that in Theorem \ref{thm:univar}, the upper bound for the sample complexity was  $O(\sum_j j |a_j|)$, hence Theorem~\ref{thm:univar} is tight up to the distinction between the $\ell_1$ and $\ell_2$ norm (which can differ by at most $\sqrt{\deg(p)}$). 

\section{Additional Details for Experiments}\label{sec:appendix_app}

\subsection{Setup details}

All the experiments are done in TensorFlow, trained with a GPU accelerator. We use the default TensorFlow values for all hyper parameters involved in the training of the neural networks. All the experiment results averaged over $3$ runs. The number of training epochs for each experiment and average runtime (for one run) are summarized in Table \ref{tab:exp_runtime}. For cluster experiments, number of training examples per cluster varies $1000$ to $100000$, average runtime varies from $2$ minutes to $100$ minutes.  For the decision tree experiments, number of training examples per leaf node varies from $64$ to $512$, avarage runtime varies from $14$ minutes to $42$ minutes. For the SQL-style aggregation experiment, the train dataset contains $16384$ examples, and test dataset contains $4096$ examples, average runtime is $50$ minutes. The source for the Penn World Table dataset \cite{feenstra2015next} used in the SQL query experiment is  \url{https://www.rug.nl/ggdc/productivity/pwt/} and it is also available at \url{https://www.kaggle.com/jboysen/penn-world-table}. 

\begin{table}[h]
    \centering
    \caption{Number of epochs and average runtime}
    \label{tab:exp_runtime}
    % neurips style file is asking that table headings be above table
    \begin{tabular}{@{}lcc@{}}
    \toprule
    Experiment name & Number of epochs & Average runtime\\
    \midrule
    Cluster & $100$ & $2$ - $100$ minutes \\
    \midrule
    Decision Tree & $200$ & $14$ - $42$ minutes  \\
    \midrule
    SQL-style aggregation & $6400$ & $50$ minutes \\
    \bottomrule
    \end{tabular}
    \vspace{2pt}
\end{table}

\subsection{Additional details for learning clusters of linear functions}\label{sec:lin}

We provide a more detailed setup of the experiment reported in Fig. \ref{fig:nn1} where the task codes are given by clusters, and there is a separate linear function for every cluster. In this experiment, the data is drawn from $k$ clusters, and from a mixture of two well-separated Gaussians in each cluster. Data points from the two Gaussians within each cluster are assigned two different labels, for $2k$ labels in total. Fig. \ref{fig:lin1} below shows an instance of this task in two dimensions, the red circles represent the clusters, and there are two classes drawn from well-separated Gaussians from each cluster. In high dimensions, the clusters are very well-separated, and doing a $k$-means clustering to identify the $k$ cluster centers and then learning a simple linear classifier within each cluster gets near perfect classification accuracy.  Fig. \ref{fig:lin2} shows the performance of a single neural network trained on this task (same as Fig. \ref{fig:nn1} in the main body). We can see that a single neural network still gets good performance with a modest increase in the required number of samples.

\begin{figure}
\centering
\begin{subfigure}{0.4\textwidth}
    \centering
    \includegraphics[width=\textwidth]{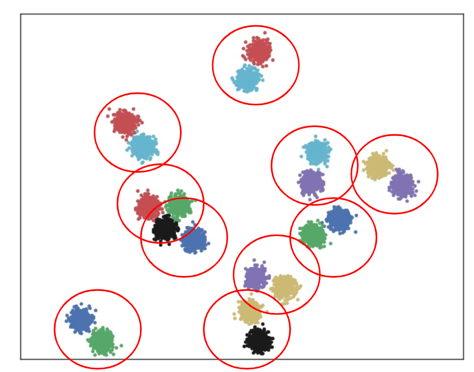}
    \caption{An instance of the problem with multiple clusters, each cluster is indicated by a red circle.}
    \label{fig:lin1}
\end{subfigure}
\begin{subfigure}{0.4\textwidth}
    \centering
    \includegraphics[width=\textwidth]{cluster_nn.pdf}
    \caption{Test accuracy vs. number of points per cluster}
    \label{fig:lin2}
\end{subfigure}
\label{fig:lin}
\caption{Experiment where data is clustered into tasks with a separate linear function for each task. A single neural network does well even when there are multiple clusters.}
\end{figure}

\end{document}

Not all functions need to have inputs in unit ball. Only ones that need to be truncated at low degree. For example a linear function has low degree to begin with and can take large inputs.

References to add: 
Neural network gradient-based learning of black-box function interfaces, https://arxiv.org/abs/1901.03995

Learning Parities with Neural Networks, https://arxiv.org/abs/2002.07400